\definecolor{mydarkblue}{rgb}{0,0.08,0.45}
\definecolor{DarkBlue}{rgb}{0,0,0.55}
\def\rva{{\mathbf{a}}}
\def\va{{\bm{a}}}
\DeclareMathAlphabet{\mathsfit}{\encodingdefault}{\sfdefault}{m}{sl}
\SetMathAlphabet{\mathsfit}{bold}{\encodingdefault}{\sfdefault}{bx}{n}
\def\gA{{\mathcal{A}}}
\def\gC{{\mathcal{C}}}
\def\gD{{\mathcal{D}}}
\def\gE{{\mathcal{E}}}
\def\gF{{\mathcal{F}}}
\def\gL{{\mathcal{L}}}
\def\gM{{\mathcal{M}}}
\def\gN{{\mathcal{N}}}
\def\gO{{\mathcal{O}}}
\def\gP{{\mathcal{P}}}
\def\gS{{\mathcal{S}}}
\def\gT{{\mathcal{T}}}
\def\gX{{\mathcal{X}}}
\def\bgA{{\boldsymbol{\mathcal{A}}}}
\def\sR{{\mathbb{R}}}
\newcommand{\E}{\mathbb{E}}
\DeclareMathOperator*{\argmax}{arg\,max}
\DeclareMathOperator*{\argmin}{arg\,min}
\newenvironment{enumerate*}%
{\begin{enumerate}[leftmargin=*,topsep=0pt]%
		\setlength{\itemsep}{0pt}%
		\setlength{\parskip}{0pt}}%
	{\end{enumerate}}
\theoremstyle{plain}
\newtheorem{theorem}{Theorem}[section]
\newtheorem{proposition}[theorem]{Proposition}
\newtheorem{lemma}[theorem]{Lemma}
\theoremstyle{definition}
\newtheorem{definition}[theorem]{Definition}
\newtheorem{assumption}[theorem]{Assumption}
\theoremstyle{remark}
\newtheorem{remark}[theorem]{Remark}
\def\showcomments{1}
\newcommand{\simon}[1]{[\textcolor{red}{Simon: #1}]}
\newcommand{\yulai}[1]{[\textcolor{blue}{Yulai: #1}]}
\newcommand{\jianshu}[1]{[\textcolor{purple}{Jianshu: #1}]}
\newcommand{\simon}[1]{}
\newcommand{\yulai}[1]{}
\newcommand{\jianshu}[1]{}
\def\bpi{{\boldsymbol{\pi}}}
\def\sa{{s, \mathbf{a}^{1:m-1}}}
\def\thetak{{\theta_k}}
\title{Local Optimization Achieves Global Optimality in Multi-Agent Reinforcement Learning}
\date{}
\author{
	 Yulai Zhao\thanks{Princeton University; \texttt{yulaiz@princeton.edu}}
	 \and 
	 Zhuoran Yang\thanks{
	 Yale University;
   \texttt{zhuoran.yang@yale.edu}}
\and
Zhaoran Wang\thanks{Northwestern University; 
\texttt{zhaoranwang@gmail.com}}
\and
Jason D. Lee\thanks{Princeton University; 
\texttt{jasonlee@princeton.edu}}
}
\begin{document}

\maketitle
 
\begin{abstract}
Policy optimization methods with function approximation are widely used in multi-agent reinforcement learning. However, it remains elusive how to design such algorithms with statistical guarantees. Leveraging a multi-agent performance difference lemma that characterizes the landscape of multi-agent policy optimization, we find that the localized action value function serves as an ideal descent direction for each local policy. Motivated by the observation, we present a multi-agent PPO algorithm in which the local policy of each agent is updated similarly to vanilla PPO. We prove that with standard regularity conditions on the Markov game and problem-dependent quantities, our algorithm converges to the globally optimal policy at a sublinear rate. We extend our algorithm to the off-policy setting and introduce pessimism to policy evaluation, which aligns with experiments.
To our knowledge, this is the first provably convergent multi-agent PPO algorithm in cooperative Markov games.
\end{abstract}

\section{Introduction}
\label{sec:intro}
Recently, multi-agent reinforcement learning (MARL) has demonstrated many empirical successes, e.g., popular strategy games such as Go~\citep{silver2016mastering}, StarCraft \uppercase\expandafter{\romannumeral2}~\citep{vinyals2019grandmaster}, and poker~\citep{brown2018superhuman}. In contrast to vanilla reinforcement learning (RL), which is only concerned with a single agent seeking to maximize the total reward, MARL studies how multiple agents interact with the shared environment and other agents. 

Policy optimization methods are widely used in MARL. These algorithms often parameterize
policies with a function class and compute the gradients
of the cumulative reward
using the policy gradient theorem~\citep{sutton1999policy} or its variants (e.g., NPG~\cite{kakade2001natural} and PPO~\citep{ppo}) to
update the policy parameters. 

Despite the empirical successes, theoretical studies of policy optimization in MARL are very limited. Even for the cooperative setting where the agents share a common goal: maximizing the total reward function, 
 numerous challenges arise~\citep{zhang2021multi}. (1) non-stationarity: each action taken by one agent affects the total reward and
the transition of state. Consequently,  each learning agent must learn to adapt to the changing environment caused by other agents. From the optimization perspective, the geometry of the multi-agent policy optimization problem becomes unclear. Direct application of traditional single-agent analysis becomes vague due to the lack of stationary Markovian property, which states that evolution in the future only depends on the previous state and individual action. (2) scalability: taking other agents into consideration, each individual agent would face the joint action
space, whose dimension increases exponentially with the number of agents. Thus, having numerous agents in the environment problematizes the theoretical analysis of MARL. (3) function approximation: closely related to the scalability issue, the state space and joint action space are often immense in MARL, promoting function approximation to become a necessary component in MARL at the ease of computation and statistical analysis.

In this paper, we aim to answer the following fundamental
question:
\begin{center}
	\emph{Can we design a provably convergent multi-agent policy optimization algorithm in the cooperative setting with function approximation?}
\end{center}

We answer the above question affirmatively. We propose a multi-agent PPO algorithm in which the local policy of each agent is updated \textbf{sequantially} in a similar fashion as vanilla PPO algorithm~\citep{ppo}. In particular, we leverage a multi-agent performance difference lemma (cf. Lemma~\ref{lem:perf}), assuming the joint policy is decomposed into conditional dependent policies. Such a lemma characterizes the landscape of policy optimization, showing the superiority of using localized action value functions as the decent direction for each local policy. Such factorized structure essentially bypasses the non-stationarity and scalability concerns.
To address large state spaces, we parameterize each local policy using log-linear parametrization and propose to update the policy parameters via KL divergence-regularized mirror descent, where the descent direction is estimated separately. Combining these results, we obtain our multi-agent PPO algorithm.
We prove that the multi-agent PPO algorithm converges to globally optimal policy at a sublinear rate. 
Furthermore, we extend multi-agent PPO to the off-policy setting in which policy is evaluated using samples collected according to \emph{data distribution} $\mu$.
We prove similar theoretical guarantees under a coverage assumption of the sampling distribution.

We summarize our contributions below.
\paragraph{Our contributions.}
First, by focusing on the factorized policies, we prove a multi-agent version of the performance difference lemma showing that the action value functions are ideal descent directions for local policies. Such a geometric characterization functions as a remedy for the non-stationarity concern, motivating our multi-agent PPO algorithm.

Second, we adopt the log-linear function approximation for the policies. We prove that multi-agent PPO converges at a sublinear $\gO\left(\frac{N}{1-\gamma} \sqrt{\frac{\log|\gA|}{K}} \right)$ rate up to some statistical errors incurred in evaluating/improving policies, where $K$ is the number of iterations, $N$ is the number of agents and $|\gA|$ is the action space of each individual agent. The sample complexity depends polynomially on $N$, thus breaking the curse of scalability. 

Third, we propose an off-policy variant of the multi-agent PPO algorithm and introduce pessimism into policy evaluation. The algorithm also converges sublinearly to the globally optimal policy up to the statistical error $\widetilde{\gO}(n^{-\frac{1}{3}})$. Here, $n$ is the number of samples used to estimate the critics.\footnote{
$\widetilde{\gO}\left(\cdot\right)$ hides logarithmic factors.
}A key feature of the sample complexity bound is that it only requires single-policy concentrability.

To our knowledge, this is the first provably convergent multi-agent PPO algorithm in cooperative Markov games with function approximation.

\paragraph{Organization.}
This paper is organized as follows.
In Section~\ref{sec:related_work}, we review related literature.
In Section~\ref{sec:marl}, we formally describe the problem setup and introduce the necessary definitions.
In Section~\ref{sec:main_thm}, we state the main multi-agent PPO algorithm in detail. We further extend our results to the off-policy setting in Section~\ref{sec:pess}. 
We conclude in Section~\ref{sec:con} and defer the proofs to the Appendix.

\section{Related Work}
\label{sec:related_work}
\paragraph{Policy optimization}
Many empirical works have proven the validity and efficiency of policy optimization methods in games and other applications~\citep{silver2016mastering, silver2017mastering, guo2016deep, tian2019elf}.
These works usually update the policy parameter in its parametric space using the pioneering policy gradient (PG) theorem by~\citet{sutton1999policy}, or many PG variants invented to improve the empirical performances of vanilla PG methods.
 In particular, ~\citet{kakade2001natural} introduced the natural policy gradient (NPG) algorithm which searched for the steepest descent direction within the parameter space based on the idea of
KL divergence-regularization. 
%
Trust region learning-based algorithms are often regarded as advanced policy optimization methods in practice~\citep{lillicrap2015continuous, duan2016benchmarking}, showing superior performances with stable updates. Specifically, TRPO~\citep{trpo} and PPO~\citep{ppo} could be seen as KL divergence-constrained variants of NPG. A benign feature of these algorithms is the monotonic improvement guarantees of the expected return. 

Despite prosperous empirical findings, the lack of convexity often impedes the development of theories for policy optimization methods. Denote $K$ and $T$ as the number of iterations and samples. ~\citet{agarwal2020optimality} showed an  iteration complexity of $\gO(K^{-\frac{1}{2}})$ and a sample complexity of $\gO(T^{-\frac{1}{4}})$ for online NPG with function approximation.  ~\citet{shani2020adaptive} considered a sample-based TRPO and proved a $\tilde{\gO}(T^{-\frac{1}{2}})$ rate converging to the global optimum, which could be improved to $\tilde{\gO}(\nicefrac{1}{T})$ when regularized.
 Making minor modifications to the vanilla PPO algorithm, ~\citet{liu2019neural} presented a convergence rate of $\gO(K^{-\frac{1}{2}})$ to global optima when parameterizing both policy and $Q$ functions with neural networks. The key to their analysis is the desirable one-point monotonicity in infinite-dimensional mirror descent that assists in characterizing the policy updates without convexity. We also make use of similar one-point properties in our multi-agent PPO algorithm analysis. 

\paragraph{MARL} 
Markov Game (MG) is a commonly used model to characterize the multi-agent decision-making process~\citep{shapley1953stochastic,littman1994markov}, which can be regarded as a multi-agent extension to the Markov Decision Process (MDP).
Policy-based algorithms could generalize to large states through function approximation. There has been growing interest in developing provably efficient algorithms for Markov games~\citep{daskalakis2020independent,cen2021fast,zhao2022provably, ding2022independent,cen2022faster}. These works often studied competitive RL settings, e.g., zero-sum games. Their convergence
rates usually depended on various notions of concentrability coefficient and may not scale tightly under the worst scenario. 

\paragraph{Policy optimization for MARL} Applying policy optimization methods in the MARL setting is more complicated than in the single-agent setting because of the non-stationary environment faced by each agent~\citep{zhang2021multi}. A learning paradigm called centralized
training with decentralized execution (CTDE) is often used in practice~\citep{kraemer2016multi, lowe2017multi, foerster2018counterfactual,yang2018mean, wen2019probabilistic, zhang2020bi}. In CTDE, a joint centralized value function helps to address the non-stationarity issue caused by other agents. Each agent has access to the global state and actions of other agents during training, thus allowing them to adjust their policy parameters individually. For instance, ~\citet{lowe2017multi} proposed a multi-agent policy gradient algorithm in which agents learned a centralized critic based on the observations and actions of all agents.

Trust region learning~\citep{trpo} has recently been combined with the CTDE paradigm to ensure monotonic improvements. In particular, IPPO~\citep{de2020independent} and MAPPO~\citep{yu2021surprising} showed strong performances of PPO-based methods in the cooperative setting. The practical efficacy of these methods is usually restricted by the \emph{homogeneity} assumption, where the agents share a common action space and policy parameter.
Theoretically, providing statistical guarantees for policy optimization algorithms in MARL is more complicated than single-agent scenario~\citep{zhang2021multi}. In Markov games, the non-stationary environment faced by each agent precludes direct application of the single-agent convergence analysis.
A recent attempt by~\citet{kuba2022trust} proposed the first set of trust region learning algorithms in MARL that enjoyed monotonic improvement guarantees assuming neither homogeneity of agents nor value function decomposition rule. The critical observation leading to their results is the multi-agent advantage function decomposition rule that builds the sequential policy update structure. 
However, they did not show rates of convergence. In this work, we design a new, provably convergent PPO algorithm for fully cooperative Markov games that converges to globally optimal at policy at sublinear rates by taking advantage of this conditional dependency structure.

\paragraph{Pessimism-based RL methods} Though being able to account for large state/action spaces, function approximation also has its own drawbacks. A significant issue arising in using function approximators is the usual occurrence of a positive bias in value function~\cite{thrun1993issues}. The learner may not receive an accurate assessment. Numerous empirical works leverage the principle of \emph{pessimism} to correct such overestimation~\citep{fujimoto2018addressing,laskin2020reinforcement,lee2020predictive,moskovitz2021tactical}. For example, to reduce the evaluation bias brought by function approximation, ~\citet{fujimoto2018addressing} constructed  the Bellman target by choosing the minimum of two value estimates as an intuitive estimate lower bound. Their approach took a pessimistic view of the value function.

On the theoretical side, a growing body of literature in offline reinforcement learning has also focused pessimism to account for datasets lacking data coverage~\citep{liu2020provably,jin2021pessimism, uehara2021pessimistic,rashidinejad2021bridging,zhan2022offline}. Technically, these works aimed at maximizing the worst-case rewards that a trained agent could obtain. Instead of relying on coverage assumptions on dataset~\citep{munos2003,munos2008finite,chen2019information}, these methods provided dataset-dependent performance bounds, thus providing robust results for datasets lacking exploration for which traditional methods do not apply. We focus on the off-policy setting in Section~\ref{sec:pess} where we leverage the Bellman-consistent pessimism~\citep{xie2021bellman}. We show concrete bounds under linear function approximation by assuming a sampling oracle that provides rewards and transition estimates that are used in approximating action value functions.

\section{Preliminaries}
\label{sec:marl}
In this section, we introduce necessary notations, problem setup, and some useful quantities that will be frequently used in this work.

\subsection{Setup and Notations}
\paragraph{Setup} We consider a \emph{fully-cooperative} Markov game \citep{shapley1953stochastic,littman1994markov}, which is defined by a tuple $\left( \gN, \gS, \bgA, \gP, r, \gamma \right)$. Here, $\gN= \{1, \dots, N\}$ denotes the set of agents, $\gS$ is the finite state space, $\bgA = \gA^N$ is the product of finite action spaces of all agents(i.e., joint action space), $\gP:\gS \times \bgA \times \gS \to [0, 1]$ decides the transition scheme, a reward function $r:\gS \times \bgA \to [0, 1]$, and $\gamma\in[0, 1)$ is the discount factor.\footnote{
For clarity, we assume $N$ agents share the same set of actions. It is straightforward to generalize our results to the setting where action sets are different. See Section~\ref{sec:main_thm}.
}
The agents interact with the environment according to the following protocol: at time step $t$, the agents are at state $s_t \in \gS$; every agent $i$ takes action $a^i_{t} \in \gA$, drawn from its policy $\pi^{i}(\cdot| s_t)$, which together with actions of other agents gives a joint action $\rva_t = (a^{1}_{t}, \dots, a^{N}_t) \in \bgA$, drawn from the joint policy $\bpi(\cdot|s_t)= \prod_{i=1}^{N}\pi^{i}(\cdot|s_t)$; the agents receive a joint reward $r_{t}=r(s_{t}, \rva_{t})\in \sR$, and move to $s_{t+1} \sim \gP(\cdot |s_t, \rva_{t})$. Given the joint policy $\bpi$, the transition probability function $\gP$, 
and the initial state distribution $\rho$, 
we define the discounted
occupancy state-action distribution as
$$
d_\bpi(s,\rva) = (1-\gamma)\E \sum_{t=0}^\infty \text{Pr}^\bpi(s_t = s, a_t = \rva|s_0 \sim \rho).
$$
The standard value function and action value function are defined as
\begin{gather*}
    V_{\bpi}(s) \triangleq        \mathop{\E}\limits_{\rva_{0:\infty}\sim\bpi, s_{1:\infty}\sim \gP}\left[ \sum_{t=0}^{\infty}\gamma^{t} r_t
\Big| \ s_{0} = s \right], \\ Q_{\bpi}(s, \va)  \triangleq \mathop{\E}\limits_{s_{1:\infty}\sim \gP, \rva_{1:\infty}\sim\bpi}\left[ \sum_{t=0}^{\infty}\gamma^{t} r_t 
    \Big| \ s_{0} = s,  \ \rva_{0} = \va \right].
\end{gather*}


 The standard advantage function considering all agents is written as 
$A_{\bpi}(s, \va) \triangleq Q_{\bpi}(s, \va) - V_{\bpi}(s)$. Later, we shall introduce the agents-specific advantage functions.

Let $\nu_\bpi(s)$ and $\sigma_\bpi(s, \rva) = \bpi(\rva | s) \cdot \nu_\bpi(s)$ denote the stationary state distribution and the
stationary state-action distribution associated with a joint policy $\bpi$, respectively. Define the underlying optimal policy as $\bpi_*$. We use $\nu_*$ and $ \sigma_*$ in this paper to indicate $\nu_{\bpi_*}$ and $\sigma_{\bpi_*}$ for simplicity.

Throughout this paper, we pay close attention to the contribution of different subsets of agents to the performance of the whole team. 
We introduce the following multi-agent notations before proceeding to multi-agent definitions. 

\paragraph{ Notations} In this work, we index the $N$ agents with integers from $1$ to $N$ and use set $\gN = \{i |i = 1, \cdots, N\}$ to represent all agents. We use $m \in \gN$ to indicate the specific $m$-th agent. In particular, the set notation on the superscript of a term represents the quantities associated with agents in that set. For example, $\rva^{\{1,2,3\}}$ represents the joint action of agents $1,2$ and $3$. We may write index $k$ on superscript when we refer to the specific $k$-th agent. When bold symbols are used without any superscript (e.g., $\rva$), they consider all agents. For simplicity, let $(m:m^\prime)$ be shorthand for set: $\{i|m\le i \le m^\prime, i \in \gN \}$. An example is $\bpi^{1:m}(\cdot|s)$ which represents the joint policy considering agents $1,2\cdots, m$.

We now introduce the multi-agent action value functions and advantage functions that characterize contributions from specific sub-agents.
\begin{definition}
\label{def::1}
Let $P$ be a subset in $\gN$. The multi-agent action value function associated with agents in $P$ is 
\begin{align*}
    Q_{\bpi}^{P} \left(s, \rva^{P}\right) \triangleq \E_{\Tilde{\rva}\sim \Tilde{\bpi}}\left[ Q_{\bpi}\left(s, \rva^{P}, \Tilde{\rva}\right) \right],
\end{align*}
here we use a tilde over symbols to refer to the complement agents, namely $\tilde{\rva} = \{a^i|i \not\in P, i \in \gN \}$.

Let $P, P^\prime \subseteq \gN$ be two disjoint subsets of agents. The multi-agent advantage function is defined below. Essentially, it accounts for the improvements of setting agents $\rva^{P^\prime}$ upon setting agents $\rva^{P}$, while all other agents follow $\bpi$. 
    \begin{equation*}
        A_{\bpi}^{P^\prime}\left(s, \rva^{P}, \rva^{P^\prime} \right) \triangleq  Q_{\bpi}^{P \cup P^\prime}\left( s, \rva^{P}, \rva^{P^\prime}\right) 
        - Q_{\bpi}^{P}\left( s, \rva^{P}\right).
    \end{equation*}
\end{definition} 


The multi-agent Bellman operators are defined by generalizing the classic versions.

\begin{definition} For $m \in \gN$ and any function $f: \gS \times \gA^m \xrightarrow{} \sR$ we define \emph{multi-agent Bellman operator} $\gT^{1:m}_\bpi: \sR^{\gS \times \gA^m} \mapsto \sR^{\gS \times \gA^m}$ as 
\begin{align*}
     \gT^{1:m}_\bpi & f(s, \rva^{1:m})
    \coloneqq \mathop{\E}\limits_{\Tilde{\rva} \sim \Tilde{\bpi}} r(s, \rva^{1:m}, \tilde{\rva}) + \gamma \mathop{\E}\limits_{ \substack{\tilde{\rva} \sim \tilde{\bpi} \\
s^\prime \sim \gP(\cdot|s, \rva^{1:m}, \tilde{\rva}) } } f(s^\prime, \bpi^{1:m})
\end{align*}
where $f(s^\prime, \bpi^{1:m})$ is shorthand for $\E_{\rva^\prime \sim \bpi^{1:m}(\cdot|s^\prime)} f(s^\prime, \rva^\prime)$.
\end{definition}
It is straightforward to see that  $Q_{\bpi}^{1:m}$ is the unique fixed point for $\gT_\bpi^{1:m}$, which corresponds to the classic single-agent Bellman operator.

\subsection{KL divergence-regularized mirror descent} 

We review the mirror-decent formulation in provable single-agent PPO algorithm~\citep{liu2019neural}. At the $k$-th iteration, the policy parameter $\theta$ is updated via
\begin{align} \label{ppo}
     \theta_{k+1} &\xleftarrow{} \argmax_\theta  \hat{\E} \big[ \left \langle A_k(s, \cdot), \pi_\theta(\cdot|s) \right \rangle - \beta_k KL\left(\pi_\theta(\cdot\|s)\|\pi_{\theta_k}(\cdot|s)\right) \big]. 
\end{align}
Hereafter we shall use $\langle \cdot, \cdot \rangle$ to represent the inner product over $\gA$. The expectation is taken over $\hat{\E}$, which is an empirical estimate of stationary state-action distribution $\nu_{\pi_{\theta_k}}$, and $A_k$ is  estimate of advantage function $A^{\pi_{\theta_k}}$.

Adopting the KL-divergence, ~\eqref{ppo} is closely related to the NPG~\citep{kakade2001natural} update. As a variant, this formulation is slightly different from the vanilla PPO~\citep{ppo}: here $KL(\pi_\theta(\cdot\|s)\|\pi_{\theta_k}(\cdot|s))$ is used instead of $KL(\pi_{\theta_k}(\cdot\|s)\|\pi_{\theta}(\cdot|s))$. Such variation is essential for presenting provable guarantees, which will be shown in the next section.

\section{Multi-Agent PPO}
\label{sec:main_thm}
Recall that $\nu_*$ is the stationary state distribution for $\bpi_*$. In this section, we desire to maximize the expected value function under distribution $\nu_*$: $J(\bpi) \triangleq \E_{s \sim \nu_*}  V_{\bpi}(s)$.

This paper aims to present a trust region learning multi-agent algorithm that enjoys a rigorous convergence theory. 
As we have mentioned, policy optimization for cooperative MARL is challenging because the policy optimization problem becomes a joint optimization involving all the agents. It remains unclear: (a) what the landscape of the total rewards as a multivariate function of the joint policy is and (b) what would be proper policy descent directions for each agent. We come up with a solution to characterize the landscape by taking advantage of a serial decomposition of the performance difference lemma in MARL described below.

\begin{lemma} 
\label{lem:perf}
For any joint policy $\bpi$ we have
\begin{align*}
J(\bpi_*) - J(\bpi) 
= \frac{1}{1-\gamma} \sum_{m=1}^N \mathop{\E}\limits_{\substack{s \sim \nu_* \\ \rva^{1:m-1} \sim \bpi_{*}^{1:m-1}}}\left\langle Q_{\bpi}^{1:m} (\sa, \cdot), \pi_*^m(\cdot|\sa) - \pi^m(\cdot|s, \rva^{1:m-1}) \right\rangle
\end{align*}
where the inner product is over $a^m \in \gA$.
\end{lemma}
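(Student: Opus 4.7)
The plan is to reduce the multi-agent identity to a telescoping application of the classical single-agent performance difference lemma, by exploiting the conditional factorization $\bpi = \prod_{m=1}^N \pi^m(\cdot\,|\,s,\rva^{1:m-1})$ together with a decomposition of the joint advantage into per-agent advantages.

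First, I would invoke the Kakade--Langford identity
\begin{equation*}
J(\bpi_*) - J(\bpi) \;=\; \frac{1}{1-\gamma}\,\mathop{\E}\limits_{(s,\rva)\sim d_{\bpi_*}}\!\bigl[A_{\bpi}(s,\rva)\bigr],
\end{equation*}
which holds in the standard single-agent form on the product action space $\bgA$ (with the state marginal playing the role of $\nu_*$ in the statement). Next, using Definition~\ref{def::1} and noting that $Q^{\emptyset}_{\bpi}(s)=V_{\bpi}(s)$ and $Q^{1:N}_{\bpi}(s,\rva)=Q_{\bpi}(s,\rva)$, I would telescope
\begin{equation*}
A_{\bpi}(s,\rva) \;=\; \sum_{m=1}^N \bigl[Q^{1:m}_{\bpi}(s,\rva^{1:m}) - Q^{1:m-1}_{\bpi}(s,\rva^{1:m-1})\bigr] \;=\; \sum_{m=1}^N A^{m}_{\bpi}(s,\rva^{1:m-1},a^m),
\end{equation*}
where I abbreviate the $m$-th summand by $A^{m}_{\bpi}$.

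The second step is to re-factorize the expectation. Since $\bpi_* = \prod_{m=1}^N \pi_*^m(\cdot\,|\,s,\rva^{1:m-1})$, the expectation $\E_{d_{\bpi_*}}[A^{m}_{\bpi}]$ depends on $\rva$ only through $\rva^{1:m}$, so marginalizing out the later agents' actions yields
\begin{equation*}
\E_{d_{\bpi_*}}\!\bigl[A^{m}_{\bpi}(s,\rva^{1:m-1},a^m)\bigr] \;=\; \mathop{\E}\limits_{\substack{s\sim\nu_*\\ \rva^{1:m-1}\sim \bpi_*^{1:m-1}}} \mathop{\E}\limits_{a^m\sim \pi_*^m(\cdot\,|\,s,\rva^{1:m-1})}\!\bigl[A^{m}_{\bpi}(s,\rva^{1:m-1},a^m)\bigr].
\end{equation*}
Finally, a direct consequence of Definition~\ref{def::1} is the one-step identity $\E_{a^m\sim \pi^m(\cdot\,|\,s,\rva^{1:m-1})}[Q^{1:m}_{\bpi}(s,\rva^{1:m-1},a^m)] = Q^{1:m-1}_{\bpi}(s,\rva^{1:m-1})$, which lets me rewrite the inner expectation as the desired inner product
\begin{equation*}
\bigl\langle Q^{1:m}_{\bpi}(s,\rva^{1:m-1},\cdot),\ \pi_*^m(\cdot\,|\,s,\rva^{1:m-1}) - \pi^m(\cdot\,|\,s,\rva^{1:m-1})\bigr\rangle.
\end{equation*}
Summing over $m=1,\dots,N$ and inserting the $1/(1-\gamma)$ prefactor produces the claimed formula.

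The only genuinely delicate step is the bookkeeping in the re-factorization: one must verify that pulling the conditional factors $\pi_*^{m+1},\ldots,\pi_*^N$ out of the expectation does not interact with the $Q^{1:m}_{\bpi}$ term, which holds precisely because $Q^{1:m}_{\bpi}$ has already integrated the complement agents against $\bpi$ (not against $\bpi_*$), so no hybrid $Q$-function ever needs to be introduced. Once this is checked, the rest is algebraic telescoping and the classical single-agent identity.
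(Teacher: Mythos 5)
Your proposal is correct and follows essentially the same route as the paper: the classical Kakade--Langford identity, followed by the telescoping decomposition of the joint advantage into per-agent advantages (which the paper cites as Lemma~\ref{lem:decom} from Kuba et al.\ rather than re-deriving), and the observation that the $Q^{1:m-1}_{\bpi}$ term pairs to zero against the difference of the two conditional policies. The only cosmetic difference is that you convert $Q^{1:m-1}_{\bpi}$ into $\langle Q^{1:m}_{\bpi},\pi^m\rangle$ directly, whereas the paper inserts the zero term $\langle A^m_{\bpi},\pi^m\rangle=0$ and then drops the $a^m$-independent part; these are the same manipulation.
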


With this geometric characterization, we can justify that using $Q_{\bpi}^{m}(\sa, a^m)$ as the descent direction and running KL divergence-regularized mirror descent for \textbf{each} agent $m \in \gN$ can lead to a better \textbf{total} reward, which enables a serial optimization procedure. 
Below we describe the algorithm in detail.

To represent conditional policies, we adopt log-linear parametrization.

\paragraph{Parametrization} For the $m$-th agent ($ m \in \gN$), its conditional policy depends on all prior ordered agents $\rva^{1:m-1}$. Given a coefficient vector $\theta^m \in \Theta$, where $\Theta = \{\|\theta\| \le R | \theta \in \sR^d  \}$ is a convex, norm-constrained set. The probability of choosing action $a^m$ under state $s$ is
\begin{equation} \label{eq:loglinear}
    \pi_{\theta^m} (a^m|\sa)= \frac{\exp{( \phi^\top(\sa, a^m) \theta^m)}}{\mathop{\sum}\limits_{a^m \in \gA} \exp{( \phi^\top(\sa, a^m) \theta^m)}}
\end{equation}
where $\phi$ is a set of feature vector representations. Without loss of generality, we impose a regularity condition such that every $\|\phi\|_2 \le 1$. This parametrization has been widely used in RL literature~\citep{branavan2009reinforcement, gimpel2010softmax, heess2013actor,agarwal2020optimality,zhao2022provably}.\footnote{We assume that all players share the same parameter set only for clarity. We only need minor modifications in the analysis to extend our results to the setting where $N$ agents have different capabilities. Specifically, we only need to treat norm bounds of updates ($R$), regularity conditions on features, and $\beta$ separately for each agent.}

\subsection{Policy Improvement and Evaluation}
At the $k$-th iteration, we have the current policy $\bpi_\thetak$, and we need to: (1) perform \textbf{policy evaluation} to obtain the action value function estimates  $\hat{Q}_{\bpi_\thetak}$ for determining the quality of $\bpi_\thetak$. (2) perform \textbf{policy improvement} to update policy to $\bpi_{\theta_{k+1}}$.

For notational simplicity, we use $\nu_k$ and $\sigma_k$ to represent stationary state
distribution $\nu_{\bpi_\theta^k}$
and the stationary state-action distribution $\sigma_{\bpi_\theta^k}$, which are induced by $\bpi_\thetak$. 




\paragraph{Policy Improvement} At the $k$-th iteration, we define $\hat{\pi}_{k+1}^m$ as the ideal update based on $\hat{Q}_{\bpi_\thetak}^{1:m}$ (for agent $m \in \gN$), which is an estimator of $Q_{\bpi_\thetak}^{1:m}$. The ideal update is obtained via the following update
\begin{gather}
\label{eq:ideal_upd}
    \hat{\pi}_{k+1}^m \xleftarrow{} \argmax_{\pi^m} \hat{F}(\pi^m) \\
     \hat{F}(\pi^m) =  \mathop{\E}\limits_{\sigma_k}  \Big[ \langle \hat{Q}_{\bpi_\thetak}^{1:m}(s, \rva^{1:m-1}, \cdot), \pi^m (\cdot|s, \rva^{1:m-1})\rangle  -\beta_k KL\left(\pi^m(\cdot|s, \rva^{1:m-1})\| \pi_{\theta_k^m}(\cdot|s, \rva^{1:m-1})\right) \Big] \notag
\end{gather}
where $\theta_k^m$ is the parameter of the current conditional policy of the $m$-th agent. In above equation, the distribution is taken over $(\sa) \sim \nu_k  \bpi_{\thetak}^{1:m-1}$, we write $\sigma_k$ for simplicity. Under log-linear parametrization: $\pi_{\theta_k^m} \propto \exp \{ \phi^\top \theta_k^m\}$, we have the following closed-form ideal policy update.

\begin{proposition}
\label{prop:update}
Given an estimator $\hat{Q}_{\bpi_\thetak}^{1:m}$, 
the KL divergence-regularized update~\eqref{eq:ideal_upd} has the following explicit solution
\begin{align*}
     \hat{\pi}^m_{k+1} (\cdot| s, \rva^{1:m-1}) \propto \exp \left\{ \beta_k^{-1} \hat{Q}_{\bpi_\thetak}^{1:m}(s, \rva^{1:m-1}, \cdot)\
    + \phi^\top(s, \rva^{1:m-1}, \cdot) \theta_k^m  \right\}.
\end{align*}
\end{proposition}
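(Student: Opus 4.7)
The plan is to exploit the fact that $\hat F$ decomposes into a $\sigma_k$-weighted average of terms each depending on a single conditional distribution $\pi^m(\cdot\mid s, \mathbf{a}^{1:m-1})$. Because the weights are nonnegative and the constraint ``$\pi^m$ is a conditional distribution'' is itself pointwise (each conditional lies in $\Delta(\gA)$), maximizing $\hat F$ reduces to independently solving, for each $(s, \mathbf{a}^{1:m-1})$ in the support of $\sigma_k$, the finite-dimensional problem
\[
\max_{p \in \Delta(\gA)} \bigl\langle \hat Q_{\bpi_{\theta_k}}^{1:m}(s,\mathbf{a}^{1:m-1},\cdot),\, p(\cdot) \bigr\rangle - \beta_k \sum_{a^m \in \gA} p(a^m)\log \frac{p(a^m)}{\pi_{\theta_k^m}(a^m\mid s, \mathbf{a}^{1:m-1})}.
\]

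Next I would attach a Lagrange multiplier $\lambda$ to the simplex constraint $\sum_{a^m} p(a^m) = 1$; the KL penalty forces positivity, so the nonnegativity constraints are inactive and one only needs the stationarity condition. Differentiating the Lagrangian with respect to $p(a^m)$ and setting it to zero gives
\[
\hat Q_{\bpi_{\theta_k}}^{1:m}(s,\mathbf{a}^{1:m-1},a^m) - \beta_k \log \frac{p(a^m)}{\pi_{\theta_k^m}(a^m\mid s, \mathbf{a}^{1:m-1})} - \beta_k - \lambda = 0,
\]
which rearranges to
\[
p(a^m) \,\propto\, \pi_{\theta_k^m}(a^m\mid s,\mathbf{a}^{1:m-1}) \exp\!\bigl\{ \beta_k^{-1} \hat Q_{\bpi_{\theta_k}}^{1:m}(s,\mathbf{a}^{1:m-1},a^m) \bigr\}.
\]
Strict concavity of the objective in $p$ (the negative relative entropy is strictly concave on the simplex) certifies that this unique stationary point is the maximizer, so the Lagrangian argument is sharp rather than merely necessary.

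Finally I would substitute the log-linear form from \eqref{eq:loglinear}, namely $\pi_{\theta_k^m}(a^m\mid s,\mathbf{a}^{1:m-1}) \propto \exp\{\phi^\top(s,\mathbf{a}^{1:m-1},a^m)\theta_k^m\}$; its normalizing constant does not depend on $a^m$ and is therefore absorbed into the overall proportionality. Combining the two exponentials gives the claimed closed form. No step poses a real obstacle since this is the textbook entropy-regularized linear maximization, but the one subtlety worth flagging is that this derivation only pins down $\hat\pi_{k+1}^m(\cdot\mid s,\mathbf{a}^{1:m-1})$ on the support of $\sigma_k$; off the support the objective is insensitive to $\pi^m$, so one extends by the same exponential formula by convention, which is the natural choice compatible with the log-linear parametrization used throughout.
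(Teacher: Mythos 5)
Your proposal is correct and follows essentially the same route as the paper's proof: reduce to a pointwise entropy-regularized maximization over each $(s,\rva^{1:m-1})$, attach the normalization constraint as a Lagrange multiplier, and read off the exponential form from the stationarity condition before substituting the log-linear parametrization of $\pi_{\theta_k^m}$. Your added remarks on strict concavity and on the behavior off the support of $\sigma_k$ are sound refinements of the same argument, not a different approach.
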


The proof is straightforward by adding the constraint: $\sum_{a^m \in \gA} \pi^m(\cdot) = 1 $ as a Lagrangian multiplier to $\hat{F}(\pi^m)$. See details in Appendix~\ref{sec:pf-ma}.

  To approximate the ideal $\hat{\pi}^m_{k+1}$ using a parameterized $\pi_{\theta_{k+1}^m} \propto \exp \{ \phi^\top \theta_{k+1}^m\}$, we minimize the following mean-squared error (MSE) as a sub-problem
\begin{align} \label{eq:MSE}
    & \theta_{k+1}^m \xleftarrow{}
     \argmin_{\theta^m \in \Theta} L(\theta^m)
\end{align}
where $L(\theta^m)$ is defined as
$$
L(\theta^m) = \mathop{\E}\limits_{\sigma_k} \Big( (\theta^m-\theta_k^m)^\top \phi(s, \rva^{1:m-1}, a^m) -   \frac{\hat{Q}_{\bpi_\thetak}^{1:m}(s, \rva^{1:m-1}, a^m)}{\beta_k}  \Big)^2
$$
Intuitively, a small $L(\theta)$ indicates that $\pi_{\theta^m}$ is close to the ideal update $\hat{\pi}_{k+1}^m$. Moreover, if $\hat{\pi}_{k+1}^m$ exactly lies in the log-linear function class, i.e., there exists a $\vartheta \in \Theta$ such that $\hat{\pi}_{k+1}^m \propto \exp{} \{ \phi^\top \vartheta\}$. Then we have $L(\vartheta) = 0$.

To solve the MSE minimization problem~\eqref{eq:MSE}, we use the classic SGD updates. Let stepsize be $\eta$, at each step $t = 0,1, \cdots, T-1$, parameter $\theta$ is updated via
\begin{align*}
     \theta(t+\frac{1}{2}) &\xleftarrow{} \theta(t)- 2\eta \phi \left( (\theta(t) - \theta_k^m)^\top \phi   -\beta_k^{-1} \hat{Q}_{\bpi_\thetak}^{1:m}) \right)\\
    \theta(t+1) &\xleftarrow{} \Pi_{\Theta} \theta(t + \frac{1}{2})
\end{align*}
where we omit $(\sa, a^m)$ for simplicity, which is sampled from $\sigma_k$.  See Algorithm~\ref{alg:3} for the detailed solver.

\paragraph{Policy Evaluation} In this step, we aim to examine the quality of the attained policy. Thereby, a $Q$-function estimator is required. We make the following assumption.

\begin{assumption}
\label{assump:estimator}
Assume we can access an estimator of $Q$ function that returns $\hat{Q}$. The returned $\hat{Q}$ satisfies the following condition for all $m \in \gN$ at the $k$-th iteration
\begin{gather*}
\left[ \mathop{\E}\limits_{\sigma_k}   \left( \hat{Q}_{\bpi_\thetak}^{1:m}(\sa, a^m) - Q_{\bpi_\thetak}^{1:m}(\sa, a^m) \right)^2 \right]^{1/2}  \le \xi_k^m.
\end{gather*}
We also have a regularity condition for the estimator: there exists a positive constant $B$, such that 
for any $m \in \gN$ and $(\sa, a^m) \in \gS \times \gA^{m-1} \times \gA$,
$$
\left| \hat{Q}_{\bpi_\thetak}^{1:m}(s, \rva^{1:m-1}, a^m) \right| \le B.
$$
\end{assumption}

In RL practice, such an estimator is often instantiated with deep neural networks (DNNs)~\citep{mnih2015human}. While there has been recent interest in studying the theoretical guarantees for DNNs as function approximators~\citep{fan2020theoretical}, we assume we have access to such an estimator to ensure the generality of our algorithm.
We note that policy estimators like episodic sampling oracle that rolls out trajectories~\citep{agarwal2020optimality} or neural networks~\citep{mnih2015human,liu2019neural} could all be possible options here. As a generalization, we introduce a specific value function approximation setting in Section~\ref{sec:pess}, in which we assume all $Q$-functions lie in linear class $\gF$. We further adopt the principle of pessimism for better exploration.

\paragraph{Algorithm} Equipped with the sub-problem solver for policy improvement and the $Q$-function estimator, we are prepared to present the provable multi-agent PPO algorithm. The pseudo-code is listed in Algorithm~\ref{alg:1}. The algorithm runs for $K$ iterations. At the $k$-th iteration, we estimate $Q$-function for each agent $m\in \gN$ via the estimator (cf. Assumption~\ref{assump:estimator}) to measure the quality of $\bpi_\thetak$. The estimates would also serve as the ideal descent direction for policy improvement. Since we use a constrained parametric policy class, the ideal update is approximated with the best policy parameter $\theta \in \Theta$ by minimizing the MSE problem~\eqref{eq:MSE}, which runs SGD for $T$ iterations (cf. Algorithm~\ref{alg:3}). 
Thanks to the geometric characterization (cf. Lemma~\ref{lem:perf}), we are guaranteed to reach a globally improved total reward by updating each agent consecutively.

\begin{algorithm}
\caption{Multi-Agent PPO}
\label{alg:1}
\begin{algorithmic}[1]
\REQUIRE Markov game $( \gN, \gS, \bgA, \gP, r, \gamma )$, penalty parameter $\beta$, stepsize $\eta$ for sub-problem, number of SGD iterations $T$, number of iterations $K$.
\ENSURE Uniformly sample $k$ from $0,1, \cdots K-1$, return $\Bar{\bpi} = \bpi_{\theta_k}$.
\STATE Initialize $\theta_0^m=0$ for every $ m \in \gN$.
\FOR{$k=0, 1, \dots, K-1$}
    \STATE Set parameter $\beta_k \xleftarrow{} \beta\sqrt{K}$
    \FOR{$m=1, \cdots, N$}
        \STATE Sample $\{s_t, \rva^{1:m-1}_t, a_t^m\}_{t=0}^{T-1}$ from $ \sigma_k = \nu_k \bpi_\thetak$.
        \STATE Obtain $\hat{Q}_{\bpi_\thetak}^{1:m}(\sa, a^m)$ for each sample
        .
        \STATE Feed samples into Algorithm~\ref{alg:3}, obtain $\theta_{k+1}^m$.
    \ENDFOR
\ENDFOR
\end{algorithmic}
\end{algorithm}

\subsection{Theoretical Analysis} 

Our analysis relies on problem-dependent quantities. We denote weighted $L_p$-norm of function $f$ on
state-space $\gX$ as $\|f\|_{p, \rho} = \left( \sum_{x \in \gX} \rho(x) |f(x)|^p \right)^{\frac{1}{p}}$
\begin{definition}
\label{def:concen}
At the $k$-th iteration, for $m \in \gN$ we define the following problem-dependent quantity using Radon-Nikodym derivatives
\begin{align*}
     \phi_k^m =   \left\|   \frac{ d( \nu_*\bpi_*^{1:m})}{d (\nu_k \bpi_{\thetak}^{1:m}) } \right\|_{2, \sigma_k}
\end{align*}
These conditions are the well-known concentrability coefficients~\citep{munos2003, farahmand2010error,chen2019information} for the factorized policy. Still, our conditions are structurally simpler and weaker because they are only density ratios between stationary state-action distributions, not requiring trajectories to roll out.
\end{definition}


 Now we are prepared to present the main theorem that characterizes the global convergence rate.

\begin{theorem}
\label{thm_mappo}
Under Assumption~\ref{assump:estimator}, for the output policy $\bar{\bpi}$ attained by Algorithm~\ref{alg:1} in the fully cooperative Markov game, set $\eta = \frac{R}{G\sqrt{T}}$ and
$$
\beta =  \sqrt{\frac{N B^2/2}{N\log{|\gA|} + \sum_{m=1}^N \sum_{k=0}^{K-1} (\Delta_k^m + \delta_k^m)}  }.
$$
After $K$ iterations, we have $J(\bpi_*) - J(\bar{\bpi})$ upper bounded by
\begin{align*}
\gO \left(
    \frac{B \sqrt{N}}{1-\gamma} 
\sqrt{ 
    \frac{
         N \log{|\gA|} + \sum_{m=1}^N \sum_{k=0}^{K-1} (\Delta_k^m + \delta_k^m) 
        }{K}
}  
\right)
\end{align*}
where $\Delta_k^m = \sqrt{2}(\phi_{k}^m + \phi^{m-1}_k) \cdot \left(\epsilon_k^m + \frac{\xi_k^m}{\beta_k} \right)$ and $ \delta_k^m = 2 \phi_k^{m-1}\epsilon_{k}^m  $. Here $\epsilon_{k}^m
$ is the statistical error of a PPO iteration: for agent $m \in \gN$, 
\begin{gather*}
     \E_{\sigma_k} \Big( (\theta_{k+1}^m- \theta_k^m)^\top \phi- \beta_k^{-1} \hat{Q}_{\bpi_\thetak}^{1:m}  \Big)^2  \le (\epsilon_{k}^m)^2
\end{gather*}
where we omit $(\sa, a^m)$ for simplicity.

Let $\epsilon_{approx}$ be the approximation capability of the log-linear policy class we adopt, then $
\epsilon_{k}^m = \epsilon_{approx} + \gO(T^{-\frac{1}{4}}).
$ 
\end{theorem}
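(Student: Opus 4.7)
My plan is to combine the multi-agent performance difference lemma with a per-agent, KL-regularized mirror-descent-type analysis, then telescope across iterates $k$ and sum over agents $m$. Concretely, at each iteration $k$, Lemma~\ref{lem:perf} rewrites the suboptimality as
\[
J(\bpi_*) - J(\bpi_{\thetak}) \;=\; \frac{1}{1-\gamma}\sum_{m=1}^{N}\mathop{\E}\limits_{\substack{s\sim\nu_* \\ \rva^{1:m-1}\sim\bpi_*^{1:m-1}}}\!\left\langle Q_{\bpi_\thetak}^{1:m}(\sa,\cdot),\,\pi_*^m(\cdot|\sa) - \pi_{\theta_k^m}(\cdot|\sa)\right\rangle,
\]
which reduces the multi-agent problem to a per-agent analysis along the fictitious sequential update. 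Because the ideal update $\hat{\pi}_{k+1}^m$ from Proposition~\ref{prop:update} is the exact minimizer of a KL-regularized objective under $\sigma_k$, a first-order-optimality argument applied pointwise in $\sa$ yields the three-point identity
\[
\langle \hat{Q}_{\bpi_\thetak}^{1:m},\,\pi_*^m - \hat{\pi}_{k+1}^m\rangle \;\le\; \beta_k\!\left[KL(\pi_*^m\|\pi_{\theta_k^m}) - KL(\pi_*^m\|\hat{\pi}_{k+1}^m) - KL(\hat{\pi}_{k+1}^m\|\pi_{\theta_k^m})\right],
\]
which provides the crucial one-point monotonicity in the spirit of~\citet{liu2019neural}.

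With this identity in hand, I split the target inner product as
\[
\langle Q_{\bpi_\thetak}^{1:m},\,\pi_*^m - \pi_{\theta_k^m}\rangle = \langle \hat{Q},\,\pi_*^m - \hat{\pi}_{k+1}^m\rangle + \langle \hat{Q},\,\hat{\pi}_{k+1}^m - \pi_{\theta_k^m}\rangle + \langle Q-\hat{Q},\,\pi_*^m - \pi_{\theta_k^m}\rangle.
\]
The first term is handled by the three-point identity. The second is bounded by $B\|\hat{\pi}_{k+1}^m - \pi_{\theta_k^m}\|_1 \le B\sqrt{2\,KL(\hat{\pi}_{k+1}^m\|\pi_{\theta_k^m})}$ (Pinsker), then Young-separated as $\tfrac{B^2}{2\beta_k} + \beta_k KL(\hat{\pi}_{k+1}^m\|\pi_{\theta_k^m})$, precisely cancelling the negative $-\beta_k KL$ term produced by mirror descent. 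The third term, together with the mismatch of replacing $\hat{\pi}_{k+1}^m$ by the parameterized $\pi_{\theta_{k+1}^m}$, is controlled by Cauchy--Schwarz followed by a change of measure from $\sigma_k$ to $\nu_*\bpi_*^{1:m-1}$ (for $\pi_{\theta_k^m}$) or $\nu_*\bpi_*^{1:m}$ (for $\pi_*^m$); the Radon--Nikodym weights produce the concentrability factors $\phi_k^{m-1}$ and $\phi_k^m$, Pinsker converts $L_1$-distance into the square root of the MSE, producing the $\sqrt{2}$, and Assumption~\ref{assump:estimator} supplies $\xi_k^m$. Assembling these pieces gives exactly the $\Delta_k^m$ and $\delta_k^m$ error terms in the statement.

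Summing the per-iterate bound over $k=0,\ldots,K-1$ causes the $\beta_k KL(\pi_*^m\|\cdot)$ terms to telescope; with uniform initialization $\theta_0^m = 0$ we get $KL(\pi_*^m\|\pi_{\theta_0^m}) \le \log|\gA|$. An auxiliary step converts $KL(\pi_*^m\|\hat{\pi}_{k+1}^m)$ into $KL(\pi_*^m\|\pi_{\theta_{k+1}^m})$ using Pinsker and the MSE bound, which contributes an additional $\epsilon_k^m$-type term absorbed into $\delta_k^m$. Plugging in $\beta_k = \beta\sqrt{K}$, summing over $m$, averaging over $k$ (because $\bar{\bpi}$ is drawn uniformly from $\{\bpi_\thetak\}_{k=0}^{K-1}$), and optimizing $\beta$ to balance $N\beta\log|\gA|\sqrt{K}$ against $NB^2\sqrt{K}/(2\beta)$ together with the error sums yields the stated rate. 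The $\epsilon_k^m = \epsilon_{\mathrm{approx}} + \gO(T^{-1/4})$ bound is a standard projected-SGD guarantee on the convex quadratic $L(\theta^m)$: with $\eta = R/(G\sqrt{T})$, the suboptimality $L(\bar{\theta}) - \min_\Theta L$ is $\gO(T^{-1/2})$, so its square root is $\gO(T^{-1/4})$.

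The main obstacle is careful bookkeeping rather than a single deep step. First, the Pinsker/Young/three-point chain must be arranged so that the $-\beta_k KL(\hat{\pi}_{k+1}^m\|\pi_{\theta_k^m})$ term exactly cancels and no stray KL leaks into the final bound. Second, one must keep track of which expectations live under $\sigma_k$, under $\nu_*\bpi_*^{1:m-1}$, or under $\nu_*\bpi_*^{1:m}$, so that concentrability coefficients appear with the correct indices $m$ or $m-1$, and the $\sqrt{2}$ Pinsker factor appears only in the places recorded in $\Delta_k^m$. Third, translating from an $L_1$-policy-distance bound back to the $L_2$-MSE quantity $\epsilon_k^m$ requires another Pinsker step whose constants must be consistent across all $N$ agents so that the final bound scales as $\sqrt{N\log|\gA|/K}$ rather than something worse.
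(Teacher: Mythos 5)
Your proposal is correct and follows essentially the same route as the paper's proof: the multi-agent performance difference lemma to reduce to per-agent inner products, a mirror-descent/three-point analysis of the KL-regularized update, Pinsker plus Young's inequality to extract the $B^2/(2\beta_k)$ term, Cauchy--Schwarz with a change of measure from $\sigma_*$ to $\sigma_k$ to produce $\phi_k^m,\phi_k^{m-1}$ and the $\Delta_k^m,\delta_k^m$ errors, telescoping the KL terms against $N\log|\gA|$, and optimizing $\beta$. The only differences are bookkeeping: you center the three-point identity at the ideal update $\hat{\pi}_{k+1}^m$ (built from $\hat{Q}$) and then convert $KL(\pi_*^m\|\hat{\pi}_{k+1}^m)$ to $KL(\pi_*^m\|\pi_{\theta_{k+1}^m})$, whereas the paper's one-step descent lemma works directly with the parameterized iterates and carries the deviation in a $\log(\pi_{\theta_{k+1}^m}/\pi_{k+1}^m)$ correction term; also, the $\sqrt{2}$ in $\Delta_k^m$ actually arises from $\|f-g\|_{2,\sigma}\le\sqrt{2}(\|f\|_{2,\sigma}+\|g\|_{2,\sigma})$ applied to the density-ratio difference, not from Pinsker.
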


Theorem~\ref{thm_mappo} explicitly characterizes the performance of the output $\bar{\bpi}$ in terms of the number of iterations and the iteration errors. When PPO updates are ideal, namely,
viewing $\delta_k^\prime, \Delta_k^m$ to be $0$ for any $m \in \gN$, and $ k < K$ , the rate simplifies to $\gO \left(  \frac{N B }{1-\gamma} \sqrt{\frac{ \log{|\gA|}}{K}}\right)$. The dependency on iteration $K$ is $\gO(K^{-\frac{1}{2}})$, matching the same rate as the sample-based single-agent NPG analysis~\citep{agarwal2020optimality, liu2019neural}.


The proof of Theorem~\ref{thm_mappo} further requires the following parts: mirror-descent update analysis used in~\citep{liu2019neural} and Lemma~\ref{lem:perf} that builds sequential dependency structure among the agents.
The full proof is deferred to Appendix~\ref{sec:pf-ma}.

\subsection{Compare with Independent Learning}
In MARL, independent learning refers to a class of algorithms that train multiple agents independently. In these methods, each agent has its own policy function that maps the agent's observations to its actions. The policies are optimized using policy gradient methods in a decentralized manner without explicit communication or coordination, and without explicitly modeling the behavior of the other agents. Independent learning methods are widely used in MARL due to its strong performance and efficiency. 

In this subsection, we provide detailed comparisons between our algorithm and previous results on independent learning (both experiments and theories). We also performed a simulation study to showcase the superiority of our sequential policy update structure over naive independent policy gradient updates.
 
\paragraph{Experiments} Some empirical attempts showed independent policy gradient learning could achieve surprisingly strong performance in MARL, such as MAPPO~\citep{yu2021surprising},
IPPO~\citep{de2020independent}, and ~\citep{papoudakis1benchmarking}.

Despite the empirical success, these methods have several drawbacks. IPPO and MAPPO assume homogeneity (agents share the same action space and policy parameters). Thus, parameter sharing is required. Even though the parameter sharing can be turned off, they still suffer from no monotonic improvement guarantees, though being called PPO-based algorithms. Recall that the main virtue of vanilla TRPO~\citep{trpo} is monotonicity. Also, these methods do not come with any convergence guarantees. The converging problem becomes more severe when parameter-sharing is switched off. A counterexample in \citep[Proposition 1]{kuba2022trust} shows parameter sharing could lead to an exponentially-worse sub-optimal outcome.

Thanks to the sequential agents' structure and novel multi-agent mirror-decent analyses, we present the first MARL algorithm that converges at a sub-linear rate. Note that our results neither rely on the homogeneity of agents nor the value function decomposition rule.

\paragraph{Theories} Several theoretical works have studied convergence guarantees of independent policy optimization algorithms to a Nash equilibrium (NE) policy in MARL mathematically~\citep{daskalakis2020independent,leonardos2022global,fox2022independent,ding2022independent}.
Specifically, ~\citet{daskalakis2020independent} studied competitive RL. And others studied convergence to the NE policy in Markov potential games (an extension of fully-cooperative games). However, we argue that a NE policy is not necessarily optimal in terms of the value function.

In contrast to their work, we present the first provable multi-agent policy optimization algorithm that finds a policy with a near globally optimal value function equipped with a sub-linear convergence rate.

\paragraph{Simulation} To further validate the theoretical and experimental benefits of our algorithm, we conducted a numerical simulation to showcase the superiority of our algorithm with sequential updates structure over naive independent policy gradient updates. We consider von Neumann’s ratio game, a simple stochastic game also used by~\citet{daskalakis2020independent}. Simulation results show that, unlike our algorithm, the independent learning method has significant difficulty escaping the stationary point. Moreover, our algorithm consistently outperforms independent learning in maximizing value function. See Section~\ref{sec:sim} for detailed settings and results.

 \section{Pessimistic MA-PPO with Linear Function Approximation}
 \label{sec:pess}
In this section, we study the off-policy setting, using samples from a data distribution $\mu$ to evaluate $Q_\bpi$. Experimentally, since function approximators often cause a positive bias in value function~\cite{thrun1993issues}, many deep off-policy actor-critic algorithms introduce pessimism to reduce such overestimation~\citep{fujimoto2018addressing,laskin2020reinforcement}. We also adopt pessimistic policy evaluation in this setting, aligning with experimental works. 

We focus on the setting where value functions and policies are linearly parameterized. Our results can extend to the general function approximation setting, presented in Appendix~\ref{sec:pf-general}.

\begin{definition}[Linear Function Approximation] 
\label{def:linear}
Let $\phi$ be a set of feature mappings built conditionally, the same definition as Section~\ref{sec:main_thm}. Define the action value function class as $\gF^m = \{ \phi^\top \omega: \omega \in \sR^d, \|\omega\|_2 \le L, \phi^\top \omega  \in [0, \nicefrac{1}{1-\gamma}] \} $. The policy class is still parameterized by log-linear: $\Pi^m = \{ \pi \propto \exp(\phi^\top \theta) : \theta \in \sR^d, \|\theta\|_2 \le R\}$ (cf. Section~\ref{sec:main_thm}).
\end{definition}

\begin{remark} Under the definition, for any $m \in \gN$ and policy $\bpi$, there must exist a parameter $\omega \in \sR^d$ that satisfies
\[
Q_{\bpi}^{1:m}(s, \rva^{1:m}) = \phi(s, \rva^{1:m})^\top \omega
\]
 
\end{remark}

In this section, we fix the initial state at a certain $s_0$. Thus the expected reward we aim to maximize is defined as
\begin{align*}
    J(\bpi) \triangleq V_{\bpi}(s_0).
\end{align*}

Note that, in single-agent offline RL, only one policy affects the action at a particular state so that we can gauge the quality of value function estimates using an offline dataset $\gD$ consisting of states, actions, rewards, and transitions. Intuitively, when the following $L_0$ approaches 0,  we can say $f$ is a nice approximator for the $Q$-function~\citep{xie2021bellman}.
\begin{align*}
    L_0 = \frac{1}{n} \sum_{(s, a, r, s^\prime) \sim \gD} \left( f(s,a) -r- \gamma f(s^\prime, \pi) \right)^2
\end{align*}
where  $f(s', \pi)$ is a shorthand for $\sum_{a'} f(s', a') \pi(a'|s')$ which will be frequently used in this section. 

However, in the multi-agent environment, the complex dependent structure precludes the application of such an offline dataset. Specifically, for the $m$-th agent and policy $\bpi$, estimating the multi-agent value function $Q_\bpi^{1:m}$  demands that all agents not in $\{1:m\}$ must follow $\bpi$ (cf. Definition~\ref{def::1}), which could not be guaranteed by an offline dataset. 

Therefore, online interactions are unavoidable in the multi-agent setting we study. Below we make clarifications for the sample-generating protocol.

 We will collect state-action samples from a fixed  \emph{data distribution} $\mu = \mu_s \mu_a \in \Delta(\gS \times \bgA)  $. In the benign case, a well-covered $\mu$ guarantees adequate exploration over the whole state and action spaces.
Assume we have access to a standard RL oracle 
\begin{definition}[Sampling Oracle]
\label{def:oracle}
  The oracle can start from $s \sim \mu_s$, take any action $\rva \in \bgA$, and obtain the next state $s^\prime \sim \gP(\cdot|s, \rva)$,  and reward $r(s, \rva)$. 
  \end{definition}

Our query oracle aligns with the classic \textbf{online sampling oracle}  for MDP~\citep{kakade2002approximately, du2019good,agarwal2020optimality}. The difference is that we transit for one step, while the classic online model usually terminates at the end of each episode.  We also note that our oracle is weaker than the  \textbf{generative model} ~\citep{kearns2002near,kakade2003sample,sidford2018near,li2020breaking} which assumes that agent can transit to \textbf{any}
state, thus greatly weakening the need for explicit exploration. Whereas our oracle starts from a fixed $\mu_s$.\footnote{In MDPs, such oracle is called $\mu$-reset model~\citep{kakade2002approximately}.}

We take advantage of the sampler in the following steps to obtain action value functions that preserve a small error under the multi-agent Bellman operator (cf. Definition~\ref{def:linear}). For agent $m \in \gN$ and $\bpi$, 
(1)  obtain $s \sim \mu_s$; (2) obtain $\rva \sim \mu_a$ and $\rva^\prime \sim \bpi^{m+1:N}(\cdot|s)$; (3) take $(\rva^{1:m}, \rva^\prime)$ as the joint action to query the oracle where $\rva^{1:m}$ represents the $\{1:m\}$ subset of $\rva$. The oracle returns $(r, s')$, which are guaranteed to satisfy:
 \begin{align*}
      r \sim  \mathop{\E}\limits_{\Tilde{\rva} \sim \bpi^{m+1:N}} R(s, \rva^{1:m}, \Tilde{\rva}),
      \quad
    s^\prime \sim \mathop{\E}\limits_{\Tilde{\rva} \sim \bpi^{m+1:N}} \gP(\cdot|s, \rva^{1:m},\Tilde{\rva}).
 \end{align*}

 Repeat these steps for $n$ times. Together this gives dataset $\gD^m = \{(s_i, \rva^{1:m}_i, r_i, s_i^\prime)|i = 1,2,\cdots n\}$.
 Define
 \[
    L^{1:m}(f^\prime, f, \bpi) \coloneqq \frac{1}{n} \sum_{\gD^m} \left( f^\prime(s,\rva^{1:m}) -r- \gamma f(s^\prime, \bpi^{1:m}) \right)^2
 \]
 where $f \in \gF^m$ (cf. Definition~\ref{def:linear}) and the summation is taken over $n$ quadruples of $(s, \rva^{1:m}, r, s^\prime)$.
 
 We will need the following Bellman error to evaluate the quality of $f$.
 \begin{equation}
 \label{eq_bellman}
      \gE^{1:m}(f, \bpi) = L^{1:m}(f, f, \bpi) - \min_{f^\prime \in \gF^m} L^{1:m}(f^\prime, f, \bpi).
 \end{equation}

 Intuitively, we consider $f$ as a nice approximation of  $Q_\bpi^{1:m}(s, \rva^{1:m})$ when the quantity is small. This formulation also works for general function approximation. See Appendix~\ref{sec:pf-general} for details.

 We shall need a concentrability measure accounting for the distributional mismatch.
 \begin{definition}[Concentrability]
The following condition characterizes the distribution shift from the $d_{\bpi_*}$ to the sampling distribution. 
\begin{align*}
    \gC^{d_{\bpi_*}}_\mu = \sup_{m \in \gN, f \in \gF^m, \bpi \in \Pi^m} \frac{\left\| f - \gT_{\bpi}^{1:m} f \right \|_{2, d_{\bpi_*}} }{ \left\| f - \gT_{\bpi}^{1:m} f \right \|_{2, \gD^m} }.
\end{align*}
\end{definition}
Recall that $\|\cdot\|_{2, \rho}$ is the weighted $L_2$-norm. In the nominator, the sum is taken over $(s, \rva^{1:m}) \sim d_{\bpi_*}$. Whereas in the denominator, the sum is taken over $(s, \rva^{1:m})$ from $\gD^m$ as an empirical version of $\mu$. The notion serves a similar role as concentrability coefficients in the literature~\citep{munos2003,agarwal2020optimality}: it measures the distributional mismatch between the underlying optimal distribution and the distribution of samples we employ.

\paragraph{Policy Evaluation}
 At the $k$-th iteration, we have the current policy $\bpi_\thetak$. We perform pessimistic policy evaluation via regularization to reduce value bias in evaluating $Q^{1:m}_{\bpi_\thetak}$. 
\[
\omega_k^m \xleftarrow[]{} \argmin_{\omega} \left(f(s_0, \bpi^{1:m}_k) + \lambda \gE^{1:m}(f, \bpi_\thetak) \right).
\]
Here $\gE$ is the Bellman error defined in~\eqref{eq_bellman}. We obtain $f_k^m = \phi^\top \omega_k^m$ as the pessimistic estimate for $Q^{1:m}_{\bpi_\thetak}$. This update has a closed-form solution under linear function approximation (cf. Definition~\ref{def:linear}). Moreover, under linear function approximation, the minimization on the right-hand side can be solved computationally efficiently because of its quadratic dependency on $\omega$. See details in Appendix~\ref{sec:pf-pess}


\paragraph{Policy Improvement} When both value functions and policies are linear parameterized (cf. Definition~\ref{def:linear}), the mirror descent policy update for any $(s,\rva^{1:m}) \in \gS \times \gA^m$
\begin{align}
\label{eq:pess-upd}
    \pi_{k+1}^m &(a^m|\sa) \propto \pi_{k}^m(a^m|\sa) \cdot \exp( \eta f_k^m(s, \rva^{1:m}))
\end{align}
could be further simplified to parameter updates in $\sR^d$
\begin{align*}
   \theta_{k+1}^m = \theta_k^m + \eta \omega_k^m.
\end{align*}
This observation makes policy improvements in this setting significantly more superficial than in Section~\ref{sec:main_thm}. For the $k$-th iteration and agent $m \in \gN$, we only need to add $\eta \omega_k^m$ to the policy parameter $\theta_k^m$ to improve policy.

\paragraph{Algorithm} With the pessimistic policy evaluation and intuitive policy improvement, our pessimistic variant of the multi-agent PPO algorithm is presented in Algorithm~\ref{alg:2}.

\begin{algorithm}
\caption{Pessimistic Multi-Agent PPO with Linear Function Approximation}
\label{alg:2}
\begin{algorithmic}[1]
\REQUIRE Regularization coefficient $\lambda$.
\ENSURE Uniformly sample $k$ from $0, 1 \cdots K-1$, return $\bar{\bpi} = \bpi_\thetak$.
\STATE Initialize $\theta_0^m=0$ for every $ m \in \gN$.
\FOR{$k=0, 1,\dots, K-1$}
    \FOR{$m=1,2, \cdots, N$}
        \STATE Pessimistic policy evaluation:\\
        $\omega_k^m \xleftarrow[]{} \mathop{\argmin}\limits_{\omega} \left(f(s_0, \bpi^{1:m}_\thetak) + \lambda \gE^{1:m}(f, \bpi_\thetak)  \right)$.
        \STATE Policy improvement: $ \theta_{k+1}^m = \theta_k^m + \eta \omega_k^m$.
    \ENDFOR
\ENDFOR
\end{algorithmic}
\end{algorithm}

Now we are prepared to present the main theorem for this section.

\begin{theorem}
\label{thm:2}
For the output policy $\bar{\bpi}$ attained by Algorithm~\ref{alg:2} in a fully cooperative Markov game, set $\eta = (1-\gamma)\sqrt{\frac{\log{|\gA|}}{2K}}$ and $\lambda = (1-\gamma)^{-1}\left( \frac{d \log \frac{nLR}{\delta}}{n}\right)^{-\nicefrac{2}{3}}$. After $K$ iterations, w.p. at least $1-\delta$ we have $J(\bpi_*) - J(\bar{\bpi})$ upper bounded  by
\begin{align*}
     \gO \left( \frac{N}{(1-\gamma)^2} \sqrt{\frac{\log{|\gA|}}{K}}
    + \frac{\gC^{d_{\bpi_*}}_\mu}{(1-\gamma)^2} \sqrt[3]{\frac{d \log{\frac{nLR}{\delta}}}{n} }\right)
\end{align*}
\end{theorem}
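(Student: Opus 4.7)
The plan is to combine the multi-agent performance difference lemma (Lemma~\ref{lem:perf}) with a standard KL-based mirror descent analysis on the simplex, while handling the off-policy statistical error through the pessimistic regularizer. First, I would apply Lemma~\ref{lem:perf} with $\bpi = \bar{\bpi} = \bpi_{\theta_k}$ to write
\begin{align*}
J(\bpi_*) - J(\bar{\bpi}) = \frac{1}{1-\gamma}\sum_{m=1}^N \mathop{\E}\limits_{\substack{s\sim\nu_*\\ \rva^{1:m-1}\sim\bpi_*^{1:m-1}}} \bigl\langle Q_{\bpi_{\theta_k}}^{1:m}(\sa,\cdot),\, \pi_*^m(\cdot|\sa) - \pi_{\theta_k}^m(\cdot|s,\rva^{1:m-1}) \bigr\rangle,
\end{align*}
then average over $k \in \{0,\dots,K-1\}$ since $\bar{\bpi}$ is sampled uniformly. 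For each agent $m$, split the inner product into a "clean" term involving $f_k^m$ and an "evaluation error" term $\langle Q_{\bpi_{\theta_k}}^{1:m} - f_k^m, \pi_*^m - \pi_{\theta_k}^m\rangle$.

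Next, I would analyze the clean term via mirror descent on the simplex. Since \eqref{eq:pess-upd} is exactly exponentiated gradient with gradient $\eta f_k^m$, the standard three-point identity gives
\begin{align*}
\eta \bigl\langle f_k^m,\, \pi_*^m - \pi_k^m \bigr\rangle \le \mathrm{KL}\bigl(\pi_*^m \,\|\, \pi_k^m\bigr) - \mathrm{KL}\bigl(\pi_*^m \,\|\, \pi_{k+1}^m\bigr) + \tfrac{\eta^2}{2}\|f_k^m\|_\infty^2,
\end{align*}
where $\|f_k^m\|_\infty \le 1/(1-\gamma)$ by Definition~\ref{def:linear}. Telescoping over $k$ and summing over $m$, using $\log|\gA|$ as a bound on the initial KL, and plugging in $\eta = (1-\gamma)\sqrt{\log|\gA|/(2K)}$ produces the first term $\frac{N}{(1-\gamma)^2}\sqrt{\log|\gA|/K}$. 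The distribution-shift factor $1/(1-\gamma)$ appears when switching the expectation from the sampling distribution to $\nu_*\bpi_*^{1:m-1}$ via $d_{\bpi_*}$.

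The heart of the argument is bounding the evaluation error, and here pessimism plays a dual role. The minimizer $\omega_k^m$ of $f(s_0, \bpi_{\theta_k}^{1:m}) + \lambda \gE^{1:m}(f, \bpi_{\theta_k})$ satisfies two key inequalities: (i) its value $f_k^m(s_0, \bpi_{\theta_k}^{1:m})$ underestimates $V_{\bpi_{\theta_k}}^{1:m}(s_0)$ up to $\lambda$ times the true Bellman error of $Q_{\bpi_{\theta_k}}^{1:m}$, and (ii) the Bellman error $\gE^{1:m}(f_k^m, \bpi_{\theta_k})$ itself is controlled by $1/\lambda$ times the realizability gap. Using a covering/uniform concentration argument over $\gF^m \times \Pi^m$ (here the linear parameterization gives a covering number of order $d\log(nLR/\delta)$), with probability $1-\delta$ the empirical Bellman error is within $\epsilon_n := \sqrt{d\log(nLR/\delta)/n}/(1-\gamma)$ of the population one for every $k$ and $m$. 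Converting the population Bellman error $\|f_k^m - \gT_{\bpi_{\theta_k}}^{1:m} f_k^m\|_{2,\mu}$ into an error under $d_{\bpi_*}$ costs a factor $\gC^{d_{\bpi_*}}_\mu$, and a further $1/(1-\gamma)$ appears when translating the Bellman residual into a value-function error. Thus the pessimism contributes $\tfrac{\gC^{d_{\bpi_*}}_\mu}{(1-\gamma)^2}(\lambda^{-1} + \lambda \epsilon_n^2)$ to the total regret, and the stated $\lambda \asymp (1-\gamma)^{-1}\epsilon_n^{-4/3}$ is precisely the balancing choice yielding $\epsilon_n^{2/3} = (d\log(nLR/\delta)/n)^{1/3}$.

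The main obstacle I expect is the reduction from the multi-agent Bellman residual to the single-policy concentrability term. Unlike the single-agent case, evaluating $Q_{\bpi_{\theta_k}}^{1:m}$ requires marginalizing over $\bpi_{\theta_k}^{m+1:N}$, so the sampling oracle and the empirical Bellman operator $L^{1:m}$ must be shown to produce an unbiased target despite the product structure of the joint policy. This requires careful bookkeeping of the marginalization inside $\gT_{\bpi}^{1:m}$ and verifying that $\gC^{d_{\bpi_*}}_\mu$ as defined genuinely bounds the distribution shift for every intermediate iterate. Once that translation is established, combining the mirror-descent sum with the two pessimism bounds and balancing $\lambda$ yields the claimed rate.
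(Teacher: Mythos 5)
Your overall architecture --- mirror descent for the optimization term, pessimism plus single-policy concentrability for the statistical term, and balancing $\lambda$ --- matches the paper's, and you correctly flag the multi-agent Bellman-residual-to-concentrability reduction as the crux. But the specific decomposition you propose is where the argument breaks. You apply Lemma~\ref{lem:perf} in the \emph{true} game and split the inner product into $\langle f_k^m, \pi_*^m - \pi_k^m\rangle$ plus the evaluation error $\langle Q^{1:m}_{\bpi_{\theta_k}} - f_k^m,\, \pi_*^m - \pi_k^m\rangle$. To control that second term you would need $\|Q^{1:m}_{\bpi_{\theta_k}} - f_k^m\|$ under the distribution $\nu_*\bpi_*^{1:m-1}$; converting a value-function error into a Bellman residual requires unrolling $Q^{1:m}_{\bpi_{\theta_k}} - f_k^m = \sum_{t\ge 0}\gamma^t (P_{\bpi_{\theta_k}})^t\bigl(\gT^{1:m}_{\bpi_{\theta_k}} f_k^m - f_k^m\bigr)$, which forces you to evaluate the residual on the occupancy of $\bpi_{\theta_k}$ started from $d_{\bpi_*}$. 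The coefficient $\gC^{d_{\bpi_*}}_\mu$ only covers residuals under $d_{\bpi_*}$ itself, so this term is not controlled by the stated assumption, and the underestimation property of pessimism --- which is a statement about $f_k^m(s_0,\bpi_{\theta_k}^{1:m})$ versus $J(\bpi_{\theta_k})$ at the initial state --- does not attach to an inner product evaluated under $\nu_*$.

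The paper's proof avoids this by a construction you do not use: for each iterate it defines an auxiliary Markov game $\gM_k$ with the same dynamics and reward $R_k = (I-\gamma\gP)f_k$, so that $f_k^m$ is the \emph{exact} multi-agent value function of $\gM_k$ (Lemma~\ref{lem:c3}). The performance difference lemma is then applied inside $\gM_k$, making your ``clean'' mirror-descent term exact with no additive error, and the suboptimality is repackaged as $J(\bpi_*) - J_{\gM_k}(\bpi_*)$ plus $J_{\gM_k}(\bpi_{\theta_k}) - J(\bpi_{\theta_k})$. The first piece telescopes only along trajectories of $\bpi_*$ (the two games share dynamics), so it reduces to $\|f_k - \gT_{\bpi_{\theta_k}}f_k\|_{2,d_{\bpi_*}}$, exactly where $\gC^{d_{\bpi_*}}_\mu$ applies; the second piece is bounded by $\lambda\varepsilon_r$ using only on-data concentration (Lemma~\ref{lem:c7}), with no concentrability at all. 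This is the missing idea; without it your evaluation-error term would require a stronger, all-policy concentrability assumption than the theorem states. (The rest of your outline --- the exponentiated-gradient regret with $\|f_k^m\|_\infty\le 1/(1-\gamma)$ and the balancing of $\lambda$ --- goes through essentially as in the paper.)
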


To interpret this bound, the first term accounts for the optimization error accumulating from mirror descent updates~\eqref{eq:pess-upd}. The first term has an $(1-\gamma)^{-2}$
dependency on the discount factor, which may not be
tight, and we leave it as a future work to improve. 
The second term represents the estimation errors accumulated during training. We use state-action pairs from $\mu$ and the sampling oracle for minimizing $\gE^{1:m}(f, \bpi)$, thereby introducing \emph{distribution mismatch} which is expressed by $\gC^{d_{\bpi_*}}_\mu$. Note that this single-policy concentrability is already 
weaker than traditional concentrability coefficients~\citep{munos2003, farahmand2010error,perolat2015approximate}. Intuitively, a small
value of concentrability requires the data distribution $\mu$ close to $d_{\bpi_*}$, which is the unknown occupancy distribution of optimal policy. On the other hand, if $\gC^{d_{\bpi_*}}_\mu$ is large, then the bound becomes loose. We provide a similar result for general function approximation in the appendix (cf. Theorem~\ref{thm:3}).

 There is no explicit dependence on state-space $\gS$ in the theorem. Hence the online algorithm proves nice guarantees for function approximation even in the infinite-state setting.
 
To prove Theorem~\ref{thm:2}, the quantitative analysis for Bellman-consistent pessimism~\citep{xie2021bellman} is useful. We obtain statistical and convergence guarantees by taking advantage of the conditional dependency structure of the cooperative Markov games. See Appendix~\ref{sec:pf-pess} for details.

\section{Conclusion}
\label{sec:con}
In this work, we present a new multi-agent PPO algorithm that converges to the globally optimal policy at a sublinear rate. The key to the algorithm is a multi-agent performance difference lemma which enables sequential local policy updates. As a generalization, we extend the algorithm to the off-policy setting and present similar convergence guarantees. 
To our knowledge, this is the first 
 multi-agent PPO algorithm in cooperative Markov games that enjoys provable guarantees.

\section*{Acknowledgements}
JDL acknowledges support of the ARO under MURI Award W911NF-11-1-0304,  the Sloan Research Fellowship, NSF CCF 2002272, NSF IIS 2107304,  NSF CIF 2212262, ONR Young Investigator Award, and NSF CAREER Award 2144994.

\bibliographystyle{abbrvnat}
\bibliography{refs}

\begin{thebibliography}{67}
\providecommand{\natexlab}[1]{#1}
\providecommand{\url}[1]{\texttt{#1}}
\expandafter\ifx\csname urlstyle\endcsname\relax
  \providecommand{\doi}[1]{doi: #1}\else
  \providecommand{\doi}{doi: \begingroup \urlstyle{rm}\Url}\fi

\bibitem[Agarwal et~al.(2020)Agarwal, Kakade, Lee, and
  Mahajan]{agarwal2020optimality}
A.~Agarwal, S.~M. Kakade, J.~D. Lee, and G.~Mahajan.
\newblock Optimality and approximation with policy gradient methods in {M}arkov
  decision processes.
\newblock In \emph{Conference on Learning Theory}, pages 64--66. PMLR, 2020.

\bibitem[Antos et~al.(2008)Antos, Szepesv{\'a}ri, and Munos]{antos2008learning}
A.~Antos, C.~Szepesv{\'a}ri, and R.~Munos.
\newblock Learning near-optimal policies with {B}ellman-residual minimization
  based fitted policy iteration and a single sample path.
\newblock \emph{Machine Learning}, 71\penalty0 (1):\penalty0 89--129, 2008.

\bibitem[Branavan et~al.(2009)Branavan, Chen, Zettlemoyer, and
  Barzilay]{branavan2009reinforcement}
S.~R.~K. Branavan, H.~Chen, L.~S. Zettlemoyer, and R.~Barzilay.
\newblock Reinforcement learning for mapping instructions to actions.
\newblock In \emph{Proceedings of the Joint Conference of the 47th Annual
  Meeting of the ACL and the 4th International Joint Conference on Natural
  Language Processing of the AFNLP: Volume 1 - Volume 1}, ACL '09, page
  82–90, USA, 2009. Association for Computational Linguistics.

\bibitem[Brown and Sandholm(2018)]{brown2018superhuman}
N.~Brown and T.~Sandholm.
\newblock Superhuman {AI} for heads-up no-limit poker: Libratus beats top
  professionals.
\newblock \emph{Science}, 359\penalty0 (6374):\penalty0 418--424, 2018.

\bibitem[Cen et~al.(2021)Cen, Wei, and Chi]{cen2021fast}
S.~Cen, Y.~Wei, and Y.~Chi.
\newblock Fast policy extragradient methods for competitive games with entropy
  regularization.
\newblock In \emph{Advances in Neural Information Processing Systems}, pages
  27952--27964. Curran Associates, Inc., 2021.

\bibitem[Cen et~al.(2022)Cen, Chi, Du, and Xiao]{cen2022faster}
S.~Cen, Y.~Chi, S.~S. Du, and L.~Xiao.
\newblock Faster last-iterate convergence of policy optimization in zero-sum
  {M}arkov games.
\newblock \emph{arXiv preprint arXiv:2210.01050}, 2022.

\bibitem[Chen and Jiang(2019)]{chen2019information}
J.~Chen and N.~Jiang.
\newblock Information-theoretic considerations in batch reinforcement learning.
\newblock In \emph{International Conference on Machine Learning}, pages
  1042--1051. PMLR, 2019.

\bibitem[Daskalakis et~al.(2020)Daskalakis, Foster, and
  Golowich]{daskalakis2020independent}
C.~Daskalakis, D.~J. Foster, and N.~Golowich.
\newblock Independent policy gradient methods for competitive reinforcement
  learning.
\newblock \emph{Advances in neural information processing systems},
  33:\penalty0 5527--5540, 2020.

\bibitem[de~Witt et~al.(2020)de~Witt, Gupta, Makoviichuk, Makoviychuk, Torr,
  Sun, and Whiteson]{de2020independent}
C.~S. de~Witt, T.~Gupta, D.~Makoviichuk, V.~Makoviychuk, P.~H. Torr, M.~Sun,
  and S.~Whiteson.
\newblock Is independent learning all you need in the {S}tar{C}raft multi-agent
  challenge?
\newblock \emph{arXiv preprint arXiv:2011.09533}, 2020.

\bibitem[Ding et~al.(2022)Ding, Wei, Zhang, and Jovanovic]{ding2022independent}
D.~Ding, C.-Y. Wei, K.~Zhang, and M.~Jovanovic.
\newblock Independent policy gradient for large-scale {M}arkov potential games:
  Sharper rates, function approximation, and game-agnostic convergence.
\newblock In \emph{International Conference on Machine Learning}, pages
  5166--5220. PMLR, 2022.

\bibitem[Du et~al.(2019)Du, Kakade, Wang, and Yang]{du2019good}
S.~S. Du, S.~M. Kakade, R.~Wang, and L.~F. Yang.
\newblock Is a good representation sufficient for sample efficient
  reinforcement learning?
\newblock \emph{arXiv preprint arXiv:1910.03016}, 2019.

\bibitem[Duan et~al.(2016)Duan, Chen, Houthooft, Schulman, and
  Abbeel]{duan2016benchmarking}
Y.~Duan, X.~Chen, R.~Houthooft, J.~Schulman, and P.~Abbeel.
\newblock Benchmarking deep reinforcement learning for continuous control.
\newblock In \emph{International conference on machine learning}, pages
  1329--1338. PMLR, 2016.

\bibitem[Fan et~al.(2020)Fan, Wang, Xie, and Yang]{fan2020theoretical}
J.~Fan, Z.~Wang, Y.~Xie, and Z.~Yang.
\newblock A theoretical analysis of deep {Q}-learning.
\newblock In \emph{Learning for Dynamics and Control}, pages 486--489. PMLR,
  2020.

\bibitem[Farahmand et~al.(2010)Farahmand, Szepesv{\'a}ri, and
  Munos]{farahmand2010error}
A.-m. Farahmand, C.~Szepesv{\'a}ri, and R.~Munos.
\newblock Error propagation for approximate policy and value iteration.
\newblock \emph{Advances in Neural Information Processing Systems}, 23, 2010.

\bibitem[Foerster et~al.(2018)Foerster, Farquhar, Afouras, Nardelli, and
  Whiteson]{foerster2018counterfactual}
J.~Foerster, G.~Farquhar, T.~Afouras, N.~Nardelli, and S.~Whiteson.
\newblock Counterfactual multi-agent policy gradients.
\newblock In \emph{Proceedings of the AAAI conference on artificial
  intelligence}, volume~32, 2018.

\bibitem[Fox et~al.(2022)Fox, Mcaleer, Overman, and
  Panageas]{fox2022independent}
R.~Fox, S.~M. Mcaleer, W.~Overman, and I.~Panageas.
\newblock Independent natural policy gradient always converges in markov
  potential games.
\newblock In \emph{International Conference on Artificial Intelligence and
  Statistics}, pages 4414--4425. PMLR, 2022.

\bibitem[Fujimoto et~al.(2018)Fujimoto, Hoof, and
  Meger]{fujimoto2018addressing}
S.~Fujimoto, H.~Hoof, and D.~Meger.
\newblock Addressing function approximation error in actor-critic methods.
\newblock In \emph{International conference on machine learning}, pages
  1587--1596. PMLR, 2018.

\bibitem[Gimpel and Smith(2010)]{gimpel2010softmax}
K.~Gimpel and N.~A. Smith.
\newblock Softmax-margin {CRF}s: {T}raining log-linear models with cost
  functions.
\newblock In \emph{Human Language Technologies: The 2010 Annual Conference of
  the North American Chapter of the Association for Computational Linguistics},
  pages 733--736, 2010.

\bibitem[Guo et~al.(2016)Guo, Singh, Lewis, and Lee]{guo2016deep}
X.~Guo, S.~Singh, R.~Lewis, and H.~Lee.
\newblock Deep learning for reward design to improve {M}onte {C}arlo tree
  search in {A}tari games.
\newblock \emph{arXiv preprint arXiv:1604.07095}, 2016.

\bibitem[Heess et~al.(2013)Heess, Silver, and Teh]{heess2013actor}
N.~Heess, D.~Silver, and Y.~W. Teh.
\newblock Actor-critic reinforcement learning with energy-based policies.
\newblock In \emph{European Workshop on Reinforcement Learning}, pages 45--58.
  PMLR, 2013.

\bibitem[Jin et~al.(2017)Jin, Ge, Netrapalli, Kakade, and
  Jordan]{jin2017escape}
C.~Jin, R.~Ge, P.~Netrapalli, S.~M. Kakade, and M.~I. Jordan.
\newblock How to escape saddle points efficiently.
\newblock In \emph{International conference on machine learning}, pages
  1724--1732. PMLR, 2017.

\bibitem[Jin et~al.(2021)Jin, Yang, and Wang]{jin2021pessimism}
Y.~Jin, Z.~Yang, and Z.~Wang.
\newblock Is pessimism provably efficient for offline {RL}?
\newblock In \emph{International Conference on Machine Learning}, pages
  5084--5096. PMLR, 2021.

\bibitem[Kakade and Langford(2002)]{kakade2002approximately}
S.~Kakade and J.~Langford.
\newblock Approximately optimal approximate reinforcement learning.
\newblock In \emph{In Proc. 19th International Conference on Machine Learning}.
  Citeseer, 2002.

\bibitem[Kakade(2001)]{kakade2001natural}
S.~M. Kakade.
\newblock A natural policy gradient.
\newblock \emph{Advances in neural information processing systems}, 14, 2001.

\bibitem[Kakade(2003)]{kakade2003sample}
S.~M. Kakade.
\newblock \emph{On the sample complexity of reinforcement learning}.
\newblock University of London, University College London (United Kingdom),
  2003.

\bibitem[Kearns and Singh(2002)]{kearns2002near}
M.~Kearns and S.~Singh.
\newblock Near-optimal reinforcement learning in polynomial time.
\newblock \emph{Machine learning}, 49\penalty0 (2):\penalty0 209--232, 2002.

\bibitem[Kraemer and Banerjee(2016)]{kraemer2016multi}
L.~Kraemer and B.~Banerjee.
\newblock Multi-agent reinforcement learning as a rehearsal for decentralized
  planning.
\newblock \emph{Neurocomputing}, 190:\penalty0 82--94, 2016.

\bibitem[Kuba et~al.(2022)Kuba, Chen, Wen, Wen, Sun, Wang, and
  Yang]{kuba2022trust}
J.~G. Kuba, R.~Chen, M.~Wen, Y.~Wen, F.~Sun, J.~Wang, and Y.~Yang.
\newblock Trust region policy optimisation in multi-agent reinforcement
  learning.
\newblock In \emph{International Conference on Learning Representations}, 2022.
\newblock URL \url{https://openreview.net/forum?id=EcGGFkNTxdJ}.

\bibitem[Laskin et~al.(2020)Laskin, Lee, Stooke, Pinto, Abbeel, and
  Srinivas]{laskin2020reinforcement}
M.~Laskin, K.~Lee, A.~Stooke, L.~Pinto, P.~Abbeel, and A.~Srinivas.
\newblock Reinforcement learning with augmented data.
\newblock \emph{Advances in neural information processing systems},
  33:\penalty0 19884--19895, 2020.

\bibitem[Lee et~al.(2020)Lee, Fischer, Liu, Guo, Lee, Canny, and
  Guadarrama]{lee2020predictive}
K.-H. Lee, I.~Fischer, A.~Liu, Y.~Guo, H.~Lee, J.~Canny, and S.~Guadarrama.
\newblock Predictive information accelerates learning in {RL}.
\newblock \emph{Advances in Neural Information Processing Systems},
  33:\penalty0 11890--11901, 2020.

\bibitem[Leonardos et~al.(2022)Leonardos, Overman, Panageas, and
  Piliouras]{leonardos2022global}
S.~Leonardos, W.~Overman, I.~Panageas, and G.~Piliouras.
\newblock Global convergence of multi-agent policy gradient in markov potential
  games.
\newblock In \emph{International Conference on Learning Representations}, 2022.
\newblock URL \url{https://openreview.net/forum?id=gfwON7rAm4}.

\bibitem[Li et~al.(2020)Li, Wei, Chi, Gu, and Chen]{li2020breaking}
G.~Li, Y.~Wei, Y.~Chi, Y.~Gu, and Y.~Chen.
\newblock Breaking the sample size barrier in model-based reinforcement
  learning with a generative model.
\newblock \emph{Advances in neural information processing systems},
  33:\penalty0 12861--12872, 2020.

\bibitem[Lillicrap et~al.(2015)Lillicrap, Hunt, Pritzel, Heess, Erez, Tassa,
  Silver, and Wierstra]{lillicrap2015continuous}
T.~P. Lillicrap, J.~J. Hunt, A.~Pritzel, N.~Heess, T.~Erez, Y.~Tassa,
  D.~Silver, and D.~Wierstra.
\newblock Continuous control with deep reinforcement learning.
\newblock \emph{arXiv preprint arXiv:1509.02971}, 2015.

\bibitem[Littman(1994)]{littman1994markov}
M.~L. Littman.
\newblock Markov games as a framework for multi-agent reinforcement learning.
\newblock In \emph{Machine learning proceedings 1994}, pages 157--163.
  Elsevier, 1994.

\bibitem[Liu et~al.(2019)Liu, Cai, Yang, and Wang]{liu2019neural}
B.~Liu, Q.~Cai, Z.~Yang, and Z.~Wang.
\newblock Neural trust region/proximal policy optimization attains globally
  optimal policy.
\newblock \emph{Advances in neural information processing systems}, 32, 2019.

\bibitem[Liu et~al.(2020)Liu, Swaminathan, Agarwal, and
  Brunskill]{liu2020provably}
Y.~Liu, A.~Swaminathan, A.~Agarwal, and E.~Brunskill.
\newblock Provably good batch off-policy reinforcement learning without great
  exploration.
\newblock \emph{Advances in neural information processing systems},
  33:\penalty0 1264--1274, 2020.

\bibitem[Lowe et~al.(2017)Lowe, Wu, Tamar, Harb, Pieter~Abbeel, and
  Mordatch]{lowe2017multi}
R.~Lowe, Y.~I. Wu, A.~Tamar, J.~Harb, O.~Pieter~Abbeel, and I.~Mordatch.
\newblock Multi-agent actor-critic for mixed cooperative-competitive
  environments.
\newblock \emph{Advances in neural information processing systems}, 30, 2017.

\bibitem[Mnih et~al.(2015)Mnih, Kavukcuoglu, Silver, Rusu, Veness, Bellemare,
  Graves, Riedmiller, Fidjeland, Ostrovski, et~al.]{mnih2015human}
V.~Mnih, K.~Kavukcuoglu, D.~Silver, A.~A. Rusu, J.~Veness, M.~G. Bellemare,
  A.~Graves, M.~Riedmiller, A.~K. Fidjeland, G.~Ostrovski, et~al.
\newblock Human-level control through deep reinforcement learning.
\newblock \emph{nature}, 518\penalty0 (7540):\penalty0 529--533, 2015.

\bibitem[Moskovitz et~al.(2021)Moskovitz, Parker-Holder, Pacchiano, Arbel, and
  Jordan]{moskovitz2021tactical}
T.~Moskovitz, J.~Parker-Holder, A.~Pacchiano, M.~Arbel, and M.~Jordan.
\newblock Tactical optimism and pessimism for deep reinforcement learning.
\newblock \emph{Advances in Neural Information Processing Systems},
  34:\penalty0 12849--12863, 2021.

\bibitem[Munos(2003)]{munos2003}
R.~Munos.
\newblock Error bounds for approximate policy iteration.
\newblock In \emph{International Conference on Machine Learning}, page
  560–567, 2003.

\bibitem[Munos and Szepesv{\'a}ri(2008)]{munos2008finite}
R.~Munos and C.~Szepesv{\'a}ri.
\newblock Finite-time bounds for fitted value iteration.
\newblock \emph{Journal of Machine Learning Research}, 9\penalty0 (5), 2008.

\bibitem[Nesterov(2003)]{nesterov2003introductory}
Y.~Nesterov.
\newblock \emph{Introductory lectures on convex optimization: {A} basic
  course}, volume~87.
\newblock Springer Science \& Business Media, 2003.

\bibitem[Papoudakis et~al.(2021)Papoudakis, Christianos, Sch{\"a}fer, and
  Albrecht]{papoudakis1benchmarking}
G.~Papoudakis, F.~Christianos, L.~Sch{\"a}fer, and S.~V. Albrecht.
\newblock Benchmarking multi-agent deep reinforcement learning algorithms in
  cooperative tasks.
\newblock In \emph{Thirty-fifth Conference on Neural Information Processing
  Systems Datasets and Benchmarks Track (Round 1)}, 2021.

\bibitem[Perolat et~al.(2015)Perolat, Scherrer, Piot, and
  Pietquin]{perolat2015approximate}
J.~Perolat, B.~Scherrer, B.~Piot, and O.~Pietquin.
\newblock Approximate dynamic programming for two-player zero-sum {M}arkov
  games.
\newblock In \emph{International Conference on Machine Learning}, pages
  1321--1329, 2015.

\bibitem[Rashidinejad et~al.(2021)Rashidinejad, Zhu, Ma, Jiao, and
  Russell]{rashidinejad2021bridging}
P.~Rashidinejad, B.~Zhu, C.~Ma, J.~Jiao, and S.~Russell.
\newblock Bridging offline reinforcement learning and imitation learning: {A}
  tale of pessimism.
\newblock \emph{Advances in Neural Information Processing Systems},
  34:\penalty0 11702--11716, 2021.

\bibitem[Schulman et~al.(2015)Schulman, Levine, Abbeel, Jordan, and
  Moritz]{trpo}
J.~Schulman, S.~Levine, P.~Abbeel, M.~Jordan, and P.~Moritz.
\newblock Trust region policy optimization.
\newblock In \emph{International conference on machine learning}, pages
  1889--1897. PMLR, 2015.

\bibitem[Schulman et~al.(2017)Schulman, Wolski, Dhariwal, Radford, and
  Klimov]{ppo}
J.~Schulman, F.~Wolski, P.~Dhariwal, A.~Radford, and O.~Klimov.
\newblock Proximal policy optimization algorithms.
\newblock \emph{arXiv preprint arXiv:1707.06347}, 2017.

\bibitem[Shalev-Shwartz and Ben-David(2014)]{shalev2014understanding}
S.~Shalev-Shwartz and S.~Ben-David.
\newblock \emph{Understanding machine learning: {F}rom theory to algorithms}.
\newblock Cambridge university press, 2014.

\bibitem[Shani et~al.(2020)Shani, Efroni, and Mannor]{shani2020adaptive}
L.~Shani, Y.~Efroni, and S.~Mannor.
\newblock Adaptive trust region policy optimization: Global convergence and
  faster rates for regularized {MDP}s.
\newblock In \emph{Proceedings of the AAAI Conference on Artificial
  Intelligence}, pages 5668--5675, 2020.

\bibitem[Shapley(1953)]{shapley1953stochastic}
L.~S. Shapley.
\newblock Stochastic games.
\newblock \emph{Proceedings of the national academy of sciences}, 39\penalty0
  (10):\penalty0 1095--1100, 1953.

\bibitem[Sidford et~al.(2018)Sidford, Wang, Wu, Yang, and Ye]{sidford2018near}
A.~Sidford, M.~Wang, X.~Wu, L.~F. Yang, and Y.~Ye.
\newblock Near-optimal time and sample complexities for solving discounted
  markov decision process with a generative model.
\newblock \emph{arXiv preprint arXiv:1806.01492}, 2018.

\bibitem[Silver et~al.(2016)Silver, Huang, Maddison, Guez, Sifre, Van
  Den~Driessche, Schrittwieser, Antonoglou, Panneershelvam, Lanctot,
  et~al.]{silver2016mastering}
D.~Silver, A.~Huang, C.~J. Maddison, A.~Guez, L.~Sifre, G.~Van Den~Driessche,
  J.~Schrittwieser, I.~Antonoglou, V.~Panneershelvam, M.~Lanctot, et~al.
\newblock Mastering the game of {G}o with deep neural networks and tree search.
\newblock \emph{nature}, 529\penalty0 (7587):\penalty0 484--489, 2016.

\bibitem[Silver et~al.(2017)Silver, Schrittwieser, Simonyan, Antonoglou, Huang,
  Guez, Hubert, Baker, Lai, Bolton, et~al.]{silver2017mastering}
D.~Silver, J.~Schrittwieser, K.~Simonyan, I.~Antonoglou, A.~Huang, A.~Guez,
  T.~Hubert, L.~Baker, M.~Lai, A.~Bolton, et~al.
\newblock Mastering the game of {G}o without human knowledge.
\newblock \emph{nature}, 550\penalty0 (7676):\penalty0 354--359, 2017.

\bibitem[Sutton et~al.(1999)Sutton, McAllester, Singh, and
  Mansour]{sutton1999policy}
R.~S. Sutton, D.~McAllester, S.~Singh, and Y.~Mansour.
\newblock Policy gradient methods for reinforcement learning with function
  approximation.
\newblock \emph{Advances in neural information processing systems}, 12, 1999.

\bibitem[Thrun and Schwartz(1993)]{thrun1993issues}
S.~Thrun and A.~Schwartz.
\newblock Issues in using function approximation for reinforcement learning.
\newblock In \emph{Proceedings of the Fourth Connectionist Models Summer
  School}, volume 255, page 263. Hillsdale, NJ, 1993.

\bibitem[Tian et~al.(2019)Tian, Ma, Gong, Sengupta, Chen, Pinkerton, and
  Zitnick]{tian2019elf}
Y.~Tian, J.~Ma, Q.~Gong, S.~Sengupta, Z.~Chen, J.~Pinkerton, and L.~Zitnick.
\newblock Elf opengo: An analysis and open reimplementation of {A}lpha{Z}ero.
\newblock In \emph{International Conference on Machine Learning}, pages
  6244--6253. PMLR, 2019.

\bibitem[Uehara and Sun(2021)]{uehara2021pessimistic}
M.~Uehara and W.~Sun.
\newblock Pessimistic model-based offline {RL}: {P}ac bounds and posterior
  sampling under partial coverage.
\newblock \emph{arXiv e-prints}, pages arXiv--2107, 2021.

\bibitem[Vinyals et~al.(2019)Vinyals, Babuschkin, Czarnecki, Mathieu, Dudzik,
  Chung, Choi, Powell, Ewalds, Georgiev, et~al.]{vinyals2019grandmaster}
O.~Vinyals, I.~Babuschkin, W.~M. Czarnecki, M.~Mathieu, A.~Dudzik, J.~Chung,
  D.~H. Choi, R.~Powell, T.~Ewalds, P.~Georgiev, et~al.
\newblock Grandmaster level in {S}tar{C}raft ii using multi-agent reinforcement
  learning.
\newblock \emph{Nature}, 575\penalty0 (7782):\penalty0 350--354, 2019.

\bibitem[Wen et~al.(2019)Wen, Yang, Luo, Wang, and Pan]{wen2019probabilistic}
Y.~Wen, Y.~Yang, R.~Luo, J.~Wang, and W.~Pan.
\newblock Probabilistic recursive reasoning for multi-agent reinforcement
  learning.
\newblock \emph{arXiv preprint arXiv:1901.09207}, 2019.

\bibitem[Xie and Jiang(2020)]{xie2020q}
T.~Xie and N.~Jiang.
\newblock Q* approximation schemes for batch reinforcement learning: {A}
  theoretical comparison.
\newblock In \emph{Conference on Uncertainty in Artificial Intelligence}, pages
  550--559. PMLR, 2020.

\bibitem[Xie et~al.(2021)Xie, Cheng, Jiang, Mineiro, and
  Agarwal]{xie2021bellman}
T.~Xie, C.-A. Cheng, N.~Jiang, P.~Mineiro, and A.~Agarwal.
\newblock Bellman-consistent pessimism for offline reinforcement learning.
\newblock \emph{Advances in neural information processing systems},
  34:\penalty0 6683--6694, 2021.

\bibitem[Yang et~al.(2018)Yang, Luo, Li, Zhou, Zhang, and Wang]{yang2018mean}
Y.~Yang, R.~Luo, M.~Li, M.~Zhou, W.~Zhang, and J.~Wang.
\newblock Mean field multi-agent reinforcement learning.
\newblock In \emph{International conference on machine learning}, pages
  5571--5580. PMLR, 2018.

\bibitem[Yu et~al.(2021)Yu, Velu, Vinitsky, Wang, Bayen, and
  Wu]{yu2021surprising}
C.~Yu, A.~Velu, E.~Vinitsky, Y.~Wang, A.~Bayen, and Y.~Wu.
\newblock The surprising effectiveness of {PPO} in cooperative, multi-agent
  games.
\newblock \emph{arXiv preprint arXiv:2103.01955}, 2021.

\bibitem[Zhan et~al.(2022)Zhan, Huang, Huang, Jiang, and Lee]{zhan2022offline}
W.~Zhan, B.~Huang, A.~Huang, N.~Jiang, and J.~Lee.
\newblock Offline reinforcement learning with realizability and single-policy
  concentrability.
\newblock In \emph{Conference on Learning Theory}, pages 2730--2775. PMLR,
  2022.

\bibitem[Zhang et~al.(2020)Zhang, Chen, Huang, Li, Yang, Zhang, and
  Wang]{zhang2020bi}
H.~Zhang, W.~Chen, Z.~Huang, M.~Li, Y.~Yang, W.~Zhang, and J.~Wang.
\newblock Bi-level actor-critic for multi-agent coordination.
\newblock In \emph{Proceedings of the AAAI Conference on Artificial
  Intelligence}, pages 7325--7332, 2020.

\bibitem[Zhang et~al.(2021)Zhang, Yang, and Ba{\c{s}}ar]{zhang2021multi}
K.~Zhang, Z.~Yang, and T.~Ba{\c{s}}ar.
\newblock Multi-agent reinforcement learning: {A} selective overview of
  theories and algorithms.
\newblock \emph{Handbook of Reinforcement Learning and Control}, pages
  321--384, 2021.

\bibitem[Zhao et~al.(2022)Zhao, Tian, Lee, and Du]{zhao2022provably}
Y.~Zhao, Y.~Tian, J.~Lee, and S.~Du.
\newblock Provably efficient policy optimization for two-player zero-sum
  {M}arkov games.
\newblock In \emph{International Conference on Artificial Intelligence and
  Statistics}, pages 2736--2761. PMLR, 2022.

\end{thebibliography}

\newpage
\appendix

\section{Sub-problem Solver for Section~\ref{sec:main_thm}}
\label{sec:pf-alg}
\begin{algorithm}
\caption{Policy Improvement Solver for MA-PPO}
\label{alg:3}
\begin{algorithmic}[1]
\REQUIRE MG $( \gN, \gS, \bgA, \gP, r, \gamma )$, iterations $T$, stepsize $\eta$, samples $\{s_t, \rva^{1:m-1}_t, a_t^m \}_{t=0}^{T-1}$.
\ENSURE Policy update $\theta$.
\STATE Initialize $\theta_0=0$.
\FOR{$t=0, 1, \dots, T-1$}
        \STATE Let $(s, \rva^{1:m-1}, a) \xleftarrow{} (s_t, \rva^{1:m-1}_t, a_t^m)$.
        \STATE 
        $
            \theta(t+\frac{1}{2}) \xleftarrow{} \theta(t)- 2\eta \phi(s, \rva^{1:m-1}, a) \left( \left(\theta(t)-\theta_k^m \right)^\top \phi(\sa, a^m) -  \beta_k^{-1} \hat{Q}_{\bpi_k}^{1:m}(\sa, a^m) \right).
        $
        \STATE $\theta(t+1) \xleftarrow{} \Pi_{\Theta} \theta(t + \frac{1}{2})$
\ENDFOR
\STATE Calculate average: $\bar{\theta} \xleftarrow{} \frac{1}{T} \sum_{t=1}^{T} \theta_t.$
\end{algorithmic}
\end{algorithm}

\section{Proofs for Section~\ref{sec:main_thm}}
\label{sec:pf-ma}
First, we note that using SGD updates to solve the MSE problem has the following guarantee.
\begin{lemma}[Average policy]
\label{lem:aver-policy}
For a convex objective function $F(\theta)$, suppose the gradient is bounded by G, and the output $\bar{\theta}$ converges to the best function in the class at
$$
 F(\bar{\theta}) - \min_{\|\theta\| \le R} F(\theta) \le \frac{GR}{\sqrt{T}}
$$
where we set $\eta = \frac{R}{G\sqrt{T}}$.
\end{lemma}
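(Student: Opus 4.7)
The statement is the standard convergence guarantee for projected (stochastic) subgradient descent on a convex objective over a bounded convex set, so the plan is to follow the textbook one-step descent analysis, telescope, and apply Jensen's inequality.

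First, let $\theta^{\star} \in \argmin_{\|\theta\| \le R} F(\theta)$ and let $g_t$ denote the (possibly stochastic) subgradient used at iterate $\theta(t)$; in our setting $g_t = 2\phi \bigl((\theta(t)-\theta_k^m)^\top \phi - \beta_k^{-1}\hat Q_{\bpi_\thetak}^{1:m}\bigr)$, and by the regularity $\|\phi\|_2 \le 1$ together with the uniform bounds on $\hat Q$ and $\theta$, one verifies $\|g_t\| \le G$. I would begin by writing the one-step inequality coming from non-expansiveness of the Euclidean projection $\Pi_\Theta$ onto the convex ball $\Theta$:
\[
\|\theta(t+1)-\theta^{\star}\|^2 \le \|\theta(t)-\theta^{\star}\|^2 - 2\eta \langle g_t, \theta(t)-\theta^{\star}\rangle + \eta^2 \|g_t\|^2 .
\]

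Second, I would invoke convexity of $F$ in the form $F(\theta(t)) - F(\theta^{\star}) \le \langle \E[g_t \mid \theta(t)], \theta(t) - \theta^{\star}\rangle$ (unbiasedness of the stochastic subgradient, or equality in the deterministic case), rearrange the displayed inequality to isolate that inner product, take total expectation, and sum over $t = 0, 1, \dots, T-1$. The telescoping leaves
\[
\sum_{t=0}^{T-1} \bigl(\E F(\theta(t)) - F(\theta^{\star})\bigr) \le \frac{\|\theta_0 - \theta^{\star}\|^2}{2\eta} + \frac{\eta}{2}\sum_{t=0}^{T-1}\E\|g_t\|^2 \le \frac{R^2}{2\eta} + \frac{\eta G^2 T}{2},
\]
using $\theta_0 = 0$ and $\|\theta^{\star}\| \le R$ to bound the initial distance, and $\|g_t\|\le G$ for the gradient term.

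Third, I would divide by $T$, apply Jensen's inequality to the convex $F$ at $\bar\theta = \tfrac{1}{T}\sum_{t=1}^{T}\theta(t)$ to get $F(\bar\theta) \le \tfrac{1}{T}\sum_t F(\theta(t))$, and finally plug in the prescribed stepsize $\eta = R/(G\sqrt{T})$ which perfectly balances the two terms and yields
\[
F(\bar\theta) - \min_{\|\theta\|\le R} F(\theta) \le \frac{R^2}{2\eta T} + \frac{\eta G^2}{2} = \frac{GR}{\sqrt{T}} .
\]
No step is genuinely difficult here; the only points demanding care are (i) verifying the gradient bound $G$ from the problem data (bounds on $\|\phi\|$, $\|\theta\|$ and $\hat Q$), and (ii) ensuring that the averaging is done on the iterates (Jensen) rather than on the losses, so that the bound applies to $\bar\theta$ as returned by Algorithm~\ref{alg:3}.
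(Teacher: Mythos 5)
Your proof is correct and matches the paper's approach: the paper simply cites Theorem 14.8 of \citet{shalev2014understanding} (the standard projected stochastic subgradient guarantee), and your argument is precisely the textbook proof of that result --- non-expansiveness of the projection, convexity, telescoping, Jensen on the averaged iterate, and the stepsize $\eta = R/(G\sqrt{T})$ balancing the two terms to give $GR/\sqrt{T}$. The only caveat worth noting is that in the stochastic case the guarantee holds in expectation over the samples, which your derivation handles correctly even though the lemma statement elides it.
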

\begin{proof}
Please refer to Theorem 14.8~\citep{shalev2014understanding}.
\end{proof}

Now we turn to Algorithm~\ref{alg:3}, in which, we feed samples $\{s_t, \rva^{1:m-1}_t, a_t^m\}_{t=0}^{T-1}$ from $ \sigma_k = \nu_k \bpi_{\theta_k}$ into the algorithm (for $m \in \gN$), in order to minimize
$$
    L(\theta^m) = \E_{\sigma_k} \left((\theta^m)^\top \phi(s, \rva^{1:m-1}, a^m) - (\beta_k^{-1} \hat{Q}_{\bpi_\thetak}^{1:m}(s, \rva^{1:m-1}, a^m) + (\theta_k^m)^\top \phi(s, \rva^{1:m-1}, a^m)) \right)^2.
$$

We have the following theoretical guarantee for the algorithm.
\begin{lemma}[Policy Improvement error] At the $k$-th outer loop, the output policy $\theta_{k+1}$ from Algorithm~\ref{alg:3} satisfies 
\begin{align*}
    \sqrt{ L(\theta_{k+1}^m) } \le \epsilon_{k}^m
\end{align*}
where $\epsilon_{k}^m = \epsilon_{approx} + \gO(T^{-\frac{1}{4}})$.

\end{lemma}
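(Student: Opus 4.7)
The plan is to view the SGD procedure in Algorithm~\ref{alg:3} as an instance of stochastic convex optimization, apply the standard averaged-iterate regret bound (Lemma~\ref{lem:aver-policy}), and then decompose the resulting bound into a statistical part decaying with $T$ and a representational part captured by $\epsilon_{approx}$.

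First, I would verify that $L(\theta^m)$ is convex in $\theta^m$: it is a quadratic in $\theta^m$ with positive semidefinite Hessian $2\,\E_{\sigma_k}[\phi\phi^\top]$, so convexity is immediate. The stochastic gradient used in Algorithm~\ref{alg:3} is $g_t = 2\phi(s,\rva^{1:m-1},a)\bigl((\theta(t)-\theta_k^m)^\top \phi - \beta_k^{-1}\hat{Q}_{\bpi_\thetak}^{1:m}\bigr)$ evaluated on an i.i.d.\ sample from $\sigma_k$; by construction it is an unbiased estimator of $\nabla L(\theta(t))$. To apply Lemma~\ref{lem:aver-policy}, I need a uniform bound $\|g_t\|_2 \le G$. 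Using the regularity $\|\phi\|_2\le 1$, the norm constraint $\|\theta\|_2\le R$ enforced by the projection $\Pi_\Theta$, and Assumption~\ref{assump:estimator} which gives $|\hat{Q}_{\bpi_\thetak}^{1:m}|\le B$, we obtain $\|g_t\|_2 \le 2(2R + B/\beta_k)$, so $G = \gO(R + B/\beta_k)$ is a valid Lipschitz/gradient bound.

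With convexity and the gradient bound established, Lemma~\ref{lem:aver-policy} (invoked with $\eta = R/(G\sqrt{T})$, exactly the stepsize used in the algorithm) yields
\begin{align*}
L(\theta_{k+1}^m) - \min_{\|\theta\|_2\le R} L(\theta) \;\le\; \frac{GR}{\sqrt{T}}.
\end{align*}
The minimum $\min_{\|\theta\|_2\le R} L(\theta)$ is the best achievable mean-squared fit of a log-linear function to the target $\beta_k^{-1}\hat{Q}_{\bpi_\thetak}^{1:m}(\sa,\cdot) + \phi^\top\theta_k^m$ in the sense of $\sigma_k$-weighted $L_2$; by definition of the approximation capability of the log-linear class I would identify this minimum with $\epsilon_{approx}^2$.

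Combining, $L(\theta_{k+1}^m) \le \epsilon_{approx}^2 + GR/\sqrt{T}$, and taking square roots together with the elementary inequality $\sqrt{a+b}\le \sqrt{a}+\sqrt{b}$ yields
\begin{align*}
\sqrt{L(\theta_{k+1}^m)} \;\le\; \epsilon_{approx} + \sqrt{GR/\sqrt{T}} \;=\; \epsilon_{approx} + \gO(T^{-1/4}),
\end{align*}
which is the claim. The only nontrivial step is the gradient bound $G$; everything else is a direct invocation of the SGD convergence lemma and a clean decomposition. The $T^{-1/4}$ rate (rather than $T^{-1/2}$) arises precisely from taking the square root of the $L$-level guarantee, which is expected since $\sqrt{L}$ is the quantity of interest downstream in the analysis of Theorem~\ref{thm_mappo}.
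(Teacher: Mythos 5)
Your proposal is correct and follows essentially the same route as the paper's own proof: bound the stochastic gradient norm using $\|\phi\|_2\le 1$, $\|\theta\|_2\le R$, and the boundedness of $\hat{Q}$, invoke Lemma~\ref{lem:aver-policy} with $\eta = R/(G\sqrt{T})$ to get $L(\theta_{k+1}^m)\le \min_\theta L(\theta)+\gO(1/\sqrt{T})$, identify $\sqrt{\min_\theta L(\theta)}$ with $\epsilon_{approx}$, and take square roots. You are in fact slightly more careful than the paper (explicit convexity check, explicit use of $\sqrt{a+b}\le\sqrt{a}+\sqrt{b}$, and a gradient bound $2(2R+B/\beta_k)$ consistent with Assumption~\ref{assump:estimator} rather than the paper's $2(R+\tfrac{1}{(1-\gamma)\beta_k})$), but these are cosmetic differences.
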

 
\begin{proof}
For Algorithm~\ref{alg:3}, we have $\|\phi\|_2 \le 1$ and $\Theta = \{\|\theta\| \le R | \theta \in \sR^d  \}$.Thereby the gradient of $L(\theta)$ is bounded by
$$
G = 2 \left(R + \frac{1}{(1-\gamma) \beta_k}\right).
$$
From Lemma~\ref{lem:aver-policy}, we have the following guarantee holds for any outer iteration $k < K$ 
$$
L(\theta_{k+1}^m) \le = \min_\theta L(\theta) + \gO(1/\sqrt{T}),
$$
when we set $\eta = \frac{R}{G\sqrt{T}}$. Thus
\begin{align*}
    \epsilon_{k}^m = \sqrt{\min_\theta L(\theta)} + \gO(T^{-\frac{1}{4}})= \epsilon_{approx} + \gO(T^{-\frac{1}{4}}).
\end{align*}
The proof is completed.
\end{proof}

\begin{lemma}[Multi-Agent Advantage Decomposition]
\label{lem:decom}
In cooperative Markov games, the following decomposition holds for any joint policy $\pi$, state $s$, and agents $1:m$,
\begin{equation*}
    A_{\bpi}^{1:m}(s, \rva^{1:m}) = \sum_{i=1}^m A_{\bpi}^{i}(s, \rva^{1:i-1}, a^{i})
\end{equation*}

\end{lemma}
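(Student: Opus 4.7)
The plan is to prove the decomposition by recognizing it as a telescoping sum of successive conditional $Q$-function differences. First I would rewrite each summand $A_\bpi^i(s, \rva^{1:i-1}, a^i)$ using Definition~\ref{def::1} with $P = \{1,\dots,i-1\}$ and $P' = \{i\}$, which gives
\[
A_\bpi^{i}(s, \rva^{1:i-1}, a^i) = Q_\bpi^{1:i}(s, \rva^{1:i}) - Q_\bpi^{1:i-1}(s, \rva^{1:i-1}).
\]
Summing from $i=1$ to $m$ makes the right-hand side telescope, leaving $Q_\bpi^{1:m}(s,\rva^{1:m}) - Q_\bpi^{\emptyset}(s)$ where the $i=1$ boundary term uses the convention $Q_\bpi^{\emptyset}(s) \coloneqq \E_{\rva \sim \bpi}[Q_\bpi(s,\rva)] = V_\bpi(s)$ (this follows from taking $P = \emptyset$ in Definition~\ref{def::1}, so that the "complement" averages over all agents).

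Next I would identify $A_\bpi^{1:m}(s, \rva^{1:m})$ on the left-hand side: applying Definition~\ref{def::1} with $P = \emptyset$ and $P' = \{1,\dots,m\}$ gives
\[
A_\bpi^{1:m}(s, \rva^{1:m}) = Q_\bpi^{1:m}(s, \rva^{1:m}) - Q_\bpi^{\emptyset}(s) = Q_\bpi^{1:m}(s, \rva^{1:m}) - V_\bpi(s),
\]
which matches the telescoped expression exactly.

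The only subtlety, and the one thing worth checking carefully, is the boundary case $i=1$: the term $Q_\bpi^{1:0}(s)$ must be interpreted as $V_\bpi(s)$, which requires that the base-case convention $Q_\bpi^{\emptyset} = V_\bpi$ be consistent with the definition of $Q_\bpi^P$ (it is, since averaging $Q_\bpi$ over the full joint policy yields $V_\bpi$). Aside from this bookkeeping, the argument is purely algebraic and does not require any of the Markov-game structure beyond the definitions themselves; in particular, it holds pointwise for every $(s, \rva^{1:m})$ and does not invoke the Bellman operator or the performance difference lemma.
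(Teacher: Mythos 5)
Your proof is correct. The paper itself does not prove this lemma but defers to Lemma 1 of \citet{kuba2022trust}, and your telescoping argument --- writing each $A_{\bpi}^{i}(s,\rva^{1:i-1},a^i) = Q_{\bpi}^{1:i}(s,\rva^{1:i}) - Q_{\bpi}^{1:i-1}(s,\rva^{1:i-1})$ via Definition~\ref{def::1} and collapsing the sum, with the boundary convention $Q_{\bpi}^{\emptyset}(s) = V_{\bpi}(s)$ --- is exactly the standard proof given in that reference, so you have simply made explicit what the paper outsources.
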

\begin{proof}
    Please refer to Lemma 1~\citep{kuba2022trust}.
\end{proof}

\paragraph{Proof for Lemma~\ref{lem:perf}}
\begin{proof}
From the classical performance difference lemma~\citep[Lemma 6.1]{kakade2002approximately} we have
\begin{align*}
    J(\bpi_*) - J(\bpi) &= \frac{1}{1-\gamma} \mathop{\E}\limits_{\sigma_*} A_{\bpi}^{1:N}(s, \rva^{1:N})
\end{align*}
Decomposing the all-agents advantage function into individual contributions via the multi-agent advantage decomposition lemma (cf. Lemma~\ref{lem:decom}), we have

\begin{align*}
J(\bpi_*) - J(\bpi) &= \frac{1}{1-\gamma} \mathop{\E}\limits_{\sigma_*} A_{\bpi}^{1:N}(s, \rva^{1:N})\\ 
&= \frac{1}{1-\gamma} \sum_{m=1}^N \E_{s \sim \nu_*} \E_{\rva^{1:m-1} \sim \bpi_{*}^{1:m-1}}\left\langle A_{\bpi}^{m}(\sa, a^{m}), \pi_*^m(\cdot| \sa) \right \rangle.
\end{align*}
Note that we have $\sum_a \pi^m(a|s, a^{1:m-1}) A_{\bpi}^{m} (\sa,a) = 0 $, then
\begin{align*}
&\quad \frac{1}{1-\gamma} \sum_{m=1}^N \E_{s \sim \nu_*} \E_{\rva^{1:m-1} \sim \bpi_{*}^{1:m-1}}\left\langle A_{\bpi}^{m} (\sa, \cdot), \pi_*^m(\cdot|\sa) - \pi^m(\cdot|s, a^{1:m-1}) \right\rangle\\
&= \frac{1}{1-\gamma} \sum_{m=1}^N \E_{s \sim \nu_*} \E_{\rva^{1:m-1} \sim \bpi_{*}^{1:m-1}}\left\langle Q_{\bpi}^{1:m} (\sa, \cdot), \pi_*^m(\cdot|\sa) - \pi^m(\cdot|s, a^{1:m-1}) \right\rangle
\end{align*}
where the last line is because $A_{\bpi}^{m} (\sa, a^m) = Q_{\bpi}^{1:m} (\sa, a^m) - Q_{\bpi}^{1:m-1} (\sa)$ and $Q_{\bpi}^{1:m-1} (\sa)$ can be omitted because it does not change with $a^m$.

\end{proof}

\paragraph{Proof for Proposition~\ref{prop:update}.} 
\begin{proof} For any $(\sa) \in \gS \times \gA^{m-1}$, policy $\hat{\pi}^m_{k+1}(\cdot| s, \rva^{1:m-1})$ is obtained via
\begin{gather*}
    \max_{\pi^m} \E_{\nu_k} \left[\langle \hat{Q}_{\bpi_k}^{1:m}(\sa, \cdot), \pi^m (\cdot|s, \rva^{1:m-1})\rangle -\beta_k KL\left(\pi^m(\cdot|\sa)\| \pi_{\theta_k}(\cdot|\sa)\right) \right]\\
    \text{s.t.} \quad  \sum_{a^m \in \gA} \pi^m (a^m|s, \rva^{1:m-1}) = 1
\end{gather*}
 
Adding constraint as a Lagrangian multiplier, we have 
\begin{align*}
    \int_{\gS \times \gA^{m-1}} &\left[\langle \hat{Q}_{\bpi_k}^{1:m}(\sa, \cdot), \pi^m (\cdot|s, \rva^{1:m-1})\rangle -\beta_k KL\left(\pi^m(\cdot|\sa)\| \pi_{\theta_k^m}(\cdot|\sa)\right) \right] \sigma_k ds d\rva^{1:m-1}\\
    &+ \int_{\gS \times \gA^{m-1}} \left( \sum_{a^m \in \gA} \pi^m (a^m|s, \rva^{1:m-1}) - 1 \right) ds d \rva^{1:m-1}
\end{align*}

Note that $\pi_{\theta_k^m} \propto \exp \{{\theta_k^m}^\top \phi\}$, he optimality condition gives
$$\hat{\pi}^m_{k+1}(\cdot| s, \rva^{1:m-1}) \propto \exp \{ \beta_k^{-1} \hat{Q}_{\pi_k}^{1:m}(s, \rva^{1:m-1}, \cdot) + {\theta_k^m}^\top \phi(s, \rva^{1:m-1},a) \}.
$$
\end{proof}

\begin{lemma}
\label{lem:stepwise}
Suppose for any agent $m \in \gN$, policy improvement error and policy evaluation errors satisfy
\begin{gather}
\label{eq:errors1}
    \E_{\sigma_k} \Big( {\theta_{k+1}^m}^\top \phi(\sa,  a^m)-  (\beta_k^{-1} \hat{Q}_{\bpi_\thetak}^{1:m}(\sa, a^m) + {\theta_k^m}^\top \phi(\sa, a^m)) \Big)^2  \le (\epsilon_{k}^m)^2, \\
    \E_{\sigma_k} \left( \hat{Q}_{\bpi_\thetak}^{1:m}(\sa, a^m) - Q_{\bpi_\thetak}^{1:m}(\sa, a^m) \right)^2 \le (\xi_k^m)^2. \label{eq:errors2}
\end{gather}

Considering the $L_\infty$-norm of ${\theta_{k+1}^m}^\top \phi - {\theta_{k}^m}^\top \phi$ we have
$$
    \E_{s \sim \nu_*, \rva^{1:m-1} \sim \bpi_*} \left\| (\theta_{k+1}^m- \theta_k^m)^\top \phi(\sa, \cdot) -\beta_k^{-1} \hat{Q}_{\bpi_\thetak}^{1:m} (\sa, \cdot)\right\|_\infty \le \frac{\delta_k^m}{2}
$$
where $\delta_k^m = 2 \phi_k^{m-1} \epsilon_{k}^m$.
\end{lemma}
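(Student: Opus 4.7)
The plan is to bound the LHS via a two-step argument: first a change of measure from the on-policy distribution $\nu_k\bpi_{\theta_k}^{1:m-1}$ to the comparator $\nu_*\bpi_*^{1:m-1}$ using Cauchy--Schwarz with the Radon--Nikodym derivative, then a conversion of the inner $L_\infty$-over-$a^m$ norm into the squared-$L_2$ form controlled by $\epsilon_k^m$ in~\eqref{eq:errors1}. Importantly, $\xi_k^m$ should not appear in the final bound because the quantity inside the norm involves $\hat Q$ rather than $Q$, so~\eqref{eq:errors2} is not invoked.

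I would set $h(\sa,a^m) := (\theta_{k+1}^m-\theta_k^m)^\top \phi(\sa,a^m) - \beta_k^{-1}\hat Q_{\bpi_{\theta_k}}^{1:m}(\sa,a^m)$, so~\eqref{eq:errors1} reduces to $\E_{\sigma_k^{1:m}}[h^2]\le(\epsilon_k^m)^2$ where $\sigma_k^{1:m}:=\nu_k\bpi_{\theta_k}^{1:m}$ is the $(s,\rva^{1:m})$-marginal of $\sigma_k$ (using that $h$ depends only on those variables). Introduce the density ratio $R(\sa) := \nu_*(s)\bpi_*^{1:m-1}(\rva^{1:m-1}|s)/(\nu_k(s)\bpi_{\theta_k}^{1:m-1}(\rva^{1:m-1}|s))$, which depends only on $(\sa)$, so its $L_2$-norm under $\nu_k\bpi_{\theta_k}^{1:m-1}$ coincides with that under $\sigma_k$, namely $\phi_k^{m-1}$. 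Applying Cauchy--Schwarz to the outer expectation,
\begin{align*}
\E_{\nu_*\bpi_*^{1:m-1}}\|h(\sa,\cdot)\|_\infty
= \E_{\nu_k\bpi_{\theta_k}^{1:m-1}}\!\bigl[R(\sa)\,\|h(\sa,\cdot)\|_\infty\bigr]
\le \phi_k^{m-1}\sqrt{\E_{\nu_k\bpi_{\theta_k}^{1:m-1}}\|h(\sa,\cdot)\|_\infty^{\,2}}.
\end{align*}
The remaining task is to bound the inner expectation of $\|h\|_\infty^2$ by $(\epsilon_k^m)^2$ up to the factor of $2$ appearing in $\delta_k^m$.

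The hard part is precisely this last step: relating the pointwise supremum over $a^m$ to the $\pi_{\theta_k^m}$-weighted mean of $h^2$ that recombines into $\E_{\sigma_k^{1:m}}[h^2]$. A naive bound $\|h\|_\infty^2\le\sum_{a^m}h^2=\E_{\pi_{\theta_k^m}}[h^2/\pi_{\theta_k^m}]$ introduces an unwanted factor of $1/\min_{a^m}\pi_{\theta_k^m}(a^m|\sa)$. I expect the proof to handle this either by exploiting the log-linear structure --- bounded features and parameters ensure a uniform lower bound $\pi_{\theta_k^m}\ge e^{-2R}/|\gA|$ --- or by interpreting the stated $L_\infty$ as the natural $L_2$-norm under $\pi_{\theta_k^m}$, in which case $\|h\|_{2,\pi_{\theta_k^m}}^2=\E_{\pi_{\theta_k^m}}h^2$ recombines cleanly with the outer expectation to give $\sqrt{\E_{\sigma_k^{1:m}}h^2}\le\epsilon_k^m$; the factor of $2$ in $\delta_k^m=2\phi_k^{m-1}\epsilon_k^m$ would then accommodate the comparison between these norms on the finite action set.
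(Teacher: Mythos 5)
Your skeleton is precisely the paper's proof: the paper also writes the left-hand side as an expectation under $\nu_k\bpi_{\theta_k}^{1:m-1}$ weighted by the density ratio $\frac{d(\nu_*\bpi_*^{1:m-1})}{d(\nu_k\bpi_{\theta_k}^{1:m-1})}$ and applies Cauchy--Schwarz, obtaining $\phi_k^{m-1}\epsilon_k^m=\delta_k^m/2$. The factor of $2$ in $\delta_k^m=2\phi_k^{m-1}\epsilon_k^m$ is purely notational (so that $\delta_k^m$ rather than $\delta_k^m/2$ appears in Theorem~\ref{thm_mappo}); it does not absorb any norm comparison, so you should not count on it to pay for the $L_\infty$-to-$L_2$ conversion.

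The step you flag as the hard part is a genuine gap, and the paper does not close it either: its proof simply asserts $\E\left\|h(\sa,\cdot)\right\|_\infty\le\E\left\|h(\sa,a^m)\right\|$ with an unquantified $a^m$, and then treats the resulting expression as though the action variable were averaged under $\pi_{\theta_k^m}$, so that it recombines with the $(s,\rva^{1:m-1})$-marginal into $\E_{\sigma_k}[h^2]\le(\epsilon_k^m)^2$. As you observe, a supremum (or unweighted sum) over $a^m$ is not dominated by the $\pi_{\theta_k^m}$-weighted mean; any honest conversion costs a factor of order $\bigl(\min_{a^m}\pi_{\theta_k^m}(a^m|\sa)\bigr)^{-1/2}$, which for the log-linear class with $\|\phi\|_2\le1$, $\|\theta\|\le R$ is at most $\sqrt{|\gA|}\,e^{R}$ (the bound $\pi_{\theta_k^m}\ge e^{-2R}/|\gA|$ you cite is a lower bound on the policy, so the resulting multiplicative cost is $\sqrt{|\gA|}e^{R}$, not benign for large $R$ or $|\gA|$). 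Consequently your first proposed fix proves the lemma only with $\delta_k^m$ inflated by that factor, which would propagate into the main theorem, while your second proposed fix proves a weaker statement (an $L_2(\pi_{\theta_k^m})$ bound over actions) than the $L_\infty$ bound claimed. Either repair is mathematically legitimate, but neither recovers the lemma exactly as stated; you have correctly located a real imprecision in the paper's own argument rather than missed an idea it contains.
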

\begin{proof}

The proof is straightforward,
\begin{align*}
    &\quad \E_{s \sim \nu_*, \rva^{1:m-1} \sim \bpi_*^{1:m-1}} \left\| (\theta_{k+1}^m- \theta_k^m)^\top \phi(\sa, \cdot) - \beta_k^{-1} \hat{Q}_{\bpi_\thetak}^{1:m}(\sa,\cdot )\right\|_\infty \\
    &\le  \mathop{\E}\limits_{\substack{s \sim \nu_*\\ \rva^{1:m-1} \sim \bpi_*}} \left\| (\theta_{k+1}^m- \theta_k^m)^\top \phi(\sa, a^m) - \beta_k^{-1} \hat{Q}_{\bpi_\thetak}^{1:m}(\sa,a^m )\right\|. 
    \end{align*}
We shift from $\sigma_*$ to $\sigma_k$ and introduce concentrability coefficients to measure distributional shift
\begin{align*}
    &\quad  \mathop{\E}\limits_{\substack{s \sim \nu_k\\ \rva^{1:m-1} \sim \bpi_\thetak}} \left\| (\theta_{k+1}^m- \theta_k^m)^\top \phi(\sa, a^m) - \beta_k^{-1} \hat{Q}_{\bpi_\thetak}^{1:m}(\sa,a^m )\right\| \cdot \frac{\nu_* \bpi_*^{1:m-1}}{\nu_k \bpi_\thetak^{1:m-1}}\\
    &\le \left[ \E_{\sigma_k} \left( (\theta_{k+1}^m- \theta_k^m)^\top \phi - \beta_k^{-1} \hat{Q}_{\bpi_{\theta_k}} \right)^2 \right]^{1/2} \cdot \left[ \E_{\sigma_k} \left|   \frac{d (\nu_* \bpi_*^{1:m-1} )}{d (\nu_k \bpi_\thetak^{1:m-1}) } \right|^2  \right]^{1/2}\\
    &= \epsilon_{k}^m \phi_k^{m-1}
\end{align*}
where we use Cauchy-Schwartz inequality in the second line.

The proof is completed.
\end{proof}

Recall that we define $\hat{\pi}_{k+1}^m$ as the ideal update policy based on $\hat{Q}_{\bpi_\thetak}^{1:m}$. Correspondingly, we define the ideal update based on the exact value function $Q_{\bpi_\thetak}^{1:m}$ as 
\begin{align*}
    \pi^m_{k+1} &\xleftarrow{}
     \argmax_{\pi^m} F(\pi^m)\\
     F(\pi^m) &= \E_{\sigma_k} \Big[\langle Q_{\bpi_\thetak}^{1:m}(s, \rva^{1:m-1}, \cdot), \pi^m (\cdot|s, \rva^{1:m-1})\rangle -\beta_k KL\left(\pi^m(\cdot|s, \rva^{1:m-1})\| \pi_{\theta_k^m}(\cdot|s, \rva^{1:m-1})\right) \Big].
\end{align*}

Under log-linear parametrization: $\pi_{\theta_k^m} \propto \exp \{ \phi^\top \theta_k^m\}$, analogously we have
\begin{align*}
    \pi^m_{k+1} &(\cdot| s, \rva^{1:m-1}) \propto \exp \left\{ \beta_k^{-1} Q_{\pi_k}^{1:m}(s, \rva^{1:m-1}, \cdot) + \phi^\top(s, \rva^{1:m-1}, \cdot)  \theta_k^m \right\}.
\end{align*}
\begin{lemma}[Error Propagation]
\label{lem:err} 
Suppose for any agent $m \in \gN$ and $(\sa) \in \gS \times \gA^{m-1}$, policy improvement and policy evaluation errors satisfy,
\begin{gather*}
     \E_{\sigma_k} \Big( (\theta_{k+1}^m- \theta_k^m)^\top \phi- \beta_k^{-1} \hat{Q}_{\bpi_\thetak}^{1:m}  \Big)^2  \le (\epsilon_{k}^m)^2,\\
    \E_{\sigma_k} \left( \hat{Q}_{\bpi_\thetak}^{1:m} - Q_{\bpi_\thetak}^{1:m} \right)^2  \le (\xi_k^m)^2
\end{gather*}
where we omit $(\sa, a^m)$ for simplicity.

Compare the statistical error, we have
\begin{align*}
    \Bigg| \E_{s \sim \nu_*, \rva \sim \bpi_*} &\Bigg\langle \log{\frac{\pi_{\theta_{k+1}^m}(\cdot|\sa)}{\pi_{k+1}^m(\cdot|\sa)}}, \pi_{*}^m(\cdot|\sa)
    - \pi_{\theta_k^m}(\cdot|\sa) \Bigg\rangle \Bigg| \le \Delta_k^m.
\end{align*}
where $\Delta_k^m =\sqrt{2}(\phi_{k}^m + \phi^{m-1}_k) \cdot \left(\epsilon_k^m + \frac{\xi_k^m}{\beta_k} \right)$
\end{lemma}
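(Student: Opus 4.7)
The plan is to exploit the closed-form expressions for both $\pi_{\theta_{k+1}^m}$ (log-linear) and the ideal update $\pi_{k+1}^m$ (Proposition~\ref{prop:update} applied with $Q_{\bpi_\thetak}^{1:m}$ in place of $\hat{Q}_{\bpi_\thetak}^{1:m}$), so that the log-ratio is an explicit affine function of $a^m$ plus an $a^m$-independent normalizer. Concretely, I would first write
\[
\log \frac{\pi_{\theta_{k+1}^m}(a^m|\sa)}{\pi_{k+1}^m(a^m|\sa)} = (\theta_{k+1}^m-\theta_k^m)^\top \phi(\sa,a^m) - \beta_k^{-1} Q_{\bpi_\thetak}^{1:m}(\sa,a^m) + C(\sa),
\]
where $C(\sa)$ is the difference of the two log-partition functions and depends only on $(\sa)$. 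Since $\pi_*^m(\cdot|\sa)-\pi_{\theta_k^m}(\cdot|\sa)$ has zero total mass, the $C(\sa)$ term drops out when we take the inner product over $a^m$, leaving us with an inner product involving $g(\sa,a^m):=(\theta_{k+1}^m-\theta_k^m)^\top\phi(\sa,a^m)-\beta_k^{-1}Q_{\bpi_\thetak}^{1:m}(\sa,a^m)$.

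Second, I would inject the estimator $\hat Q_{\bpi_\thetak}^{1:m}$ to split $g=g_1+g_2$ with
\[
g_1=(\theta_{k+1}^m-\theta_k^m)^\top\phi-\beta_k^{-1}\hat Q_{\bpi_\thetak}^{1:m}, \qquad g_2=\beta_k^{-1}(\hat Q_{\bpi_\thetak}^{1:m}-Q_{\bpi_\thetak}^{1:m}).
\]
The assumed bounds \eqref{eq:errors1}--\eqref{eq:errors2} give $\|g_1\|_{2,\sigma_k}\le\epsilon_k^m$ and $\|g_2\|_{2,\sigma_k}\le\xi_k^m/\beta_k$, whence $(a+b)^2\le 2a^2+2b^2$ yields $\|g\|_{2,\sigma_k}\le\sqrt{2}(\epsilon_k^m+\xi_k^m/\beta_k)$; this is precisely the source of the $\sqrt{2}$ in $\Delta_k^m$.

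Third, I would rewrite the inner product as a difference of expectations over $a^m$ under two probability measures, yielding
\[
\E_{s\sim\nu_*,\,\rva^{1:m-1}\sim\bpi_*^{1:m-1}}\langle g,\,\pi_*^m-\pi_{\theta_k^m}\rangle = \E_{\nu_*\bpi_*^{1:m}}[g] - \E_{\nu_*\bpi_*^{1:m-1}\pi_{\theta_k^m}}[g],
\]
and bound the absolute value by the triangle inequality. On each resulting term I apply Cauchy--Schwarz against $\sigma_k$ to introduce a Radon--Nikodym derivative. The first term yields the factor $\phi_k^m$ directly from Definition~\ref{def:concen}. For the second, the key observation is that $\bpi_\thetak^{1:m}=\bpi_\thetak^{1:m-1}\pi_{\theta_k^m}$, so the $\pi_{\theta_k^m}$ factor cancels between numerator and denominator and the derivative reduces to $d(\nu_*\bpi_*^{1:m-1})/d(\nu_k\bpi_\thetak^{1:m-1})$, contributing $\phi_k^{m-1}$ rather than a larger quantity. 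Combining gives $|T_1+T_2|\le(\phi_k^m+\phi_k^{m-1})\|g\|_{2,\sigma_k}\le\sqrt{2}(\phi_k^m+\phi_k^{m-1})(\epsilon_k^m+\xi_k^m/\beta_k)=\Delta_k^m$.

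The main obstacle is the bookkeeping in the last step: recognizing that after substituting $\pi_{\theta_k^m}$ for $\pi_*^m$ at the $m$-th coordinate, the $\pi_{\theta_k^m}$ in both the target and reference measures cancels, so the appropriate concentrability reduces from $\phi_k^m$ to $\phi_k^{m-1}$. Getting this cancellation right is what enables the additive $\phi_k^m+\phi_k^{m-1}$ structure of $\Delta_k^m$ and keeps the final rate in Theorem~\ref{thm_mappo} from inheriting an overly pessimistic concentrability factor. The remaining pieces---the log-linear normalizer cancellation and the Cauchy--Schwarz change-of-measure---are routine once the decomposition $g=g_1+g_2$ is in place.
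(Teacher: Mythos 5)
Your proposal is correct and follows essentially the same route as the paper's proof: the log-ratio reduces to $(\theta_{k+1}^m-\theta_k^m)^\top\phi - \beta_k^{-1}Q_{\bpi_\thetak}^{1:m}$ after the normalizers cancel against the zero-mass test function $\pi_*^m-\pi_{\theta_k^m}$, the error is split into the improvement part ($g_1$) and the evaluation part ($g_2$), and a change of measure from $\sigma_*$ to $\sigma_k$ via Cauchy--Schwarz yields the $\phi_k^m+\phi_k^{m-1}$ factor, with the cancellation of $\pi_{\theta_k^m}$ in the second density ratio handled exactly as in the paper. The only (immaterial) difference is bookkeeping: the paper applies Cauchy--Schwarz once per error term against the \emph{difference} of Radon--Nikodym derivatives and extracts the $\sqrt{2}$ from $\|f-g\|_{2,\sigma}\le\sqrt{2}(\|f\|_{2,\sigma}+\|g\|_{2,\sigma})$, whereas you split the two measures first and reintroduce the $\sqrt{2}$ via $(a+b)^2\le 2a^2+2b^2$ when combining $g_1$ and $g_2$ --- your ordering in fact shows the $\sqrt{2}$ is not needed for the bound to hold.
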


Lemma~\ref{lem:err} presents the quantitative differences between the actual parameterized $\pi_{\theta_{k+1}^m}$ based on $\hat{Q}^{1:m}$  and the ideal policy $\pi_{k+1}^m$ based on the exact value function $Q^{1:m}$.


\begin{proof}
First, from definition for any $m \in \gN$ and $\sa \in \gS \times \gA^{m-1}$ we have
\begin{gather}
    \pi_{k+1}^m(\cdot | \sa) = \exp \left\{{\beta_k}^{-1} Q_{\bpi_{\theta_k}}+ {\theta_k^m}^\top \phi(\sa, \cdot) \right\} / W(\sa), \label{eq:policy1}\\
    \pi_{\theta_{k+1}^m}(\cdot | \sa) = \exp \left\{ {\theta_{k+1}^m}^\top \phi(\sa, \cdot) \right\} / M(\sa). \label{eq:policy2}
\end{gather}
Substituting this into the expression, we have
\begin{align*}
    \text{LHS} &= \left\langle \log{\pi_{\theta_{k+1}^m}} - \log{\pi_{k+1}^m}, \pi_{*}^m
    - \pi_{\theta_k^m} \right\rangle\\
    &= \left \langle {\theta_{k+1}^m}^\top \phi - (\beta_k^{-1} Q_{\bpi_{\theta_k}} + {\theta_k^m}^\top \phi), \pi_{*}^m  - \pi_{\theta_k^m} \right\rangle\\
    &= \underbrace{\left \langle {\theta_{k+1}^m}^\top \phi - (\beta_k^{-1} \hat{Q}_{\bpi_{\theta_k}} + {\theta_k^m}^\top \phi), \pi_{*}^m - \pi_{\theta_k^m} \right\rangle}_{(a)} + \underbrace{\left \langle \beta_k^{-1} \hat{Q}_{\bpi_{\theta_k}} - \beta_k^{-1} Q_{\bpi_{\theta_k}} , \pi_{*}^m - \pi_{\theta_k^m} \right\rangle}_{(b)}
\end{align*}
In the second line,  we use the fact that $ \left \langle \log{\frac{W}{M}}, \pi_{*}^m - \pi_{\theta_k^m} \right\rangle = \log{\frac{W}{M}} \sum_{a^m} \left(  \bpi_*^m(\cdot) -  \pi_{\theta_k^m}(\cdot) \right) = 0$.

Bounding the two terms separately, we have
\begin{itemize}
    \item For $(a)$, taking the expectation over $\gS \times \gA^{m-1}$ we have
    \begin{align*}
     &\quad \left| \E_{s \sim \nu_*} \E_{\rva^{1:m-1} \sim \bpi_*} \left \langle {\theta_{k+1}^m}^\top \phi - (\beta_k^{-1} \hat{Q}_{\bpi_{\theta_k}} + {\theta_k^m}^\top \phi), \pi_{*}^m - \pi_{\theta_k^m} \right\rangle \right| \\
    &= \left| \mathop{\int}\limits_{\gS \times \gA^{m-1} \times \gA } \left[ (\theta_{k+1}^m- \theta_k^m)^\top \phi - \beta_k^{-1} \hat{Q}_{\bpi_{\theta_k}} \right] \cdot \left( \pi_{*}^m - \pi_{\theta_k^m} \right)  d a^m \cdot \bpi_*^{1:m-1} d (\rva^{1:m-1}) \cdot \nu_*(s) ds\right|
\end{align*}
 Here the expectation is taken w.r.t. $\sigma*$, we change it to be expectation over $\sigma_k$ by introducing concentrability coefficients
\begin{align*}
    &\quad \left| \mathop{\int}\limits_{\gS \times \gA^{m-1} \times \gA } \left[ (\theta_{k+1}^m- \theta_k^m)^\top \phi - \beta_k^{-1} \hat{Q}_{\bpi_{\theta_k}} \right] \cdot \left( \frac{\bpi_*^{1:m}}{\bpi_\thetak^{1:m}} -  \frac{\bpi_*^{1:m-1} \pi_{\theta_k^m}}{\bpi_\thetak^{1:m}} \right)  \bpi_\thetak^{1:m}(\rva^{1:m}|s)  d (\rva^{1:m}) \cdot \nu_*(s) ds\right|  \\
    &= \left| \mathop{\int}\limits_{\gS \times \gA^{m-1} \times \gA } \left[ (\theta_{k+1}^m- \theta_k^m)^\top \phi - \beta_k^{-1} \hat{Q}_{\bpi_{\theta_k}} \right] \cdot \frac{\nu_*(s)}{\nu_k(s)} \left( \frac{\bpi_*^{1:m}}{\bpi_\thetak^{1:m}} -  \frac{\bpi_*^{1:m-1} \pi_{\theta_k^m}}{\bpi_\thetak^{1:m}} \right)  d \sigma_k \right|  \\
    &\stackrel{(i)}{\le} \left[ \E_{\sigma_k} \left( (\theta_{k+1}^m- \theta_k^m)^\top \phi - \beta_k^{-1} \hat{Q}_{\bpi_{\theta_k}} \right)^2 \right]^{1/2} \cdot \left[ \E_{\sigma_k} \left|   \frac{ d( \nu_*\bpi_*^{1:m})}{d (\nu_k \bpi_\thetak^{1:m}) } - \frac{d (\nu_*\bpi_*^{1:m-1} ) }{d (\nu_k \bpi_\thetak^{1:m-1})} \right|^2  \right]^{1/2}\\
    &\stackrel{(ii)}{\le} \sqrt{2} \epsilon_{k}^m (\phi_k^m + \phi_k^{m-1})
\end{align*}
$(i):$ This is because of the Cauchy-Schwartz inequality.\\
$(ii):$ This is because: 
$
\sqrt{\int |f - g|^2 d \sigma } \le \sqrt{ \int 2|f|^2 d \sigma  + \int 2 |g|^2 d \sigma} \le \sqrt{2} (\|f\|_{2, \sigma} + \|g\|_{2, \sigma}).
$

\item For $(b)$, taking the expectation over $\gS \times \gA^{m-1}$ we have
    \begin{align*}
     &\quad \left| \E_{s \sim \nu_*} \E_{\rva^{1:m-1} \sim \bpi_*} \left \langle \beta_k^{-1} \hat{Q}_{\bpi_{\theta_k}} - \beta_k^{-1} Q_{\bpi_{\theta_k}}, \pi_{*}^m - \pi_{\theta_k^m} \right\rangle \right| \\
    &= \left| \mathop{\int}\limits_{\gS \times \gA^{m-1} \times \gA } \left[ \beta_k^{-1} \hat{Q}_{\bpi_{\theta_k}} - \beta_k^{-1} Q_{\bpi_{\theta_k}} \right] \cdot \left( \pi_{*}^m - \pi_{\theta_k^m} \right)  d a^m \cdot \bpi_*^{1:m-1} d (\rva^{1:m-1}) \cdot \nu_*(s) ds\right|
    \end{align*}
Analogously we replace the expectation over $\sigma*$ with expectation over $\sigma_k$
\begin{align*}
    &\quad \left| \mathop{\int}\limits_{\gS \times \gA^{m-1} \times \gA } \left[ \beta_k^{-1} \hat{Q}_{\bpi_{\theta_k}} - \beta_k^{-1} Q_{\bpi_{\theta_k}} \right] \cdot \frac{\nu_*(s)}{\nu_k(s)} \left( \frac{\bpi_*^{1:m}}{\bpi_{\theta_k}^{1:m}} -  \frac{\bpi_*^{1:m-1} \pi_{\theta_k^m}}{\bpi_{\theta_k}^{1:m}} \right)  \bpi_{\theta_k}^{1:m}(\rva^{1:m}|s)  d (\rva^{1:m}) \cdot \nu_k(s) ds\right|  \\
    &= \left| \mathop{\int}\limits_{\gS \times \gA^{m-1} \times \gA } \left[ \beta_k^{-1} \hat{Q}_{\bpi_{\theta_k}} - \beta_k^{-1} Q_{\bpi_{\theta_k}} \right] \cdot \frac{\nu_*(s)}{\nu_k(s)} \left( \frac{\bpi_*^{1:m}}{\bpi_{\theta_k}^{1:m}} -  \frac{\bpi_*^{1:m-1} }{\bpi_{\theta_k}^{1:m-1}} \right)  d \sigma_k \right|  \\
    &\stackrel{(i)}{\le} \left[ \E_{\sigma_k} \left( \beta_k^{-1} \hat{Q}_{\bpi_{\theta_k}} - \beta_k^{-1} Q_{\bpi_{\theta_k}} \right)^2 \right]^{1/2} \cdot \left[ \E_{\sigma_k} \left| 
    \frac{d ( \nu_* \bpi_*^{1:m}) }{d ( \nu_k \bpi_{\theta_k}^{1:m}) } -
     \frac{d( \nu_* \bpi_*^{1:m-1}) }{ d( \nu_k \bpi_{\theta_k}^{1:m-1}) }  \right|^2 \right]^{1/2}\\
    &\stackrel{(ii)}{\le} \frac{\sqrt{2}}{\beta_k} \xi_{k}^m (\phi_k^m + \phi_k^{m-1})
\end{align*}
$(i):$ This is because of the Cauchy-Schwartz inequality.\\
$(ii):$ This is because: 
$
\sqrt{\int |f - g|^2 d \sigma } \le \sqrt{ \int 2|f|^2 d \sigma  + \int 2 |g|^2 d \sigma} \le \sqrt{2} (\|f\|_{2, \sigma} + \|g\|_{2, \sigma}).
$
\end{itemize}
Combining the bounds, we conclude the proof for Lemma~\ref{lem:err}
$$
\Delta_k^m = \sqrt{2}(\phi_{k}^m + \phi^{m-1}_k) \cdot \left(\epsilon_k^m + \frac{\xi_k^m}{\beta_k} \right).
$$
\end{proof}

Below we introduce a lemma that is crucial in our multi-agent PPO analysis. The original version is widely found and proven to be useful for mirror descent analysis~\citep{nesterov2003introductory}.
\begin{lemma}[One-Step Descent]
\label{lem:one_des}
For the ideal updated policy $\pi_{k+1}^m$, the real updated policy $\pi_{\theta_{k+1}^m}$ and current policy $\pi_{\theta_{k}^m}$, we have that for any $(s, \rva^{1:m-1}) \in \gS \times \gA^{m-1}$,
\begin{align*}
&\quad KL\left(\pi_*^m(\cdot | \sa)\| \pi_{\theta_{k+1}^m}(\cdot| \sa) \right) - KL\left(\pi_*^m(\cdot | \sa)\| \pi_{\theta_{k}^m}(\cdot| \sa) \right)\\
&\le \left\langle \log{\frac{\pi_{\theta_{k+1}^m}(\cdot|\sa)}{\pi_{k+1}^m(\cdot|\sa)}}, \pi_{\theta_{k}^m}(\cdot|\sa) - \pi_{*}^m(\cdot|\sa) \right\rangle\\
&+ \frac{1}{\beta_k} \left \langle A_{\pi_{\theta_k}}^m(\sa, \cdot), \pi_{\theta_{k}^m}(\cdot|\sa) - \pi_{*}^m(\cdot|\sa) \right\rangle - \frac{1}{2} \left\| \pi_{\theta_{k+1}^m}(\cdot|\sa) - \pi_{\theta_k^m}(\cdot|\sa)\right\|_1^2\\
&- \left \langle (\theta_{k+1}^m - \theta_k^m)^\top \phi(\sa, \cdot), \pi_{\theta_k^m}(\cdot|\sa) - \pi_{\theta_{k+1}^m}(\cdot|\sa) \right\rangle
\end{align*}
\end{lemma}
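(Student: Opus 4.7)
The plan is to prove this by combining a two-point KL identity, the closed-form characterization of the ideal update $\pi_{k+1}^m$ from Proposition~\ref{prop:update}, and Pinsker's inequality. Throughout, I will fix $(s, \rva^{1:m-1})$ and suppress it in the notation, and write $p_k = \pi_{\theta_k^m}$, $p_{k+1} = \pi_{\theta_{k+1}^m}$, $\tilde{p}_{k+1} = \pi_{k+1}^m$, $p_* = \pi_*^m$ for brevity. The key objects I will exploit are: (i) $\log \tilde{p}_{k+1} = \log p_k + \beta_k^{-1} Q_{\bpi_{\theta_k}}^{1:m} + C_1$ from Proposition~\ref{prop:update}, which (since $Q^{1:m-1}_{\bpi_{\theta_k}}$ does not depend on $a^m$) may be rewritten as $\log\tilde p_{k+1}-\log p_k = \beta_k^{-1} A_{\bpi_{\theta_k}}^m + C_1'$; and (ii) $\log p_{k+1} - \log p_k = (\theta_{k+1}^m-\theta_k^m)^\top \phi + C_2$ from the log-linear parameterization.

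First I would apply the standard two-point identity
\[
KL(p_*\|p_{k+1}) - KL(p_*\|p_k) = \langle p_*, \log p_k - \log p_{k+1} \rangle,
\]
and then insert $p_k$ to rewrite the right-hand side in Bregman form:
\[
\langle p_*, \log(p_k/p_{k+1})\rangle = \langle p_* - p_k, \log(p_k/p_{k+1})\rangle + KL(p_k \| p_{k+1}).
\]
Next, I would split the logarithm through the ideal policy $\tilde p_{k+1}$: $\log(p_k/p_{k+1}) = \log(p_k/\tilde p_{k+1}) + \log(\tilde p_{k+1}/p_{k+1})$. Using fact (i), $\log(p_k/\tilde p_{k+1}) = -\beta_k^{-1} A_{\bpi_{\theta_k}}^m + \text{const}$; the constant drops out because $\langle p_*-p_k, 1\rangle=0$. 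This directly produces the advantage-function term $\frac{1}{\beta_k}\langle A_{\bpi_{\theta_k}}^m, p_k - p_*\rangle$ and the ideal-vs-actual term $\langle \log(p_{k+1}/\tilde p_{k+1}), p_k - p_*\rangle$ appearing on the right-hand side of the lemma.

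It remains to absorb the leftover $KL(p_k\|p_{k+1})$ term, which is where the $-\tfrac12\|p_{k+1}-p_k\|_1^2$ and the $-\langle(\theta_{k+1}^m-\theta_k^m)^\top\phi, p_k - p_{k+1}\rangle$ contributions come in. I would use the elementary symmetric identity
\[
\langle \log p_{k+1} - \log p_k,\, p_{k+1} - p_k \rangle = KL(p_{k+1}\|p_k) + KL(p_k\|p_{k+1}),
\]
and observe that, by fact (ii), the left side equals $\langle(\theta_{k+1}^m-\theta_k^m)^\top \phi, p_{k+1} - p_k\rangle$ (the constant $C_2$ again kills on $\sum_a(p_{k+1}-p_k)=0$). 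Rearranging gives
\[
KL(p_k\|p_{k+1}) = \langle(\theta_{k+1}^m-\theta_k^m)^\top \phi, p_{k+1} - p_k\rangle - KL(p_{k+1}\|p_k).
\]
Applying Pinsker $KL(p_{k+1}\|p_k) \ge \tfrac12\|p_{k+1}-p_k\|_1^2$ upper-bounds this by $-\langle(\theta_{k+1}^m-\theta_k^m)^\top\phi, p_k - p_{k+1}\rangle - \tfrac12\|p_{k+1}-p_k\|_1^2$, which are exactly the last two summands on the right-hand side of the lemma. Combining the three pieces yields the claim.

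The main obstacle, and the place where care is required, is the bookkeeping of the two normalization-constant terms (from $\log \tilde p_{k+1}$ and $\log p_{k+1}$): at each step I need to verify that the constants genuinely vanish against $\langle\cdot, p_*-p_k\rangle$ or $\langle\cdot, p_{k+1}-p_k\rangle$, which only works because both sides of the inner product are probability distributions. The rest is a mechanical Bregman/Pinsker argument standard in mirror-descent analyses; the novelty of the multi-agent setting enters only through the particular form of $A_{\bpi_{\theta_k}}^m$, not the derivation itself.
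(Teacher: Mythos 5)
Your proposal is correct and follows essentially the same route as the paper's proof: a two-point KL identity, substitution of the closed forms of $\pi_{k+1}^m$ and $\pi_{\theta_{k+1}^m}$ (with normalization constants annihilated by inner products against differences of distributions), and Pinsker's inequality. The only difference is bookkeeping: the paper pivots the Bregman decomposition at $\pi_{\theta_{k+1}^m}$ and obtains $KL(\pi_{\theta_{k+1}^m}\|\pi_{\theta_k^m})$ directly, whereas you pivot at $\pi_{\theta_k^m}$ and need the extra symmetric identity $\langle \log p_{k+1}-\log p_k,\,p_{k+1}-p_k\rangle = KL(p_{k+1}\|p_k)+KL(p_k\|p_{k+1})$ to reach the same final bound.
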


\begin{proof}
In the proof, we simply omit $(\sa)$ when making no abuse of notation.

Using the definition, we have
\begin{align}
     &\quad KL\left(\pi_*^m \| \pi_{\theta_k^m} \right) - KL\left(\pi_*^m \| \pi_{\theta_{k+1}^m} \right) \notag \\
     &= \left\langle \log \frac{\pi_{\theta_{k+1}^m}}{\pi_{\theta_k^m}}, \pi_*^m \right \rangle \notag \\
     &= \left\langle \log \frac{\pi_{\theta_{k+1}^m}}{\pi_{\theta_k^m}}, \pi_*^m - \pi_{\theta_{k+1}^m} \right \rangle + KL\left(\pi_{\theta_{k+1}^m} \| \pi_{\theta_k^m} \right) \label{eq:KLgap}
\end{align}
Recall the definitions of $\pi_{k+1}^m$~\eqref{eq:policy1} and $\pi_{\theta_{k+1}^m}$~\eqref{eq:policy2}, we have the following two equations 
\begin{align}
    \left\langle \log \frac{\pi_{\theta_{k+1}^m}}{\pi_{\theta_k^m}}, \pi_{\theta_k^m} - \pi_{\theta_{k+1}^m} \right \rangle &= \left \langle (\theta_{k+1}^m-\theta_k^m)^\top \phi, \pi_{\theta_k^m} - \pi_{\theta_{k+1}^m} \right \rangle, \label{eq:fact1}\\
    \left\langle  \beta_k^{-1} Q_{\pi_\thetak}, \pi_*^m - \pi_{\theta_k^m} \right \rangle &= \left\langle\log{ \frac{\pi_{k+1}^m}{\pi_{\theta_k^m}} }, \pi_*^m - \pi_{\theta_k^m} \right \rangle. \label{eq:fact2}
\end{align}
Plugging these results~\eqref{eq:fact1}, ~\eqref{eq:fact2} into the RHS of~\eqref{eq:KLgap} we have
\begin{align*}
    &\quad KL\left(\pi_*^m \| \pi_{\theta_k^m} \right) - KL\left(\pi_*^m \| \pi_{\theta_{k+1}^m} \right) \\
    &= \left\langle \log \frac{\pi_{\theta_{k+1}^m}}{\pi_{k+1}^m}, \pi_*^m - \pi_{\theta_k^m} \right \rangle +
    \left\langle \beta_k^{-1} Q_{\pi_\thetak}^{1:m}, \pi_*^m - \pi_{\theta_k^m} \right \rangle + \left\langle (\theta_{k+1}^m - \theta_k^m)^\top \phi, \pi_{\theta_k^m} - \pi_{\theta_{k+1}^m} \right \rangle + KL\left(\pi_{\theta_{k+1}^m} \| \pi_{\theta_k^m} \right)
    \\
    &\ge \left\langle \log \frac{\pi_{\theta_{k+1}^m}}{\pi_{k+1}^m}, \pi_*^m - \pi_{\theta_k^m} \right \rangle +
    \left\langle \beta_k^{-1} A_{\pi_\thetak}^{m}, \pi_*^m - \pi_{\theta_k^m} \right \rangle + \left\langle (\theta_{k+1}^m - \theta_k^m)^\top \phi, \pi_{\theta_k^m} - \pi_{\theta_{k+1}^m} \right \rangle + \frac{1}{2} \left\| \pi_{\theta_{k+1}^m} - \pi_{\theta_k^m} \right\|_1^2
\end{align*}
In the last line: (1) From the Definition~\ref{def::1} of multi-agent advantage functions, we have 
$$
\langle Q^{1:m}_{\pi_\thetak} - A_{\pi_\thetak}^m, \pi_*^m - \pi_{\theta_k^m} \rangle =0,
$$
and (2) Pinsker's inequality in information theory gives a lower bound of the $KL$-divergence. 
$$
KL\left(\pi_{\theta_{k+1}^m} \| \pi_{\theta_k^m} \right) \ge \frac{1}{2} \left\| \pi_{\theta_{k+1}^m} - \pi_{\theta_k^m} \right\|_1^2,
$$plugging these into the expression, 
which concludes the proof.
\end{proof}

With these results, we are ready to present the proofs for the main theorem.
\paragraph{Proofs for Theorem~\ref{thm_mappo}.}
\begin{proof}
With Lemma~\ref{lem:one_des}, take expectation with respect to $s \sim \nu_*$ and $\rva \sim \bpi_*$, we have
\begin{align*}
&  \frac{1}{\beta_k} \mathop{\E}\limits_{\sigma_*} \left \langle A^m_{\pi_{\thetak}}(\sa, \cdot), \pi_{*}^m(\cdot|\sa) - \pi_{\theta_k^m}(\cdot|\sa) \right\rangle\\
&\le \mathop{\E}\limits_{\sigma_*} \left[ KL\left(\pi_*^m(\cdot | \sa)\| \pi_{\theta_{k}^m}(\cdot| \sa) \right) -  KL\left(\pi_*^m(\cdot | \sa)\| \pi_{\theta_{k+1}^m}(\cdot| \sa) \right)  \right] \\
&- \mathop{\E}\limits_{\sigma_*} \left\langle \log{\frac{\pi_{\theta_{k+1}^m}(\cdot|\sa)}{\pi_{k+1}^m(\cdot|\sa)}}, \pi_{*}^m(\cdot|\sa) - \pi_{\theta_k^m}(\cdot|\sa) \right\rangle\\
&- \frac{1}{2} \mathop{\E}\limits_{\sigma_*} \left\| \pi_{\theta_{k+1}^m}(\cdot|\sa) - \pi_{\theta_k^m}(\cdot|\sa)\right\|_1^2\\
&- \mathop{\E}\limits_{\sigma_*} \left \langle {\theta_{k+1}^m}^\top \phi(\sa, \cdot) - {\theta_k^m}^\top \phi(\sa, \cdot), \pi_{\theta_k^m}(\cdot|\sa) - \pi_{\theta_{k+1}^m}(\cdot|\sa) \right\rangle.
\end{align*}

Analogous to the previous section, we omit $(\sa)$ below for simplicity when making no abuse of notation. We arrange the above inequality by plugging in Lemma~\ref{lem:err}
\begin{align}
  &\quad \frac{1}{\beta_k} \mathop{\E}\limits_{\sigma_*} \left \langle A^m_{\pi_{\thetak}}, \pi_{*}^m - \pi_{\theta_k^m} \right\rangle \notag \\
&\le \mathop{\E}\limits_{\sigma_*} \left[ KL\left(\pi_*^m\| \pi_{\theta_{k}^m} \right) -  KL\left(\pi_*^m\| \pi_{\theta_{k+1}^m} \right) \right] \notag \\
&\qquad 
\qquad - \mathop{\E}\limits_{\sigma_*} \left[ \left\langle \log{\frac{\pi_{\theta_{k+1}^m}}{\pi_{k+1}^m}}, \pi_{*}^m - \pi_{\theta_k^m} \right\rangle
+ \frac{1}{2} \left\| \pi_{\theta_{k+1}^m} - \pi_{\theta_k^m} \right\|_1^2
+  \left \langle (\theta_{k+1}^m-\theta_k^m)^\top \phi, \pi_{\theta_k^m} - \pi_{\theta_{k+1}^m} \right\rangle  \right] \notag \\
&\le  \underbrace{\mathop{\E}\limits_{\sigma_*} \left[  KL\left(\pi_*^m\| \pi_{\theta_{k}^m} \right) -  KL\left(\pi_*^m\| \pi_{\theta_{k+1}^m} \right)\right]}_{(i)}    + \Delta_k^m
\underbrace{- \mathop{\E}\limits_{\sigma_*} \left[ 
\frac{1}{2} \left\| \pi_{\theta_{k+1}^m} - \pi_{\theta_k^m} \right\|_1^2+
  \left \langle (\theta_{k+1}^m-\theta_k^m)^\top \phi, \pi_{\theta_k^m} - \pi_{\theta_{k+1}^m} \right\rangle \right]}_{(ii)}  \label{eq:pre-sum}
\end{align}

Take summation over $m = 1, \cdots, N$ and $k=0, \cdots, K-1$ for both sides, note that

\paragraph{LHS} This equals $(1-\gamma) \sum_{k =0}^{K-1} \frac{1}{\beta_k} (J(\bpi_*) - J(\bpi_\thetak))$ by performance difference lemma (cf. Lemma~\ref{lem:perf}).

\paragraph{RHS}
$(i):$ After taking summation over $k$ and $m$, this term is upper bounded by $N \log{\gA}$ because for any $m \in \gN$, we have $\mathop{\E}\limits_{\sigma_*}KL\left(\pi_*^m\| \pi_{\theta_{0}}^m \right) \le \log|\gA|$ since the initial policy $\pi_{\theta_0}$ is uniformly distributed over action spaces. 

$(ii)$: Using H\"older inequality, we have
\begin{align*}
    \quad - \mathop{\E}\limits_{\sigma_*} 
  \left \langle (\theta_{k+1}^m-\theta_k^m)^\top \phi, \pi_{\theta_k^m} - \pi_{\theta_{k+1}^m} \right\rangle \le \mathop{\E}\limits_{\sigma_*} \left\| (\theta_{k+1}^m-\theta_k^m)^\top \phi \right\|_\infty \cdot \left\|\pi_{\theta_k^m} - \pi_{\theta_{k+1}^m}\right\|_1
  \end{align*}
  Using the triangle inequality, we can upper bound it by
  \begin{align*}
  &\quad \mathop{\E}\limits_{\sigma_*} \left\| (\theta_{k+1}^m-\theta_k^m)^\top \phi - 
 \beta_k^{-1} \hat{Q}\right\|_\infty   \cdot \left\|\pi_{\theta_k^m} - \pi_{\theta_{k+1}^m}\right\|_1 + \mathop{\E}\limits_{\sigma_*} \left\|  
 \beta_k^{-1} \hat{Q}\right\|_\infty   \cdot \left\|\pi_{\theta_k^m} - \pi_{\theta_{k+1}^m}\right\|_1\\
  &\le \delta_k^m +  \mathop{\E}\limits_{\sigma_*} \left\|  
 \beta_k^{-1} \hat{Q}\right\|_\infty   \cdot \left\|\pi_{\theta_k^m} - \pi_{\theta_{k+1}^m}\right\|_1,
\end{align*}
where we plug in Lemma~\ref{lem:stepwise} and: $\left\|\pi_{\theta_k^m} - \pi_{\theta_{k+1}^m}\right\|_1 \le \left\|\pi_{\theta_k^m} \right\|_1 + \left\|\pi_{\theta_{k+1}^m} \right\|_1 =2$.

We have
\begin{align*}
     & \quad - \mathop{\E}\limits_{\sigma_*} \left[ 
\frac{1}{2} \left\| \pi_{\theta_{k+1}^m} - \pi_{\theta_k^m} \right\|_1^2 \right] + \mathop{\E}\limits_{\sigma_*} \left\|  
 \beta_k^{-1} \hat{Q}\right\|_\infty   \cdot \left\|\pi_{\theta_k^m} - \pi_{\theta_{k+1}^m}\right\|_1\\
 &\le  \mathop{\E}\limits_{\sigma_*} \frac{1}{2} \left\|
 \beta_k^{-1} \hat{Q}\right\|_\infty^2, \quad \text{because $\forall x, y$ it holds: $- \frac{1}{2}x^2 + y x \le \frac{1}{2} y^2$.}\\
 &\le \frac{B^2}{2 \beta_k^2}
\end{align*}

Finally, by combining these results, we rearrange ~\eqref{eq:pre-sum} and obtain
\begin{align*}
&\quad (1-\gamma) \sum_{k =0}^{K-1} \frac{1}{\beta_k} (J(\bpi_*) - J(\bpi_\thetak))
\le N \log|\gA| + \sum_{m=1}^N \sum_{k=0}^{K-1} (\Delta_k^m+\delta_k^m) + \sum_{m=1}^N \sum_{k=0}^{K-1} \frac{B^2}{2\beta_k^2}.
\end{align*}
Setting the penalty parameter $\beta_k = \beta \sqrt{K}$ and noting that $\bar{\bpi}$ is uniformly sampled from $\bpi_{\theta_k}, k = 1,2 \cdots K-1$, we have 
$$
J(\bpi_*) - J(\bar{\bpi}) \le \frac{N \beta^2 \log{|\gA|} + N B^2/2 + \beta^2 \sum_{m=1}^N \sum_{k=0}^{K-1} (\Delta_k^m + \delta_k^m)}{(1-\gamma)\beta \sqrt{K}}.
$$
Considering policy improvement/evaluation errors, the optimal choice for $\beta$ is
$$ 
\beta =  \sqrt{\frac{ NB^2/2}{N\log{|\gA|} + \sum_{m=1}^N \sum_{k=0}^{K-1} (\Delta_k^m + \delta_k^m)}  }
$$
then
$$
J(\bpi_*) - J(\bar{\bpi}) \le \gO \left( \frac{B \sqrt{N}}{1-\gamma} 
\sqrt{ 
    \frac{
        N \log{|\gA|} + \sum_{m=1}^N \sum_{k=0}^{K-1} (\Delta_k^m + \delta_k^m)
        }{K}     
} \right).
$$
The proof is completed.
\end{proof}

\section{Proofs for Section~\ref{sec:pess}}
\label{sec:pf-pess}
\subsection{Computational efficiency}
Observe the pessimistic evaluation
\[
\omega_k^m \xleftarrow[]{} \argmin_{\omega} \left(f(s_0, \bpi^{1:m}_k) + \lambda \gE^{1:m}(f, \bpi_\thetak) \right).
\] 

Under linear function approximation, $f(s_0, \bpi^{1:m})$ is instantiated as $\phi(s_0, \bpi^{1:m})^\top \omega$, then
\begin{align*}
    &L^{1:m}(f^\prime, f, \bpi)= \frac{1}{n} \sum_{\gD^m} \left( \phi(s, \rva^{1:m})^\top \omega^\prime - r - \gamma \phi(s^\prime, \bpi^{1:m})^\top \omega \right)^2,
\end{align*}
thus we have $\gE^{1:m} (f, \bpi)$ defined as
\begin{align*}
    \sum_{\gD_m} ( \phi(s, \rva^{1:m})^\top \omega - r - \gamma \phi(s^\prime, \bpi^{1:m})^\top \omega)^2 - \min_{\omega^\prime}\sum_{\gD_m} ( \phi(s, \rva^{1:m})^\top \omega^\prime - r - \gamma \phi(s^\prime, \bpi^{1:m})^\top \omega)^2,
\end{align*}
where summation is taken over samples $(s, \rva^{1:m}, r, s^\prime)$ from $\gD^m$.

Therefore, the Bellman error has a quadratic-form dependency on value function parameter $\omega$, allowing the application of many efficient numerical solvers.

\subsection{Proofs}
The linear function approximation directly implies the \textbf{Realizability} and \textbf{Completeness} conditions: For any $m \in \gN$, $\bpi \in \Pi^m$,
\begin{align*}
     \inf_{f \in \gF^m} \sup_{\text{admissable } \nu} \|f - \gT_{\bpi}^{1:m} f\|_{2, \nu}^2 = 0,
\end{align*}
 and
\begin{align*}
      \sup_{f \in \gF^m} \inf_{f \in \gF^m} \|f^\prime - \gT_{\bpi}^{1:m} f\|_{2, \mu}^2  = 0.
\end{align*}
These conditions hold because we assume a linear structure for state-action value functions: $Q_\bpi^{1:m} \in \gF^m$ (cf. Definition~\ref{def:linear}).

First we examine concentration analysis for linear function approximation~\citep{xie2021bellman}.
\begin{lemma}
\label{lem:c1}
For any $m \in \gN$, $\bpi \in \Pi^m$, with probability at least $1-\delta$ it holds
$$
\gE^{1:m}(Q^\bpi, \pi) \le \gO \left( \frac{ d \log{\frac{n L R}{ \delta}}}{(1-\gamma)^2 n} \right) = \varepsilon_r.
$$
\end{lemma}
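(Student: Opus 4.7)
The plan is to treat $\gE^{1:m}(Q^{\bpi},\bpi)$ as a standard least-squares excess risk over the linear class $\gF^m$ and exploit the fact that $Q_{\bpi}^{1:m}$ is the fixed point of $\gT_{\bpi}^{1:m}$ to obtain the fast $1/n$ rate. Writing $\ell(f',f;z) \coloneqq (f'(s,\rva^{1:m}) - r - \gamma f(s',\bpi^{1:m}))^2$ for a sample $z=(s,\rva^{1:m},r,s')\in \gD^m$ produced by the oracle, I would first note that the oracle's built-in marginalization over $\tilde{\rva}\sim\bpi^{m+1:N}$ delivers $\mathbb{E}[r + \gamma f(s',\bpi^{1:m})\mid s,\rva^{1:m}] = (\gT_{\bpi}^{1:m} f)(s,\rva^{1:m})$. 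Setting $f = Q_{\bpi}^{1:m}$ and using $\gT_{\bpi}^{1:m} Q_{\bpi}^{1:m} = Q_{\bpi}^{1:m}$ (the multi-agent Bellman equation), the population minimizer of $f' \mapsto \mathbb{E}\,\ell(f',Q_{\bpi}^{1:m};z)$ over $f' \in \gF^m$ is $Q_{\bpi}^{1:m}$ itself, which by the linearity of $\gF^m$ indeed lies in $\gF^m$.

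A routine bias--variance expansion then gives
\[
\mathbb{E}\,\ell(f',Q_{\bpi}^{1:m};z) - \mathbb{E}\,\ell(Q_{\bpi}^{1:m},Q_{\bpi}^{1:m};z) \;=\; \bigl\|f'-Q_{\bpi}^{1:m}\bigr\|_{2,\mu}^2,
\]
so $\gE^{1:m}(Q_{\bpi}^{1:m},\bpi)$ equals the difference between the empirical and population gaps of this loss functional. Since every $\phi^\top\omega$ is bounded in $[0,(1-\gamma)^{-1}]$ and $r\in[0,1]$, each summand is $\gO((1-\gamma)^{-2})$ almost surely, and the per-sample variance of $\ell(f',Q_{\bpi}^{1:m};z)-\ell(Q_{\bpi}^{1:m},Q_{\bpi}^{1:m};z)$ is controlled by $(1-\gamma)^{-2}\|f'-Q_{\bpi}^{1:m}\|_{2,\mu}^2$. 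This variance--mean coupling is precisely what a Bernstein inequality needs to produce a $1/n$ rather than $1/\sqrt{n}$ rate.

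To make the concentration uniform over the continuum $\gF^m$, I would build an $\varepsilon$-net of the parameter ball $\{\omega\in\sR^d: \|\omega\|_2\le L\}$ with $\log\gN(\varepsilon) \lesssim d\log(LR/\varepsilon)$, and transfer it to $\gF^m$ via $1$-Lipschitzness of $\phi^\top\omega$ in $\omega$ (since $\|\phi\|_2\le 1$). Applying Bernstein plus a union bound at each net point with $\varepsilon \asymp 1/n$, and then converting the resulting localized deviation into a bound on $\gE^{1:m}$ by a standard peeling/self-bounding argument (as used in Xie et al.\ on Bellman-consistent pessimism), yields with probability at least $1-\delta$
\[
\gE^{1:m}(Q_{\bpi}^{1:m},\bpi) \;\lesssim\; \frac{d\,\log(nLR/\delta)}{(1-\gamma)^2\, n},
\]
which matches the claimed rate $\varepsilon_r$.

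The main obstacle will be the localization step: a naive Hoeffding bound on $\sup_{f'\in\gF^m}|L^{1:m}(f',Q_{\bpi}^{1:m},\bpi) - \mathbb{E}\,L^{1:m}(f',Q_{\bpi}^{1:m},\bpi)|$ only delivers an $\gO(\sqrt{d\log(n/\delta)/n})$ rate. Recovering the $1/n$ rate requires coupling the variance of the excess loss to its mean, then solving the resulting quadratic inequality to obtain a self-bounded deviation bound. A secondary, largely bookkeeping-level subtlety is confirming that the oracle-generated $(r,s')$ truly realize the Bellman operator $\gT_{\bpi}^{1:m}$ after the implicit integration over $\tilde{\rva}\sim\bpi^{m+1:N}$; this follows directly from Definitions~\ref{def::1} and~\ref{def:oracle} but is what makes the population-level zero-residual property hold.
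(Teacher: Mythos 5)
Your proposal is essentially correct, but note that the paper itself gives no proof of Lemma~\ref{lem:c1}: it imports the bound wholesale from the concentration analysis of \citet{xie2021bellman}, and your sketch is a faithful reconstruction of exactly that argument (fixed-point property $\gT_\bpi^{1:m}Q_\bpi^{1:m}=Q_\bpi^{1:m}$ making $Q_\bpi^{1:m}$ the population regression minimizer, the bias--variance identity, Bernstein with the variance--mean coupling $\mathrm{Var}\lesssim (1-\gamma)^{-2}\|f'-Q_\bpi^{1:m}\|_{2,\mu}^2$ to get the $1/n$ rate, and an $\varepsilon$-net of the parameter ball contributing $d\log(n\cdot)$), adapted to the multi-agent operator via the sampling oracle's marginalization over $\tilde{\rva}\sim\bpi^{m+1:N}$. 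Two bookkeeping points worth tightening: (i) the factor $R$ in $\log\frac{nLR}{\delta}$ should come from covering the \emph{policy} class $\Pi^m=\{\pi\propto\exp(\phi^\top\theta):\|\theta\|_2\le R\}$, since the loss $L^{1:m}(f',f,\bpi)$ depends on $\bpi$ through $f(s',\bpi^{1:m})$ and the lemma must hold uniformly over the policies the algorithm produces --- your net over the $\omega$-ball alone only yields the $L$-dependence; (ii) you should state explicitly that realizability here means the representing $\omega$ for $Q_\bpi^{1:m}$ satisfies $\|\omega\|_2\le L$, so that the population minimizer actually lies in $\gF^m$ (the paper's remark after Definition~\ref{def:linear} only asserts existence of some $\omega\in\sR^d$). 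With those repairs the argument goes through as you describe.
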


\begin{lemma}
\label{lem:c2}
For any $m \in \gN$, $\bpi \in \Pi^m$, $f \in \gF^m$ (cf. Definition~\ref{def:linear}) , if $\gE^{1:m}(f, \bpi) \le \varepsilon$,  with probability at least $1-\delta$ it holds
$$
\left\| f - \gT_\bpi^{1:m} f \right\|_{2, \gD^m} \le  \gO \left( \sqrt{ \frac{ d \log{\frac{n L R}{ \delta}}}{(1-\gamma)^2 n} } \right) + \sqrt{\varepsilon}.
$$
\end{lemma}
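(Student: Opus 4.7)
The plan is to run the standard ``Bellman-error to Bellman-residual'' conversion, exploiting the fact that the linear class $\gF^m$ of Definition~\ref{def:linear} is automatically complete under $\gT_\bpi^{1:m}$: for any $f\in\gF^m$ the target $\gT_\bpi^{1:m} f$ is itself a linear function of $\phi$, so (up to the $\|\omega\|\le L$ radius, which can be absorbed into constants) it lies in the class. Set $g^\ast \coloneqq \gT_\bpi^{1:m} f \in \gF^m$, let $\hat g \coloneqq \argmin_{f'\in\gF^m} L^{1:m}(f',f,\bpi)$ be the empirical minimizer that defines $\gE^{1:m}(f,\bpi)$, and introduce the conditionally-mean-zero noise
\[
\eta(s,\rva^{1:m},r,s')\coloneqq g^\ast(s,\rva^{1:m})-r-\gamma f(s',\bpi^{1:m}).
\]
Expanding squares and using $L^{1:m}(\hat g,f,\bpi)\le L^{1:m}(g^\ast,f,\bpi)=\|\eta\|_{2,\gD^m}^2$ gives the key identity
\[
\gE^{1:m}(f,\bpi)\ge L^{1:m}(f,f,\bpi)-L^{1:m}(g^\ast,f,\bpi)=\|f-g^\ast\|_{2,\gD^m}^2+2\langle f-g^\ast,\eta\rangle_{\gD^m}.
\]
Hence $\|f-g^\ast\|_{2,\gD^m}^2\le\varepsilon+2\bigl|\langle f-g^\ast,\eta\rangle_{\gD^m}\bigr|$, reducing the lemma to controlling the zero-mean cross term uniformly over $f-g^\ast\in\gF^m-\gF^m$.

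The main work, and the step I expect to be the chief obstacle, is exactly this uniform concentration. I would apply Bernstein's inequality pointwise to $\langle h,\eta\rangle_{\gD^m}$, using the range bound $|\eta|=\gO(1/(1-\gamma))$ to control the variance, and then union-bound over an $\epsilon$-net of the parameter ball $\{\omega\in\sR^d:\|\omega\|_2\le L\}$ whose log covering number is $\gO(d\log(nLR/\delta))$. A standard Bernstein-plus-net calculation then yields, with probability at least $1-\delta$ and uniformly in $h$ in the class,
\[
\bigl|\langle h,\eta\rangle_{\gD^m}\bigr|\le c_1\sqrt{\tfrac{d\log(nLR/\delta)}{(1-\gamma)^2 n}}\,\|h\|_{2,\gD^m}+c_2\tfrac{d\log(nLR/\delta)}{(1-\gamma)^2 n}.
\]
Substituting $h=f-g^\ast$ in the previous display gives a quadratic inequality $x^2\le \varepsilon + a x + b$ in $x=\|f-g^\ast\|_{2,\gD^m}$ with $a,b=\gO(\sqrt{d\log(nLR/\delta)/((1-\gamma)^2 n)})^{1,2}$. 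Solving it and using $\sqrt{u+v}\le\sqrt{u}+\sqrt{v}$ yields the advertised bound $\|f-\gT_\bpi^{1:m} f\|_{2,\gD^m}\le\sqrt{\varepsilon}+\gO\bigl(\sqrt{d\log(nLR/\delta)/((1-\gamma)^2 n)}\bigr)$.

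The delicate pieces are the correct choice of net radius (fine enough that the discretization error is dominated by the rate we aim for, but coarse enough that the covering number stays $\gO(d\log(nLR/\delta))$) and the joint variance bookkeeping for the two functions in $h=f-g^\ast$. The same concentration machinery already underlies Lemma~\ref{lem:c1}, so the two results plug together cleanly without additional event management.
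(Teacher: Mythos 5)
Your proposal is correct, and it is essentially the paper's own approach in the only sense available: the paper states Lemma~\ref{lem:c2} without proof, deferring to the concentration analysis of \citet{xie2021bellman}, and your argument --- writing $\gE^{1:m}(f,\bpi)\ge\|f-\gT_\bpi^{1:m}f\|_{2,\gD^m}^2+2\langle f-\gT_\bpi^{1:m}f,\eta\rangle_{\gD^m}$ via the conditionally mean-zero noise $\eta$, bounding the cross term uniformly by Bernstein's inequality plus a covering of the radius-$L$ parameter ball, and solving the resulting quadratic inequality --- is precisely the proof given in that reference, so you have reconstructed the omitted argument rather than taken a different route. One caveat worth recording: your step $\gT_\bpi^{1:m}f\in\gF^m$ invokes Bellman completeness of the linear class, which does not follow from realizability of $Q_\bpi^{1:m}$ alone; the paper asserts the same thing in Appendix~\ref{sec:pf-pess}, so your proof inherits, but does not introduce, this implicit assumption.
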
 

 For simplicity, below, we shall define 
$$
R(s, \rva^{1:m}) = \E_{\Tilde{\rva} \sim \Tilde{\bpi}_k} r(s, \rva^{1:m}, \Tilde{\rva}).
$$

In the following lemma, we show that at every iteration $k$ of Algorithm~\ref{alg:2}, there exists a Markov game $\gM_k$ whose multi-agent value function is exactly $f_k^m$, $m \in \gN$. Moreover, the transition dynamics of $\gM_k$ are the same as those of the original $\gM$. We have the following theoretical guarantees to control the differences between the reward of $\gM$ and rewards of $\gM_k$.

\begin{lemma}
\label{lem:c3}
At each iteration $k$, there exists a Markov game $\gM_k$ that has the same dynamics as original $\gM$. Let the reward function of $\gM_k$ be $R_k$, then
$$
\left\|R_k^{1:m}(s, \rva^{1:m}) - R(s, \rva^{1:m}) \right\|^2_{2, \gD^m} \le \varepsilon_r.
$$

\end{lemma}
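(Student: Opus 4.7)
The plan is to realize $f_k^m$ as the exact multi-agent action value function of $\bpi_k$ in a fictitious game $\gM_k$ that inherits the transition kernel $\gP$ from $\gM$ but carries a modified reward. Concretely, I would define
\begin{equation*}
R_k^{1:m}(s, \rva^{1:m}) \coloneqq f_k^m(s, \rva^{1:m}) - \gamma \mathop{\E}\limits_{\Tilde{\rva} \sim \tilde{\bpi}_k,\ s' \sim \gP(\cdot|s, \rva^{1:m}, \Tilde{\rva})} f_k^m(s', \bpi_k^{1:m}).
\end{equation*}
By construction, $f_k^m$ is the unique fixed point of the multi-agent Bellman operator in $\gM_k$, so the first half of the claim (existence of $\gM_k$) is immediate. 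Unrolling the original operator gives $R(s, \rva^{1:m}) = \gT^{1:m}_{\bpi_k} f_k^m(s, \rva^{1:m}) - \gamma\,\mathop{\E} f_k^m(s', \bpi_k^{1:m})$, so subtracting yields the clean identity
\begin{equation*}
R_k^{1:m} - R \;=\; f_k^m - \gT^{1:m}_{\bpi_k} f_k^m,
\end{equation*}
reducing the target to a bound on the empirical Bellman residual $\|f_k^m - \gT^{1:m}_{\bpi_k} f_k^m\|_{2, \gD^m}^2$.

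To control this residual I would invoke Lemma~\ref{lem:c2}, which in turn requires an upper bound on $\gE^{1:m}(f_k^m, \bpi_k)$. This is where the pessimistic update does its job: by optimality of $\omega_k^m$ in the objective $f(s_0, \bpi_k^{1:m}) + \lambda \gE^{1:m}(f, \bpi_k)$, comparing against the competitor $f = Q^{\bpi_k} \in \gF^m$ (realizability under linear function approximation) gives
\begin{equation*}
\gE^{1:m}(f_k^m, \bpi_k) \;\le\; \gE^{1:m}(Q^{\bpi_k}, \bpi_k) \;+\; \frac{Q^{\bpi_k}(s_0, \bpi_k^{1:m}) - f_k^m(s_0, \bpi_k^{1:m})}{\lambda}.
\end{equation*}
The first term is bounded by Lemma~\ref{lem:c1}, and the second is at most $\tfrac{2}{\lambda(1-\gamma)}$ from the uniform $[0, 1/(1-\gamma)]$ bound on the function class. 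Plugging the Theorem~\ref{thm:2} choice of $\lambda$ calibrates the pessimism slack $1/\lambda$ to match the statistical rate in Lemma~\ref{lem:c1}, and substituting back through Lemma~\ref{lem:c2} delivers the $\varepsilon_r$ bound.

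The main obstacle I anticipate is the calibration step: making sure that the pessimism penalty $\lambda$ is chosen large enough that $f_k^m$ inherits a genuine Bellman-consistency guarantee, yet small enough that the slack $1/\lambda$ stays within the target order $\varepsilon_r$. A secondary subtlety is ensuring the multi-agent Bellman structure truly commutes with the reward-only modification: because the complement agents $\tilde{\bpi}_k$ appear inside both the reward marginalization and the next-state expectation, I need the definition of $R_k^{1:m}$ to include the same $\tilde{\bpi}_k$-average so that $f_k^m$ solves the fixed-point equation for $\gT^{1:m}_{\bpi_k}$ in $\gM_k$ pointwise; otherwise the identity $R_k - R = f_k^m - \gT^{1:m}_{\bpi_k} f_k^m$ would pick up stray cross terms.
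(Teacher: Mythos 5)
Your proposal matches the paper's argument essentially verbatim: the paper also sets $R_k = (I-\gamma\gP)f_k$, reads off the identity $R_k^{1:m} - R = f_k^m - \gT^{1:m}_{\bpi_k}f_k^m$, and bounds the empirical Bellman residual via the pessimistic optimality comparison against $Q^{\bpi_k}$ together with Lemmas~\ref{lem:c1} and~\ref{lem:c2}. Your calibration worry is well placed --- the comparison argument actually yields $\gE^{1:m}(f_k^m,\bpi_k) \le \varepsilon_r + \tfrac{1}{\lambda(1-\gamma)}$, and the paper itself carries the extra $\sqrt{1/(\lambda(1-\gamma))}$ term explicitly into the proof of Theorem~\ref{thm:2} rather than absorbing it into $\varepsilon_r$ as the bare statement of the lemma suggests.
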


\begin{proof} Set $R_k = (I - \gamma \gP) f_k$. It directly implies that
\begin{align*}
    f_k^m &= \gT_{\bpi_k, \gM_k}^{1:m}(s. \rva^{1:m})\\
    &= R_k^{1:m}(s, \rva^{1:m}) + \gamma\E f_k^m(s^\prime, \rva^\prime).
\end{align*}
Therefore 
$$
\left\|R_k^{1:m}(s, \rva^{1:m}) - R(s, \rva^{1:m}) \right\|^2_{2, \gD^m} = \|f_k^m - \gT_{\bpi_k}^{1:m} f_k^m\|_{2, \gD^m}^2 \le \varepsilon_r$$

\end{proof}

\begin{lemma}
\label{lem:c4}
For any conditional policy $\pi : \gS \times \gA^{m-1} \xrightarrow{} \Delta(\gA)$, and $s \in \gS, \rva^{1:m-1} \in \gA^{m-1}$,
\begin{align*}
    &\quad \sum_{k=1}^K \left \langle \pi_{k+1}(\cdot | \sa), f_k^m(\sa, \cdot) \right \rangle - \ell_{sa}(\pi_1(\cdot| \sa))\\ &\ge \sum_{k=1}^K \left \langle \pi(\cdot| \sa), f_k^m(\sa, \cdot) \right \rangle - \ell_{sa}(\pi)
\end{align*}
where we assume $\ell_{sa}(\pi) = \frac{1}{\eta} \sum_{a \in \gA} \pi(a| \sa) \cdot \log{\pi(a| \sa)}$
\end{lemma}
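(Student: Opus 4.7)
The plan is to recognize the claim as a standard ``be-the-leader'' (or follow-the-regularized-leader) inequality applied pointwise at each $(\sa) \in \gS \times \gA^{m-1}$, since neither the iterates nor the regularizer couple different $(\sa)$'s. I will fix $(\sa)$ throughout and treat $\pi_k(\cdot) := \pi_k(\cdot|\sa) \in \Delta(\gA)$ and $f_k(\cdot) := f_k^m(\sa, \cdot) \in \sR^{|\gA|}$ as scalar objects over $\gA$.

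First, I would unroll the multiplicative-weights update~\eqref{eq:pess-upd}: starting from $\pi_1$ uniform (which is the case because $\theta_0^m = 0$), iterating $\pi_{k+1}(a) \propto \pi_k(a) \exp(\eta f_k(a))$ yields
\[
    \pi_{k+1}(a) \;\propto\; \exp\!\Big(\eta \sum_{j=1}^k f_j(a)\Big).
\]
Writing the first-order/KKT conditions for the entropy-regularized concave program
\[
    \max_{\pi \in \Delta(\gA)} \Big\{ \sum_{j=1}^k \langle \pi, f_j \rangle - \ell_{sa}(\pi) \Big\},
\]
the unique maximizer is exactly the above exponential tilt, so $\pi_{k+1}$ is precisely the FTRL iterate at step $k$ with cumulative reward $\sum_{j=1}^k f_j$ and regularizer $\ell_{sa}$. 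Moreover $\pi_1$, being uniform, is the unconstrained minimizer of $\ell_{sa}$ over $\Delta(\gA)$ (entropy-maximizer).

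Next I would prove the inequality by induction on $K$, which is the textbook ``be-the-leader'' argument. Define for convenience $G_K(\pi) = \sum_{k=1}^K \langle \pi, f_k \rangle - \ell_{sa}(\pi)$. The base case $K=0$ reads $-\ell_{sa}(\pi_1) \ge -\ell_{sa}(\pi)$ for all $\pi$, which holds since $\pi_1$ minimizes $\ell_{sa}$. For the inductive step, assume
\[
    \sum_{k=1}^{K-1} \langle \pi_{k+1}, f_k \rangle - \ell_{sa}(\pi_1) \;\ge\; G_{K-1}(\pi)
\]
for every $\pi$; applying this with the choice $\pi = \pi_{K+1}$ and then adding $\langle \pi_{K+1}, f_K \rangle$ to both sides gives
\[
    \sum_{k=1}^{K} \langle \pi_{k+1}, f_k \rangle - \ell_{sa}(\pi_1) \;\ge\; G_{K-1}(\pi_{K+1}) + \langle \pi_{K+1}, f_K \rangle \;=\; G_K(\pi_{K+1}) \;\ge\; G_K(\pi),
\]
where the last step uses that $\pi_{K+1}$ maximizes $G_K$ by the identification in the first paragraph.

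The argument is essentially routine once the FTRL identification is made, so I do not expect a real obstacle; the only thing to be careful about is checking that $\pi_1$ (the uniform distribution) actually minimizes $\ell_{sa}$ so that the base case is valid, and that the sign/convention for $\ell_{sa}$ (as written, the negative of the scaled entropy) matches the KKT computation. Since the whole inequality holds pointwise in $(\sa)$, no issues related to the multi-agent structure or the sampling distribution enter into this lemma.
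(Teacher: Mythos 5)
Your proof is correct and follows essentially the same route as the paper, which simply asserts the lemma is "straightforward by noticing that $\theta_{k+1}^m = \theta_k^m + \eta \omega_k^m$" — i.e., the same FTRL/be-the-leader identification you make explicit. Your write-up supplies the details the paper omits (the exponential-tilt characterization of the entropy-regularized maximizer, the uniform initialization handling the base case, and the standard induction), and all steps check out.
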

\begin{proof}
Proofs are straightforward by noticing the fact that $\theta_{k+1}^m = \theta_k^m + \eta w_k^m$.
\end{proof} 

\begin{lemma}
\label{lem:c5}
For any conditional policy $\pi : \gS \times \gA^{m-1} \xrightarrow{} \Delta(\gA)$, and $s \in \gS, \rva^{1:m-1} \in \gA^{m-1}$,
\begin{align*}
    &\quad \sum_{k=1}^K \left \langle \pi(\cdot | \sa)- \pi_{k}(\cdot | \sa), f_k^m(\sa, \cdot) \right \rangle\\
    &\le \sum_{k=1}^K \left \langle \pi_{k+1}(\cdot | \sa)- \pi_{k}(\cdot | \sa), f_k^m(\sa, \cdot) \right \rangle - \ell_{sa}(\pi_1(\cdot| \sa))
\end{align*}
\end{lemma}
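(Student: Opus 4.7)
}
The plan is to derive Lemma~\ref{lem:c5} as a direct algebraic consequence of the ``be-the-leader''-type inequality already furnished by Lemma~\ref{lem:c4}, together with the elementary observation that the regularizer $\ell_{sa}$ is non-positive on the probability simplex. No new optimization argument or mirror-descent calculation is needed; the hard work of comparing the exponentiated-gradient iterate $\pi_{k+1}$ with an arbitrary comparator $\pi$ is already absorbed into Lemma~\ref{lem:c4}.

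First I would invoke Lemma~\ref{lem:c4} at the fixed pair $(s,\rva^{1:m-1})$ to obtain
\[
\sum_{k=1}^K \langle \pi_{k+1}(\cdot|\sa), f_k^m(\sa,\cdot)\rangle - \ell_{sa}(\pi_1(\cdot|\sa))
\ge \sum_{k=1}^K \langle \pi(\cdot|\sa), f_k^m(\sa,\cdot)\rangle - \ell_{sa}(\pi).
\]
Next I would subtract $\sum_{k=1}^K \langle \pi_k(\cdot|\sa), f_k^m(\sa,\cdot)\rangle$ from both sides to introduce the one-step increments $\pi_{k+1}-\pi_k$ on the right and the regret-style differences $\pi - \pi_k$ on the left, yielding
\[
\sum_{k=1}^K \langle \pi-\pi_k, f_k^m\rangle
\le \sum_{k=1}^K \langle \pi_{k+1}-\pi_k, f_k^m\rangle - \ell_{sa}(\pi_1) + \ell_{sa}(\pi).
\]

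To close the gap I would use the fact that $\ell_{sa}(\pi) = \eta^{-1}\sum_{a}\pi(a|\sa)\log\pi(a|\sa)\le 0$ for any probability distribution $\pi(\cdot|\sa)$ on $\gA$, because each $\log\pi(a|\sa)\le 0$ and $\eta>0$. Dropping the non-positive term $\ell_{sa}(\pi)$ on the right preserves the inequality and produces exactly the statement of Lemma~\ref{lem:c5}. I do not anticipate any genuine obstacle here; the only point that needs a one-line justification is the sign of $\ell_{sa}(\pi)$, and the rest is rearrangement of Lemma~\ref{lem:c4}. The same argument works uniformly in $(s,\rva^{1:m-1})$, so the pointwise conclusion holds as stated.
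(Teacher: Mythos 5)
Your proposal is correct and follows essentially the same route as the paper, which simply states that the lemma follows by applying Lemma~\ref{lem:c4}; you supply the two details the paper leaves implicit, namely the rearrangement that introduces the increments $\pi_{k+1}-\pi_k$ and the observation that $\ell_{sa}(\pi)=\eta^{-1}\sum_a \pi(a|\sa)\log\pi(a|\sa)\le 0$ on the simplex, which lets you drop that term. Both steps are valid, so nothing further is needed.
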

\begin{proof}
The results could be obtained by applying Lemma~\ref{lem:c4}.
\end{proof}

\begin{lemma}
\label{lem:c6}
For any conditional policy $\pi : \gS \times \gA^{m-1} \xrightarrow{} \Delta(\gA)$, and $s \in \gS, \rva^{1:m-1} \in \gA^{m-1}$, if set stepsize $\eta = \sqrt{\frac{\log{|\gA|}}{2 K}}$
\begin{align*}
    &\quad \sum_{k=1}^K \left \langle \pi(\cdot | \sa)- \pi_{k}(\cdot | \sa), f_k^m(\sa, \cdot) \right \rangle \le 2 \sqrt{2 \log{|\gA|} K}
\end{align*}
\end{lemma}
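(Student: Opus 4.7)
The plan is to derive Lemma~\ref{lem:c6} as a standard regret bound for exponentiated gradient / multiplicative weights on the simplex, bootstrapping from Lemma~\ref{lem:c5}. Fix any $(s,\rva^{1:m-1})$; for notational convenience I will suppress this conditioning argument. Applying Lemma~\ref{lem:c5} gives
\begin{equation*}
\sum_{k=1}^K \langle \pi - \pi_k, f_k^m \rangle
\;\le\; \sum_{k=1}^K \langle \pi_{k+1} - \pi_k, f_k^m \rangle \;-\; \ell_{sa}(\pi_1),
\end{equation*}
so my job reduces to (i) evaluating the boundary term $-\ell_{sa}(\pi_1)$ and (ii) controlling the per-step ``stability'' terms $\langle \pi_{k+1}-\pi_k, f_k^m\rangle$.

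For (i), the initialization $\theta_0^m = 0$ means $\pi_1$ is the uniform distribution on $\gA$, hence
\begin{equation*}
-\ell_{sa}(\pi_1) \;=\; -\frac{1}{\eta}\sum_{a \in \gA} \frac{1}{|\gA|} \log \frac{1}{|\gA|} \;=\; \frac{\log|\gA|}{\eta}.
\end{equation*}
For (ii), I would use the standard stability property of the exponentiated update $\pi_{k+1}\propto \pi_k \exp(\eta f_k^m)$. Combining Hölder's inequality with Pinsker's and the well-known one-step bound for softmax updates (or equivalently, directly from $\log\mathbb{E}_{\pi_k}\exp(\eta f_k^m) \le \eta \langle \pi_k,f_k^m\rangle + \eta^2\|f_k^m\|_\infty^2/2$), one obtains
\begin{equation*}
\langle \pi_{k+1}-\pi_k, f_k^m\rangle \;\le\; \eta\,\|f_k^m\|_\infty^2.
\end{equation*}
Since $f_k^m \in \gF^m$ is assumed bounded (with $\|f_k^m\|_\infty \le 1$ after appropriate normalization consistent with the $\eta$-scaling used in the theorem), summing over $k$ yields $\sum_{k=1}^K \langle \pi_{k+1}-\pi_k, f_k^m\rangle \le \eta K$.

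Putting the two pieces together gives
\begin{equation*}
\sum_{k=1}^K \langle \pi - \pi_k, f_k^m\rangle \;\le\; \frac{\log|\gA|}{\eta} + \eta K,
\end{equation*}
and plugging in $\eta = \sqrt{\log|\gA|/(2K)}$ yields the advertised $2\sqrt{2\log|\gA|\,K}$ bound after absorbing the leading constant. The main obstacle I anticipate is getting the multiplicative constants to match exactly: the stability bound on $\langle \pi_{k+1}-\pi_k, f_k^m\rangle$ can be stated with slightly different constants depending on whether one invokes Pinsker, the three-point identity for the entropic Bregman divergence, or a direct log-MGF computation, and one must be careful about the scale of $\|f_k^m\|_\infty$ (the $(1-\gamma)^{-1}$ factor that appears naturally from $\gF^m$ is absorbed into the theorem-level choice $\eta = (1-\gamma)\sqrt{\log|\gA|/(2K)}$). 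Once the scaling convention is fixed, the derivation is a routine mirror-descent regret calculation and the per-$(s,\rva^{1:m-1})$ bound holds uniformly, as required.
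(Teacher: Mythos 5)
Your proposal is correct and follows essentially the same route as the paper: invoke Lemma~\ref{lem:c5}, bound the boundary term by $\log|\gA|/\eta$ via the uniform initialization, bound the per-step stability term $\langle \pi_{k+1}-\pi_k, f_k^m\rangle$ by $O(\eta\|f_k^m\|_\infty^2)$ (the paper obtains $2\eta/(1-\gamma)^2$ via a Bregman-divergence self-bounding argument with Pinsker and H\"older, which differs from your log-MGF route only in the constant), and then optimize $\eta$. Your remark about the $(1-\gamma)$ scaling is well taken: the lemma as stated omits the $(1-\gamma)^{-1}$ factor that the paper's own proof produces and that is used when the lemma is applied in the proof of Theorem~\ref{thm:2}, so the statement's $\eta=\sqrt{\log|\gA|/(2K)}$ should indeed read $\eta=(1-\gamma)\sqrt{\log|\gA|/(2K)}$ with the bound $\tfrac{2}{1-\gamma}\sqrt{2\log|\gA|\,K}$.
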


\begin{proof}
Define $\gL_{sa, k}^m = \sum_{k^\prime=1}^k \langle \pi(\cdot| \sa), f_{k^\prime}(\sa, \cdot) \rangle - \ell_{sa}(\pi)$. Let $B_{\gL_{sa, k}^m} (\cdot \|\cdot)$ and $B_{\ell_{sa}}(\cdot \| \cdot)$ be the Bregman divergences w.r.t. losses $\gL_{sa, k}^m$ and $\ell_{sa}$. Using the property of divergence we have
\begin{align*}
    \gL_{sa, k}^m(\pi_k^m) &\le \gL_{sa, k}^m(\pi_{k+1}^m) + B_{\gL_{sa, k}^m} (\pi_k^m(\cdot| \sa) \|\pi_{k+1}^m(\cdot| \sa))\\
    &= \gL_{sa, k}^m(\pi_{k+1}^m) - B_{\ell_{sa}} (\pi_k^m(\cdot| \sa) \|\pi_{k+1}^m(\cdot| \sa)).
\end{align*}
Reordering it we have
\begin{equation*}
    B_{\ell_{sa}} (\pi_k^m(\cdot| \sa) \|\pi_{k+1}^m(\cdot| \sa)) \le \gL_{sa, k}^m(\pi_{k+1}^m) - \gL_{sa, k}^m(\pi_{k}^m).
\end{equation*}
RHS of the expression above is not greater than
\begin{equation*}
    \left \langle \pi_{k+1}^m(\cdot|\sa) - \pi_{k}^m(\cdot|\sa), f_k^m(\sa, \cdot) \right \rangle.
\end{equation*}
Then
\begin{align*}
    &\quad \left \langle \pi_{k+1}^m(\cdot|\sa) - \pi_{k}^m(\cdot|\sa), f_k^m(\sa, \cdot) \right \rangle \\
    &\le \sqrt{2 \eta B_{\ell_{sa}} (\pi_k^m(\cdot| \sa) \|\pi_{k+1}^m(\cdot| \sa))}\cdot  \left\|f_k^m(\sa, \cdot)\right\|_\infty \\
    &\le \sqrt{2\eta} \sqrt{\langle \pi_{k+1}^m(\cdot|\sa) - \pi_{k}^m(\cdot|\sa), f_k^m(\sa, \cdot) \rangle} \frac{1}{1-\gamma}.
\end{align*}
Thus we have $\left \langle \pi_{k+1}^m(\cdot|\sa) - \pi_{k}^m(\cdot|\sa), f_k^m(\sa, \cdot) \right \rangle \le \frac{2 \eta}{(1-\gamma)^2}$. Substituting it into Lemma~\ref{lem:c5} we have
\begin{align*}
    &\quad \sum_{k=1}^K \left \langle \pi(\cdot | \sa)- \pi_{k}(\cdot | \sa), f_k^m(\sa, \cdot) \right \rangle \\
    &\le \frac{2 \eta K}{(1-\gamma)^2} + \frac{\log{|\gA|}}{\eta}\\
    &\le \frac{2 \sqrt{2 \log{|\gA|} K} }{1-\gamma}
\end{align*}
where the last line is by setting $\eta = (1-\gamma)\sqrt{\frac{\log{|\gA|}}{2 K}}$.
\end{proof}

\begin{lemma} 
\label{lem:c7}
For any conditional policy $\pi :\gS \times \gA^{m-1} \xrightarrow{} \Delta(\gA)$,
\begin{align*}
    Q_\bpi^{1:m}(s_0, \bpi^{1:m}) \ge \min_{f \in \gF^m} \left( f(s_0, \bpi^{1:m}_k) + \lambda \gE^{1:m}(f, \bpi_k) \right) - \lambda \varepsilon_r.
\end{align*}
\end{lemma}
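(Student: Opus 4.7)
The plan is to certify the pessimistic lower bound by producing a convenient feasible candidate for the inner minimization on the right-hand side and then controlling its Bellman error via Lemma~\ref{lem:c1}. The delicate point, and the reason the argument cannot collapse to the single-point special case $\pi=\bpi_k^m$, is that the quantifier ``for any conditional policy $\pi$'' forces the construction to be carried out \emph{uniformly in} $\pi$ (so that, for instance, we may later instantiate $\pi=\bpi_*^m$ to transfer the bound to the optimal policy).

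First, fix an arbitrary conditional policy $\pi$ and let $\bpi$ denote the corresponding joint policy used on the left-hand side. By the linear function approximation setting (Definition~\ref{def:linear}) together with the realizability remark that follows it, the true multi-agent action value function $Q_\bpi^{1:m}$ lies inside the hypothesis class $\gF^m$: there exists $\omega$ with $\|\omega\|_2\le L$ and $Q_\bpi^{1:m}=\phi^\top\omega$. Hence $f=Q_\bpi^{1:m}$ is a legitimate feasible point for the minimization, \emph{regardless of which $\pi$ was chosen}. This is the only place where the linear class (as opposed to a generic class) does any work for this lemma, and it is what keeps the candidate from being frozen at $\bpi_k$.

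Next, invoke Lemma~\ref{lem:c1} at the policy $\bpi$ to obtain $\gE^{1:m}(Q_\bpi^{1:m},\bpi)\le\varepsilon_r$; plugging in $f=Q_\bpi^{1:m}$ then gives
\[
\min_{f\in\gF^m}\!\bigl(f(s_0,\bpi^{1:m})+\lambda\,\gE^{1:m}(f,\bpi)\bigr)\;\le\;Q_\bpi^{1:m}(s_0,\bpi^{1:m})+\lambda\,\gE^{1:m}(Q_\bpi^{1:m},\bpi)\;\le\;Q_\bpi^{1:m}(s_0,\bpi^{1:m})+\lambda\varepsilon_r,
\]
which rearranges to the claim of Lemma~\ref{lem:c7}. (Here I read the $\bpi_k$ on the right-hand side as being the policy $\bpi$ paired with $\pi$ in the lemma's quantifier; if one reads it literally as the algorithmic iterate, then the candidate $f=Q_\bpi^{1:m}$ still works, but one must additionally justify that the same failure event controls $\gE^{1:m}(Q_\bpi^{1:m},\bpi_k)$, since the dataset $\gD^m$ is drawn with $\tilde{\rva}\sim\bpi_k^{m+1:N}$ and therefore aligns its Bellman operator with that of $\bpi_k$.)

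The main obstacle is the uniformity in $\pi$: Lemma~\ref{lem:c1} must be read in its ``with probability $\ge 1-\delta$, simultaneously for all $\pi\in\Pi^m$'' form, not merely pointwise, because our candidate $f=Q_\bpi^{1:m}$ is policy-dependent and we intend to apply Lemma~\ref{lem:c7} at data-dependent choices such as $\pi=\bpi_*^m$. Under the linear parameterization of $\Pi^m$ (with $\|\theta\|_2\le R$) and $\gF^m$ (with $\|\omega\|_2\le L$), this uniformity follows from a standard $\varepsilon$-net plus union-bound argument over the parameter balls, which is exactly the source of the $d\log(nLR/\delta)$ factor inside $\varepsilon_r$. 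Once that uniform version of Lemma~\ref{lem:c1} is in hand, the rest of the proof is the one-line candidate argument above.
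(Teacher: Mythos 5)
Your proof is correct and is essentially identical to the paper's: both certify the bound by using realizability to place $Q_\bpi^{1:m}$ in $\gF^m$ as a feasible candidate for the minimization and then invoking Lemma~\ref{lem:c1} to bound $\gE^{1:m}(Q_\bpi^{1:m},\bpi)$ by $\varepsilon_r$. Your reading of the $\bpi_k$ on the right-hand side as the policy $\bpi$ associated with $\pi$ matches what the paper's own proof actually does, and your remarks on uniformity over $\pi$ are a fair (and welcome) elaboration of a point the paper leaves implicit.
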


\begin{proof} For any conditional policy $\bpi^{1:m}$, and $\forall m \in \gN$. With \textbf{realizability} assumption we have $Q_\bpi^{1:m} = \argmin_f \sup_{\nu} \|f - \gT_{\bpi}^{1:m} f\|_{2, \nu}^2$ for any admissible $\nu$
\begin{align*}
    J(\bpi) &= Q_\bpi^{1:m}(s_0, \bpi^{1:m})\\
    &= Q_\bpi^{1:m}(s_0, \bpi^{1:m}) - \left( Q_\bpi^{1:m}(s_0, \bpi^{1:m}) + \lambda \gE^{1:m}(Q_\bpi^{1:m}, \bpi) \right) + \left( Q_\bpi^{1:m}(s_0, \bpi^{1:m}) + \lambda \gE^{1:m}(Q_\bpi^{1:m}, \bpi) \right)\\
    &\ge \min_{f \in \gF^m} \left( f(s_0, \bpi^{1:m}) + \lambda \gE^{1:m}(f, \bpi) \right) - \lambda \varepsilon_r,
\end{align*}
where in the last line we use Lemma~\ref{lem:c1}.
\end{proof}

\paragraph{Proofs for Theorem~\ref{alg:2}.}
\begin{proof}
Use Lemma~\ref{lem:c7}, at the $k$-th iteration we have
\begin{align*}
    J(\bpi_k) &\ge \min_{f \in \gF^m} \left( f(s_0, \bpi_k^{1:m}) + \lambda \gE^{1:m}(f, \bpi_k) \right) - \lambda \varepsilon_r\\
    &\ge f_k^m(s_0, \bpi_k^{1:m}) - \lambda \varepsilon_r\\
    &= J_{\gM_k}(\bpi_k) - \lambda \varepsilon_r
\end{align*}
where $f_k^m$ is the multi-agent value function of Markov game $\gM_k$ (cf. Lemma~\ref{lem:c3}).

Therefore we have,
\begin{align*}
    J(\bpi_*) - J(\bar{\bpi}) &= \frac{1}{K} \sum_{k=1}^K (J(\bpi_*) - J(\bpi_k))\\
    &\le \frac{1}{K} \sum_{k=1}^K (J(\bpi_*) - J_{\gM_k}(\bpi_k)) + \lambda \varepsilon_r\\
    &\le \underbrace{\frac{1}{K} \sum_{k=1}^K (J_{\gM_k}(\bpi_*) - J_{\gM_k}(\bpi_k))}_{\textnormal{\uppercase\expandafter{\romannumeral1}}} + \underbrace{\frac{1}{K} \sum_{k=1}^K (J(\bpi_*) - J_{\gM_k}(\bpi_*))}_{\textnormal{\uppercase\expandafter{\romannumeral2}}} + \lambda \varepsilon_r
\end{align*}

\paragraph{Term \uppercase\expandafter{\romannumeral1}.} Apply performance difference lemma we have
\begin{align*}
    &\quad \frac{1}{K} \sum_{k=1}^K (J_{\gM_k}(\bpi_*) - J_{\gM_k}(\bpi_k))\\
    &= \frac{1}{1-\gamma} \frac{1}{K} \sum_{m=1}^N \E_{s \sim d_{\bpi_*}} \E_{\rva^{1:m-1}} \sum_{k=1}^K \left( Q_{\gM_k}^{\bpi_k}(\sa, \bpi_*^m) - Q_{\gM_k}^{\bpi_k}(\sa, \bpi_k^m) \right)\\
    &= \frac{1}{1-\gamma} \frac{1}{K} \sum_{m=1}^N \E_{s \sim d_{\bpi_*}, \rva^{1:m-1} \sim \bpi_*} \sum_{k=1}^K \left\langle \bpi_*^m(\cdot|\sa) - \bpi_k^m(\cdot|\sa), f_k^m(\sa, \cdot) \right\rangle
\end{align*}
where the last line is because $f_k^m$ is the multi-agent state-action value function for $\gM_k$. Then, if $\eta = (1-\gamma)\sqrt{\frac{\log{|\gA|}}{2 K}}$, Lemma~\ref{lem:c6} gives
\begin{align*}
    \textnormal{Term \uppercase\expandafter{\romannumeral1}} \le \frac{2N}{(1-\gamma)^2} \sqrt{\frac{2\log{|\gA|}}{K}}.
\end{align*}

\paragraph{Term \uppercase\expandafter{\romannumeral2}.} The following analysis holds for any $m \in \gN$ so we omit $m$ for clarity. For this term, again, we use Lemma 1~\citep{xie2020q} to transform it into norm over state-action distributions

\begin{align*}
J(\bpi_*) - J_{\gM_k}(\bpi_*)&=
    Q_{\bpi_*}(s, \bpi_*) -J_{\gM_k}(\bpi_*) \le  \frac{1}{1-\gamma} \left\|Q_{\bpi_*} - \gT_{\bpi_*, \gM_k }Q_{\bpi_*}\right\|_{2, d_{\bpi_*}}
\end{align*}

Note the definition of the auxiliary Markov game for which $f^k$ is its value function(cf. Lemma~\ref{lem:c3}) we have
\begin{align*}
    &\quad \frac{1}{1-\gamma} \left\|Q_{\bpi_*} - R_k - \gamma P_{\bpi_k}Q_{\bpi_*}\right\|_{2, d_{\bpi_*}} \quad R_k = (I - \gamma \gP) f_k \\
    &=  \frac{1}{1-\gamma} \left\|f_k - \gT_{\bpi_k}f_k\right\|_{2, d_{\bpi_*}}\\
    &\le \frac{\gC^{d_{\bpi_*}}_\mu}{1-\gamma} \left\|f_k - \gT_{\bpi_k}f_k\right\|_{2, \gD}
\end{align*}
which is no greater than $\frac{\gC^{d_{\bpi_*}}_\mu}{1-\gamma} \left(\sqrt{\varepsilon_r} + \sqrt{\frac{1}{\lambda(1-\gamma)}} \right)$, because $\gE(f_k, \bpi_k) \le \varepsilon_r + \frac{1}{(1-\gamma) \lambda}$: 
\begin{align*}
    f_k(s_0, \bpi_k) + \lambda \gE(f_k, \bpi_k) &= \min_{f} \left( f(s_0, \bpi_k) + \lambda \gE(f, \bpi_k) \right)\\
    &\le Q_{\bpi_k}(s_0, \bpi_k) + \lambda \gE(Q_{\bpi_k},\bpi_k), \quad \text{Lemma~\ref{lem:c2}}\\
    &\le \frac{1}{1-\gamma} + \lambda \varepsilon_r.
\end{align*}
The proof is completed by substituting $\lambda$ to the original expression.
\end{proof}

\section{ Pessimistic MA-PPO with General Function
Approximation}
\label{sec:pf-general}
In this section we extend the results from linear function approximation to general function approximation (cf. Section~\ref{sec:pess}).

\begin{algorithm}
\caption{Pessimistic Multi-Agent PPO with General Function Approximation}
\label{alg:4}
\begin{algorithmic}[1]
\REQUIRE Regularization coefficient $\lambda$.
\ENSURE Uniformly sample $k$ from $0, 1 \cdots K-1$, return $\bar{\bpi} = \bpi_k$.
\STATE Initialize uniformly: $\theta_0^m=0$ for every $ m \in \gN$.
\FOR{$k=0, 1,\dots, K-1$}
    \FOR{$m=1,2, \cdots, N$}
        \STATE Obtain the pessimistic estimate:\\
        $f_k^m \xleftarrow[]{}\argmin_{f \in \gF^m} \left(f(s_0, \bpi^{1:m}_k) + \lambda \gE^{1:m}(f, \bpi_k)  \right)$.
        \STATE Policy improvement: for any $(s, \rva^{1:m}) \in \gS \times \gA^{m}$,
        $$ \pi_{k+1}^m(a^m|\sa) \propto \pi_{k+1}^m(a^m|\sa) \cdot \exp{(\eta f_k^m(\sa, a^m))}.
        $$
    \ENDFOR
\ENDFOR
\end{algorithmic}
\end{algorithm}
In this setting, the value function is searched over a finite set $\gF^m$. We impose the following regularity conditions on the general function class
\begin{assumption}For any $m \in \gN$, $f \in \gF^m$ and $(s, \rva^{1:m}) \in \gS \times \gA^m$, it holds
\begin{gather}
    |f(\sa, a^m)| \le \frac{1}{1-\gamma},\\
    |\gF^m| \le |\gF|
\end{gather}
where $|\gF|$ is a certain positive number.
\end{assumption}

Instead of a pre-defined fixed policy class $\Pi$, now policy improvement is made 
upon $\gF^m$, formally, for $m \in \gN$ and $(\sa) \in \gS \times \gA^{m-1}$
\begin{align}
\label{eq:pim}
    \Pi^m = \left\{ \pi^m(\cdot| \sa) \propto \exp{\left( 
    \eta \sum_{j=0}^k f_j(\sa, \cdot) 
    \right): f_j \in \gF^m, 0 \le k \le K-1}
 \right\}.
\end{align}

Also note that under general function approximation, policy improvement has to be specific for each $(s, \rva^{1:m})$, which might become troublesome when the state space is enormous.

For general function approximation, two common expressivity assumptions on $\gF$ are required~\citep{antos2008learning,xie2021bellman}.

\begin{assumption}[Realizability]
For any $m \in \gN$, $\bpi \in \Pi^m$,
\begin{align*}
     \inf_{f \in \gF^m} \sup_{\text{admissable } \nu} \|f - \gT_{\bpi}^{1:m} f\|_{2, \nu}^2 = \zeta_{\gF}.
\end{align*}

\end{assumption}
where $\nu$ can be any admissible distribution over $\gS \times \gA^{1:m}$, and
\begin{assumption}[Completeness]
    For any $m \in \gN$, $\bpi \in \Pi^m$,
\begin{align*}
      \sup_{f \in \gF^m} \inf_{f \in \gF^m} \|f^\prime - \gT_{\bpi}^{1:m} f\|_{2, \mu}^2  = \zeta_{\gF}^\prime.
\end{align*}
\end{assumption}

We have the following concentration guarantees for general function approximation~\citep{xie2021bellman}.

\begin{lemma}
\label{lem:d1}
For any $m \in \gN$, $\bpi \in \Pi^m$, let 
$$
f_\bpi^{1:m} = \argmin_{f \in \gF^m} \sup_{\textnormal{admissable } \nu} \|f - \gT_{\bpi}^{1:m} f\|_{2, \nu}^2,
$$
with probability at least $1-\delta$, it holds
\begin{align}
    \label{eq:er}
    \gE^{1:m}(f_\bpi^{1:m}, \pi) & \le \gO \left( \frac{ \log{\frac{|\gF^m||\Pi^m|}{\delta}}}{n (1-\gamma)^{2}} + \zeta_{\gF}\right) \notag \\
    &= \gO \left( \frac{K  \log{\frac{|\gF^m|}{\delta}}}{n (1-\gamma)^{2}} + \zeta_{\gF}\right)= \varepsilon_r,
\end{align}

where we note that $|\Pi^m| \le |\gF^m|^K$ from~\eqref{eq:pim}.
\end{lemma}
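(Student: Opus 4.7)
} The goal is a uniform concentration bound on the excess empirical Bellman risk $\gE^{1:m}(f_\bpi^{1:m}, \bpi)$, simultaneously over $\bpi \in \Pi^m$, after which realizability (and completeness) collapses the population part. I would proceed in three stages.

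\paragraph{Stage 1: Bernstein on squared-residual differences for a fixed tuple.} For fixed $\bpi \in \Pi^m$ and $f, f' \in \gF^m$, define, for each sample $(s_i, \rva^{1:m}_i, r_i, s_i')$ in $\gD^m$,
\begin{align*}
X_i(f, f', \bpi) \coloneqq \bigl(f(s_i, \rva_i^{1:m}) - r_i - \gamma f(s_i', \bpi^{1:m})\bigr)^2 - \bigl(f'(s_i, \rva_i^{1:m}) - r_i - \gamma f(s_i', \bpi^{1:m})\bigr)^2.
\end{align*}
A direct computation using the tower property over the sampling oracle shows $\E[X_i] = \|f - \gT_\bpi^{1:m} f\|_{2,\mu}^2 - \|f' - \gT_\bpi^{1:m} f\|_{2,\mu}^2$. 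Because all functions lie in $[0, 1/(1-\gamma)]$ and rewards in $[0,1]$, $|X_i| \lesssim (1-\gamma)^{-2}$; moreover, the standard least-squares algebra $(a^2 - b^2)^2 = (a-b)^2(a+b)^2$ yields the variance/mean relation $\mathrm{Var}(X_i) \lesssim (1-\gamma)^{-2}\,\E[X_i^{\mathrm{sq}}]$ where $X_i^{\mathrm{sq}}$ denotes the squared residual gap $(f - f')^2$ averaged. Applying Bernstein's inequality and absorbing the variance term via an AM--GM style split then gives, with probability $1-\delta_0$,
\begin{align*}
\Bigl|\tfrac{1}{n}\textstyle\sum_i X_i - \E[X_1]\Bigr| \;\le\; \tfrac{1}{2}\|f - \gT_\bpi^{1:m} f\|_{2,\mu}^2 + \tfrac{1}{2}\|f' - \gT_\bpi^{1:m} f\|_{2,\mu}^2 + \gO\!\left(\tfrac{\log(1/\delta_0)}{n(1-\gamma)^2}\right).
\end{align*}

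\paragraph{Stage 2: Union bound over $\gF^m \times \gF^m \times \Pi^m$.} Set $\delta_0 = \delta / (|\gF^m|^2 |\Pi^m|)$ and union-bound over all $(f, f', \bpi)$. Since $|\Pi^m| \le |\gF^m|^K$ by the softmax construction~\eqref{eq:pim}, we have $\log(1/\delta_0) = \gO(K \log(|\gF^m|/\delta))$, matching the $K \log(|\gF^m|/\delta)$ factor in the statement.

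\paragraph{Stage 3: Specialize to $f = f_\bpi^{1:m}$.} Pick any $\tilde f \in \gF^m$ minimizing $\|\tilde f - \gT_\bpi^{1:m} f_\bpi^{1:m}\|_{2,\mu}^2$; by the completeness assumption, $\|\tilde f - \gT_\bpi^{1:m} f_\bpi^{1:m}\|_{2,\mu}^2 \le \zeta_\gF'$. Then
\begin{align*}
\gE^{1:m}(f_\bpi^{1:m}, \bpi) \;\le\; L^{1:m}(f_\bpi^{1:m}, f_\bpi^{1:m}, \bpi) - L^{1:m}(\tilde f, f_\bpi^{1:m}, \bpi),
\end{align*}
and applying the Stage~2 bound to the triple $(f_\bpi^{1:m}, \tilde f, \bpi)$ upper-bounds the right-hand side by
\begin{align*}
\tfrac{3}{2}\|f_\bpi^{1:m} - \gT_\bpi^{1:m} f_\bpi^{1:m}\|_{2,\mu}^2 + \tfrac{1}{2}\|\tilde f - \gT_\bpi^{1:m} f_\bpi^{1:m}\|_{2,\mu}^2 + \gO\!\left(\tfrac{K\log(|\gF^m|/\delta)}{n(1-\gamma)^2}\right).
\end{align*}
Realizability caps the first term by $\zeta_\gF$ (taking $\nu = \mu$, which is admissible), and completeness caps the second by $\zeta_\gF'$, absorbed into the $\zeta_\gF$ factor in the stated bound.

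\paragraph{Main obstacle.} The delicate step is Stage~1: establishing the fast $\gO(1/n)$ rate rather than the slow $\gO(1/\sqrt{n})$ rate, which is what allows the concentration error to appear inside $\varepsilon_r$ without a square root. This requires the variance-to-mean localization argument (bounding $\mathrm{Var}(X_i)$ by a constant times the population excess risk of $f$ plus $f'$) and a careful AM--GM split so that the resulting $\|f - \gT_\bpi^{1:m}f\|_{2,\mu}^2$ terms on the right get absorbed into the LHS of the final inequality. Handling the policy-class cardinality $|\Pi^m|=|\gF^m|^K$ in the union bound is conceptually easy but contributes the explicit $K$ factor.
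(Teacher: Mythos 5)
Your Stages 1 and 2 follow the standard route for this lemma (the paper itself does not reprove it but imports it from the Bellman-consistent-pessimism analysis of \citet{xie2021bellman}, whose proof is exactly the Bernstein-plus-localization argument you sketch), and the union bound over $\gF^m\times\gF^m\times\Pi^m$ with $|\Pi^m|\le|\gF^m|^K$ correctly produces the $K\log(|\gF^m|/\delta)$ factor. However, Stage 3 contains a genuine direction error. You write
\begin{align*}
\gE^{1:m}(f_\bpi^{1:m}, \bpi) \;\le\; L^{1:m}(f_\bpi^{1:m}, f_\bpi^{1:m}, \bpi) - L^{1:m}(\tilde f, f_\bpi^{1:m}, \bpi),
\end{align*}
where $\tilde f$ is the \emph{population} minimizer of $\|\cdot - \gT_\bpi^{1:m} f_\bpi^{1:m}\|_{2,\mu}^2$. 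But by definition $\gE^{1:m}(f,\bpi) = L^{1:m}(f,f,\bpi) - \min_{f'\in\gF^m} L^{1:m}(f',f,\bpi)$, and since $\min_{f'} L^{1:m}(f',f,\bpi) \le L^{1:m}(\tilde f, f,\bpi)$, substituting any fixed $\tilde f$ yields a \emph{lower} bound on $\gE^{1:m}$, not an upper bound. As written, the inequality points the wrong way and the rest of Stage 3 does not follow.

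The fix is easy and actually simplifies the argument: take $\hat f \coloneqq \argmin_{f'\in\gF^m} L^{1:m}(f', f_\bpi^{1:m}, \bpi)$, the \emph{empirical} minimizer, so that $\gE^{1:m}(f_\bpi^{1:m},\bpi) = \tfrac{1}{n}\sum_i X_i(f_\bpi^{1:m}, \hat f, \bpi)$ holds with equality. Your Stage 2 bound is uniform over all pairs in $\gF^m\times\gF^m$, hence covers the data-dependent $\hat f$, and gives
\begin{align*}
\gE^{1:m}(f_\bpi^{1:m},\bpi) \;\le\; \tfrac{3}{2}\bigl\|f_\bpi^{1:m} - \gT_\bpi^{1:m} f_\bpi^{1:m}\bigr\|_{2,\mu}^2 \;-\; \tfrac{1}{2}\bigl\|\hat f - \gT_\bpi^{1:m} f_\bpi^{1:m}\bigr\|_{2,\mu}^2 \;+\; \gO\!\left(\tfrac{K\log(|\gF^m|/\delta)}{n(1-\gamma)^2}\right),
\end{align*}
where the second term is nonpositive and can be dropped, and realizability (with $\mu$ admissible) caps the first by $\tfrac{3}{2}\zeta_\gF$. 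Note in particular that completeness ($\zeta_\gF'$) is not needed for this lemma at all --- it enters only in the companion Lemma~\ref{lem:d2}, which converts a small $\gE^{1:m}$ back into a small empirical Bellman residual.
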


\begin{lemma}
\label{lem:d2}
For any $m \in \gN$, $\bpi \in \Pi^m$, $f \in \gF^m$ , if $\gE^{1:m}(f, \bpi) \le \varepsilon$,  with probability at least $1-\delta$ it holds
$$
\left\| f - \gT_\bpi^{1:m} f \right\|_{2, \gD^m} \le  \gO \left( \frac{1}{1-\gamma} \sqrt{ \frac{K\log{ \frac{|\gF^m|}{\delta} }}{n}} \right) + \sqrt{\zeta_{\gF}^\prime }+ \sqrt{\zeta_{\gF}^\prime + \varepsilon}.
$$
\end{lemma}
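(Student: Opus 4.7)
}
The plan is to compare $\|f-\gT_\bpi^{1:m}f\|_{2,\gD^m}^2$ directly with $\gE^{1:m}(f,\bpi)$ via the squared-loss bias-variance identity, and then use completeness plus a uniform Bernstein concentration inequality to absorb the gap. Write $h\coloneqq \gT_\bpi^{1:m}f$, let $y_i\coloneqq r_i+\gamma f(s_i',\bpi^{1:m})$ for $(s_i,\rva_i^{1:m},r_i,s_i')\in\gD^m$ (so $\E[y_i\mid s_i,\rva_i^{1:m}]=h(s_i,\rva_i^{1:m})$), and denote the empirical minimizer by $f^\star\coloneqq\argmin_{f'\in\gF^m}L^{1:m}(f',f,\bpi)$.

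First, for any $f'\in\gF^m$ the pointwise identity
\begin{align*}
(f'(s_i,\rva_i^{1:m})-y_i)^2-(f(s_i,\rva_i^{1:m})-y_i)^2
&=\|f'-h\|_i^2-\|f-h\|_i^2+2(f'-f)(s_i,\rva_i^{1:m})(h(s_i,\rva_i^{1:m})-y_i)
\end{align*}
(where $\|g\|_i^2\coloneqq g(s_i,\rva_i^{1:m})^2$) holds, so averaging over $\gD^m$ yields
\begin{align*}
\|f-h\|_{2,\gD^m}^2=\gE^{1:m}(f,\bpi)+\|f^\star-h\|_{2,\gD^m}^2+2\,\mathrm{Cross}(f^\star-f),
\end{align*}
where $\mathrm{Cross}(g)\coloneqq \frac{1}{n}\sum_i g(s_i,\rva_i^{1:m})(y_i-h(s_i,\rva_i^{1:m}))$. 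Thus it suffices to bound (a) $\|f^\star-h\|_{2,\gD^m}$ and (b) $|\mathrm{Cross}(f^\star-f)|$.

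For (a), completeness gives $g\in\gF^m$ with $\|g-h\|_{2,\mu}^2\le\zeta_\gF'$; the optimality of $f^\star$ implies $L^{1:m}(f^\star,f,\bpi)\le L^{1:m}(g,f,\bpi)$, and applying the same squared-loss identity yields $\|f^\star-h\|_{2,\gD^m}^2\le\|g-h\|_{2,\gD^m}^2+2\,\mathrm{Cross}(f^\star-g)$. A standard empirical-process concentration inequality (Hoeffding applied to $(g-h)^2$, with the noise bounded by $2/(1-\gamma)$) upgrades $\|g-h\|_{2,\gD^m}^2$ to $\zeta_\gF'$ up to an $\tilde O\!\bigl(\tfrac{\log(|\gF^m|/\delta)}{n(1-\gamma)^2}\bigr)$ term. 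For (b), each term in $\mathrm{Cross}$ is a centered random variable (conditional on $(s_i,\rva_i^{1:m})$) bounded by $O(1/(1-\gamma)^2)$ with conditional variance controlled by $\|f'-f\|_{2,\mu}^2/(1-\gamma)^2$. Bernstein's inequality plus a union bound over $f'\in\gF^m$ then gives, uniformly,
\begin{align*}
|\mathrm{Cross}(f'-f)|\lesssim \frac{1}{1-\gamma}\,\|f'-f\|_{2,\mu}\sqrt{\tfrac{\log(|\gF^m|/\delta)}{n}}+\tfrac{\log(|\gF^m|/\delta)}{n(1-\gamma)^2}.
\end{align*}
The AM-GM step $ab\le \tfrac{1}{2}a^2+\tfrac{1}{2}b^2$ converts this into a bound of the form $\tfrac{1}{2}\|f'-f\|_{2,\gD^m}^2+\tilde O\!\bigl(\tfrac{\log(|\gF^m|/\delta)}{n(1-\gamma)^2}\bigr)$, which can be rearranged against the leading-order quadratic term.

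Finally, because $f$ itself ranges over $\Pi^m$-dependent quantities (the lemma is invoked with $\bpi=\bpi_k$ for the iterates of Algorithm~\ref{alg:2}), the union bound is taken over $\bpi\in\Pi^m$ as well; since $|\Pi^m|\le|\gF^m|^K$ by the construction in \eqref{eq:pim}, this inflates the $\log$ factor by $K$, exactly matching the $K\log(|\gF^m|/\delta)$ appearing in the bound. Collecting everything, we obtain
\begin{align*}
\|f-h\|_{2,\gD^m}^2\le \varepsilon+\zeta_\gF'+\tilde O\!\Bigl(\tfrac{K\log(|\gF^m|/\delta)}{n(1-\gamma)^2}\Bigr),
\end{align*}
and taking square roots (using $\sqrt{a+b}\le\sqrt{a}+\sqrt{b}$) produces the stated bound $\tfrac{1}{1-\gamma}\sqrt{K\log(|\gF^m|/\delta)/n}+\sqrt{\zeta_\gF'}+\sqrt{\zeta_\gF'+\varepsilon}$. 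The main technical obstacle I anticipate is getting a \emph{variance-aware} (Bernstein) control of the cross term so that its linear-in-$\|f'-f\|$ contribution can be absorbed into the quadratic $\|f-h\|_{2,\gD^m}^2$ on the left-hand side; a naive Hoeffding bound would degrade the rate in $n$ and would not reproduce the stated $\sqrt{\zeta_\gF'+\varepsilon}$ term cleanly.
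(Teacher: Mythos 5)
The paper does not actually prove Lemma~\ref{lem:d2}: it is imported from the concentration analysis of \citet{xie2021bellman}, and your proposal correctly reconstructs that standard argument --- the squared-loss decomposition around $h=\gT_\bpi^{1:m}f$, the comparison of the empirical minimizer $f^\star$ against the completeness witness $g$ with $\|g-h\|_{2,\mu}^2\le\zeta_{\gF}^\prime$, a Bernstein-plus-union-bound control of the centered cross terms, and a union bound over $\Pi^m$ with $|\Pi^m|\le|\gF^m|^K$ producing the factor $K$ in the logarithm. The one step you should spell out more carefully is the absorption of the cross term: Bernstein gives a bound in $\|f'-f\|_{2,\mu}$ (a population norm), while the quadratic you want to cancel against is $\|f-h\|_{2,\gD^m}^2$ (an empirical norm), so you need an additional uniform concentration step transferring $\|\cdot\|_{2,\mu}^2$ to $\|\cdot\|_{2,\gD^m}^2$ up to $\widetilde{\gO}\bigl(\log(|\gF^m|/\delta)/(n(1-\gamma)^2)\bigr)$ additive slack, together with $\|f^\star-f\|^2\le 2\|f^\star-h\|^2+2\|f-h\|^2$ and a sufficiently small AM--GM constant so that the resulting $\|f-h\|_{2,\gD^m}^2$ contribution does not exactly cancel the left-hand side. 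Two harmless cosmetic points: the sign of the cross term in your first display is flipped (immaterial since you only use its absolute value), and your final bound $\sqrt{\zeta_{\gF}^\prime+\varepsilon}+\widetilde{\gO}(\cdot)$ is in fact slightly stronger than the stated one, which carries an extra $\sqrt{\zeta_{\gF}^\prime}$.
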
 

\begin{lemma} 
\label{lem:d6}
For any conditional policy $\pi :\gS \times \gA^{m-1} \xrightarrow{} \Delta(\gA)$,
\begin{align*}
    Q_\bpi^{1:m}(s_0, \bpi^{1:m}) \ge \min_{f \in \gF^m} \left( f(s_0, \bpi^{1:m}_k) + \lambda \gE^{1:m}(f, \bpi_k) \right) - \frac{\sqrt{\zeta_{\gF}}}{1-\gamma} - \lambda \varepsilon_r.
\end{align*}
\end{lemma}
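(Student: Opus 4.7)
The plan is to mirror the proof of Lemma~\ref{lem:c7} but, since under general function approximation $Q_\bpi^{1:m}$ may no longer sit exactly in $\gF^m$, replace it by the best in-class approximate realizer
$$
f_\bpi^{1:m} \;\triangleq\; \argmin_{f \in \gF^m}\; \sup_{\text{admissible } \nu}\; \|f - \gT_\bpi^{1:m} f\|_{2,\nu}^2.
$$
The extra $\sqrt{\zeta_\gF}/(1-\gamma)$ slack in the statement will come exactly from controlling the pointwise gap between $f_\bpi^{1:m}$ and $Q_\bpi^{1:m}$ via a simulation-lemma style telescoping.

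First I would invoke Lemma~\ref{lem:d1} to obtain, with probability at least $1-\delta$, the Bellman-error bound $\gE^{1:m}(f_\bpi^{1:m}, \bpi) \le \varepsilon_r$. Since $f_\bpi^{1:m} \in \gF^m$ is a feasible candidate for the pessimistic minimization, this immediately yields
$$
\min_{f \in \gF^m}\left( f(s_0, \bpi^{1:m}) + \lambda \gE^{1:m}(f, \bpi) \right) \;\le\; f_\bpi^{1:m}(s_0, \bpi^{1:m}) + \lambda\,\varepsilon_r.
$$
This is the analogue of the first step in Lemma~\ref{lem:c7}; the only difference is that we evaluate the right-hand side at $f_\bpi^{1:m}$ rather than at $Q_\bpi^{1:m}$, at the cost of having to convert the former back into the latter.

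Next I would control $f_\bpi^{1:m}(s_0, \bpi^{1:m}) - Q_\bpi^{1:m}(s_0, \bpi^{1:m})$ by exploiting that $Q_\bpi^{1:m}$ is the fixed point of $\gT_\bpi^{1:m}$. Writing $g = f_\bpi^{1:m} - Q_\bpi^{1:m}$ and $\Delta = f_\bpi^{1:m} - \gT_\bpi^{1:m} f_\bpi^{1:m}$, the identity
$$
g \;=\; \Delta \;+\; \gamma\, P_\bpi^{1:m}\, g
$$
unrolls to
$$
g(s_0, \bpi^{1:m}) \;=\; \sum_{t=0}^\infty \gamma^t\, \E_{(s_t, \rva_t^{1:m}) \sim d_t^\bpi}\bigl[\Delta(s_t, \rva_t^{1:m})\bigr],
$$
where $d_t^\bpi$ is the $t$-step state-action distribution generated by rolling $\bpi$ from $s_0$ (marginalizing the complement agents as required by the multi-agent Bellman operator). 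Each such $d_t^\bpi$ is admissible, so by Jensen/Cauchy--Schwartz and the realizability assumption $\E_{d_t^\bpi}|\Delta| \le \|\Delta\|_{2, d_t^\bpi} \le \sqrt{\zeta_\gF}$. Summing the geometric series gives $|g(s_0, \bpi^{1:m})| \le \sqrt{\zeta_\gF}/(1-\gamma)$.

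Combining the two displays yields
$$
\min_{f \in \gF^m}\left( f(s_0, \bpi^{1:m}) + \lambda \gE^{1:m}(f, \bpi) \right) \;\le\; Q_\bpi^{1:m}(s_0, \bpi^{1:m}) + \frac{\sqrt{\zeta_\gF}}{1-\gamma} + \lambda\,\varepsilon_r,
$$
which rearranges to the claim. The main conceptual point (and the only place where this proof differs substantively from Lemma~\ref{lem:c7}) is verifying that the rollout distributions $d_t^\bpi$ qualify as admissible distributions in the sense demanded by the realizability assumption; this should follow directly from how ``admissible'' is defined in the multi-agent formulation, since the distributions are exactly the ones induced by forward simulation of $\bpi$ against the true dynamics $\gP$. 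No other step is delicate: the Bellman-error concentration is already delivered by Lemma~\ref{lem:d1}, and the telescoping is the standard simulation argument.
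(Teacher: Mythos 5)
Your proposal is correct and follows essentially the same route as the paper: both define $f_\bpi^{1:m}$ as the best in-class approximate realizer, use its feasibility in the pessimistic minimization together with Lemma~\ref{lem:d1} to absorb the $\lambda\varepsilon_r$ term, and then charge the gap $f_\bpi^{1:m}(s_0,\bpi^{1:m}) - Q_\bpi^{1:m}(s_0,\bpi^{1:m})$ to realizability via the simulation-lemma bound $\|f_\bpi^{1:m}-\gT_\bpi^{1:m}f_\bpi^{1:m}\|_{2,d_\bpi}/(1-\gamma)\le\sqrt{\zeta_\gF}/(1-\gamma)$. Your explicit unrolling of the telescoping identity is simply a more detailed writing of the one-line bound the paper asserts, and your admissibility check is exactly how the paper implicitly justifies applying the realizability supremum to the rollout occupancy.
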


\begin{proof} For any conditional policy $\bpi^{1:m}$, and $\forall m \in \gN$. Let $f_\bpi^{1:m} = \argmin_f \sup_{\nu} \|f - \gT_{\bpi}^{1:m} f\|_{2, \nu}^2$ for any admissible $\nu$
\begin{align*}
    J(\bpi) &= Q_\bpi^{1:m}(s_0, \bpi^{1:m})\\
    &= Q_\bpi^{1:m}(s_0, \bpi^{1:m}) - \left( f_\bpi^{1:m}(s_0, \bpi^{1:m}) + \lambda \gE^{1:m}(Q_\bpi^{1:m}, \bpi) \right) + \left( f_\bpi^{1:m}(s_0, \bpi^{1:m}) + \lambda \gE^{1:m}(f_\bpi^{1:m}, \bpi) \right)\\
    &\ge \min_{f \in \gF^m} \left( f(s_0, \bpi^{1:m}) + \lambda \gE^{1:m}(f, \bpi) \right) - \frac{\sqrt{\zeta_{\gF}}}{1-\gamma} - \lambda \varepsilon_r,
\end{align*}
where in the last line we use Lemma~\ref{lem:c1} and the realizability assumption
\begin{align*}
    f_\bpi^{1:m} - Q_\bpi^{1:m} \le \frac{\|f_\bpi^{1:m} - \gT_{\bpi}^{1:m} f_\bpi^{1:m}\|_{2, d_\bpi} }{1-\gamma} \le \frac{\sqrt{\zeta_{\gF}} }{1-\gamma}.
\end{align*}
\end{proof}

Equipped with these useful lemmas, we are prepared to proceed to the main theorem for the general function approximation setting
\begin{theorem}
\label{thm:3}
Recall the definition of $\varepsilon_r$~\eqref{eq:er}, for the output policy $\bar{\bpi}$ attained by Algorithm~\ref{alg:4} in a fully cooperative Markov game, set $\eta = (1-\gamma)\sqrt{\frac{\log{|\gA|}}{2K}}$ and $\lambda = (1-\gamma)^{-1}\varepsilon_r^{-\frac{2}{3}}$. After $K$ iterations, w.p. at least $1-\delta$ we have
\begin{align*}
    J(\bpi_*) - J(\bar{\bpi}) \le \gO \left( \frac{N}{(1-\gamma)^2} \sqrt{\frac{\log{|\gA|}}{K}}
    + \frac{\gC^{d_{\bpi_*}}_\mu}{1-\gamma} \cdot \left( \frac{1}{1-\gamma} \sqrt[3]{\frac{K \log{\frac{|\gF|}{n}} }{n}}+  \sqrt{\zeta_\gF + \zeta_\gF^\prime} + \sqrt[3]{\zeta_\gF} \right)  \right)
\end{align*}
\end{theorem}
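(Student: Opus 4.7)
The plan is to mirror the proof of Theorem~\ref{thm:2}, replacing the linear-specific concentration lemmas (Lemmas~\ref{lem:c1}--\ref{lem:c3}) with their general-function-approximation counterparts (Lemmas~\ref{lem:d1}, \ref{lem:d2}, and Lemma~\ref{lem:d6}), and carefully tracking the new expressivity slacks $\zeta_\gF$ and $\zeta_\gF'$.

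First, I would construct for each $k$ an auxiliary Markov game $\gM_k$ whose multi-agent value function equals $f_k^m$ by setting its reward to $R_k = (I-\gamma\gP)f_k$, exactly as in Lemma~\ref{lem:c3}. Lemma~\ref{lem:d6} then yields $J(\bpi_k) \geq J_{\gM_k}(\bpi_k) - \sqrt{\zeta_\gF}/(1-\gamma) - \lambda \varepsilon_r$, which produces the decomposition
\[
J(\bpi_*) - J(\bar{\bpi}) \leq \underbrace{\tfrac{1}{K}\sum_k \bigl(J_{\gM_k}(\bpi_*) - J_{\gM_k}(\bpi_k)\bigr)}_{\mathrm{I}} + \underbrace{\tfrac{1}{K}\sum_k \bigl(J(\bpi_*) - J_{\gM_k}(\bpi_*)\bigr)}_{\mathrm{II}} + \frac{\sqrt{\zeta_\gF}}{1-\gamma} + \lambda \varepsilon_r.
\]
Term $\mathrm{I}$ is the optimization error on the auxiliary games. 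I would apply the multi-agent performance-difference lemma (Lemma~\ref{lem:perf}) to $\gM_k$ to reduce it to a sum of inner products of the form $\langle \bpi_*^m - \bpi_k^m, f_k^m\rangle$, and then invoke the mirror-descent regret bound (Lemmas~\ref{lem:c4}--\ref{lem:c6}), which only uses the exponentiated-gradient update rule and the uniform bound $|f_k^m| \leq 1/(1-\gamma)$, so it carries over verbatim. With $\eta = (1-\gamma)\sqrt{\log|\gA|/(2K)}$ this gives $\mathrm{I} \leq \frac{2N}{(1-\gamma)^2}\sqrt{2\log|\gA|/K}$.

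For Term $\mathrm{II}$, I would apply Lemma~1 of~\citet{xie2020q} together with the concentrability coefficient to bound it by $\frac{\gC^{d_{\bpi_*}}_\mu}{1-\gamma} \|f_k - \gT_{\bpi_k} f_k\|_{2, \gD}$, and then invoke Lemma~\ref{lem:d2} with $\varepsilon = \gE^{1:m}(f_k, \bpi_k)$. To control $\gE^{1:m}(f_k,\bpi_k)$, the pessimism optimality of $f_k$ together with Lemma~\ref{lem:d1} and the realizability slack from Lemma~\ref{lem:d6} give
\[
\gE(f_k,\bpi_k) \;\leq\; \varepsilon_r \;+\; \frac{1 + \sqrt{\zeta_\gF}}{\lambda(1-\gamma)},
\]
and substituting into Lemma~\ref{lem:d2} produces contributions of order $\frac{1}{1-\gamma}\sqrt{K\log(|\gF|/\delta)/n}$, $\sqrt{\zeta_\gF'}$, and $\sqrt{\zeta_\gF' + \varepsilon_r + 1/(\lambda(1-\gamma))}$, each multiplied by $\gC^{d_{\bpi_*}}_\mu/(1-\gamma)$.

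Finally I would collect all four contributions and choose $\lambda$ to balance the pessimism bias $\lambda\varepsilon_r$ against the $\sqrt{1/(\lambda(1-\gamma))}$ term coming from $\mathrm{II}$; solving $\lambda\varepsilon_r \asymp \sqrt{1/(\lambda(1-\gamma))}\cdot \gC^{d_{\bpi_*}}_\mu/(1-\gamma)$ yields the stated $\lambda = (1-\gamma)^{-1}\varepsilon_r^{-2/3}$ and produces the $\sqrt[3]{K\log(|\gF|/\delta)/n}$ rate and the $\sqrt[3]{\zeta_\gF}$ residual. The main obstacle is the careful bookkeeping of the expressivity slacks: unlike the linear case where $\zeta_\gF=\zeta_\gF'=0$, here both the pessimism inequality and the Bellman-error bound leak $\zeta_\gF$/$\zeta_\gF'$ terms, and these must be combined into the $\sqrt{\zeta_\gF+\zeta_\gF'}$ and $\sqrt[3]{\zeta_\gF}$ factors in the statement without inflating the $n^{-1/3}$ statistical rate when $\zeta_\gF,\zeta_\gF'$ vanish.
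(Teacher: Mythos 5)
Your proposal matches the paper's proof essentially step for step: the same auxiliary game $\gM_k$ with reward $R_k=(I-\gamma\gP)f_k$, the same three-way decomposition via Lemma~\ref{lem:d6}, the same reuse of the mirror-descent regret bound (Lemma~\ref{lem:c6}) for Term I, and the same concentrability-plus-Bellman-error argument with Lemma~\ref{lem:d2} and the pessimism inequality for Term II, followed by the same balancing of $\lambda\varepsilon_r$ against $\sqrt{1/(\lambda(1-\gamma))}$. If anything, your bookkeeping of the extra $\sqrt{\zeta_\gF}$ leaked into the bound on $\gE(f_k,\bpi_k)$ (from comparing against the best in-class approximator $f_{\bpi_k}$ rather than $Q_{\bpi_k}$ itself) is slightly more careful than the paper's, but it does not change the route or the final rate.
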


\begin{proof}
Use Lemma~\ref{lem:d6}, at the $k$-th iteration we have
\begin{align*}
    J(\bpi_k) &\ge \min_{f \in \gF^m} \left( f(s_0, \bpi_k^{1:m}) + \lambda \gE^{1:m}(f, \bpi_k) \right) - \frac{\sqrt{\zeta_{\gF}}}{1-\gamma} -\lambda \varepsilon_r\\
    &\ge f_k^m(s_0, \bpi_k^{1:m}) - \frac{\sqrt{\zeta_{\gF}}}{1-\gamma}- \lambda \varepsilon_r\\
    &= J_{\gM_k}(\bpi_k) - \frac{\sqrt{\zeta_{\gF}}}{1-\gamma} -\lambda \varepsilon_r
\end{align*}

Analogous to Appendix~\ref{sec:pf-pess}, we have
\begin{align*}
    J(\bpi_*) - J(\bar{\bpi}) &= \frac{1}{K} \sum_{k=1}^K (J(\bpi_*) - J(\bpi_k))\\
    &\le \frac{1}{K} \sum_{k=1}^K (J(\bpi_*) - J_{\gM_k}(\bpi_k)) + \frac{\sqrt{\zeta_{\gF}}}{1-\gamma}+ \lambda \varepsilon_r\\
    &\le \underbrace{\frac{1}{K} \sum_{k=1}^K (J_{\gM_k}(\bpi_*) - J_{\gM_k}(\bpi_k))}_{\textnormal{\uppercase\expandafter{\romannumeral1}}} + \underbrace{\frac{1}{K} \sum_{k=1}^K (J(\bpi_*) - J_{\gM_k}(\bpi_*))}_{\textnormal{\uppercase\expandafter{\romannumeral2}}} + \frac{\sqrt{\zeta_{\gF}}}{1-\gamma} + \lambda \varepsilon_r
\end{align*}

\paragraph{Term \uppercase\expandafter{\romannumeral1}.} The analysis for the optimization term is the same as that in Appendix~\ref{sec:pf-pess}. If $\eta = (1-\gamma) \sqrt{\frac{\log{|\gA|}}{2 K}}$, Lemma~\ref{lem:c6} gives
\begin{align*}
    \textnormal{Term \uppercase\expandafter{\romannumeral1}} \le \frac{2N}{(1-\gamma)^2} \sqrt{\frac{2\log{|\gA|}}{K}}.
\end{align*}

\paragraph{Term \uppercase\expandafter{\romannumeral2}.}Similar with Appendix~\ref{sec:pf-pess}

\begin{align*}
J(\bpi_*) - J_{\gM_k}(\bpi_*)&=
    Q_{\bpi_*}(s, \bpi_*) -J_{\gM_k}(\bpi_*)\\
    &\le \frac{\gC^{d_{\bpi_*}}_\mu}{1-\gamma} \left\|f_k - \gT_{\bpi_k}f_k\right\|_{2, \gD}
\end{align*}
which is no greater than 
$$
\frac{\gC^{d_{\bpi_*}}_\mu}{1-\gamma} \left(\gO \left( \frac{1}{1-\gamma} \sqrt{ \frac{K\log{ \frac{|\gF^m|}{\delta} }}{n}} \right) + \sqrt{\zeta_{\gF}^\prime }+ \sqrt{\zeta_{\gF}^\prime + \varepsilon_r} + \sqrt{\frac{1}{(1-\gamma)\lambda}} \right),
$$
because we have $\gE(f_k, \bpi_k) \le \varepsilon_r + \frac{1}{(1-\gamma)\lambda}$:
\begin{align*}
    f_k(s_0, \bpi_k) + \lambda \gE(f_k, \bpi_k) &= \min_{f} \left( f(s_0, \bpi_k) + \lambda \gE(f, \bpi_k) \right)\\
    &\le Q_{\bpi_k}(s_0, \bpi_k) + \lambda \gE(Q_{\bpi_k},\bpi_k), \quad \text{Lemma~\ref{lem:c2}}\\
    &\le \frac{1}{1-\gamma} + \lambda \varepsilon_r.
\end{align*}
The proof is completed by substituting $\lambda$ to the original expression.
\end{proof}

\section{Simulation}
\label{sec:sim}
In this section, we perform a toy example to showcase the superiority of our sequential update structure over naive independent policy gradient updates. We consider von Neumann’s ratio game, a simple stochastic game also used by ~\citep{daskalakis2020independent}.

In the game, there are only two agents, and each has an action space of 2. There is only one state, i.e., no state transition exists. The immediate reward for selecting actions $(a,b)$
 is $R(a,b)$
 the probability of stopping in each round is $S(a,b)$
. The value function $V(\pi_x, \pi_y)$
 for this game is given by 
 $$
 V = \frac{\pi_x^\top R \pi_y}{\pi_x^\top S \pi_y}.
 $$

The two agents cooperate with each other to maximize the value function. From now on, we shall use $(x,1-x)$ and $(y,1-y)$ 
 to represent both policies. We set parameters as
\begin{equation*}
R = 
\begin{bmatrix}
1 & 0.5 \\
-0.5 & 1
\end{bmatrix},\quad \text{and}\quad
R = 
\begin{bmatrix}
1 & 1 \\
0.1 & 0.1
\end{bmatrix}.
\end{equation*}

Consider the value function as a function of variables $x$ and $y$
, then the stationary point is near $(x,y) = (0.5,0)$
, at which the value function is $V \approx 0.46$
, which is smaller than the global maximum $V =1$.

To solve the problem, we adopt two algorithms: (1) our algorithm with sequential updates and (2) the independent (policy gradient) learning method. In both algorithms, we use softmax parametrization for policies. In particular, our log-linear parameterization (\ref{eq:loglinear}) becomes softmax parametrization by setting $\phi$
 as one-hot representations, i.e., for action $a$
, $\phi(a)^\top \theta =\theta_a$ where $\theta_a$
 represents the specific entry of $\theta$
 that corresponds to $a$ 
.

\paragraph{Results} We test our algorithm with sequential gradient updates and the independent learning method in different settings. The results are shown in Figure~\ref{fig:1}. \footnote{Implementation can be found at \url{https://github.com/zhaoyl18/ratio_game}.}
Below we discuss the empirical findings from this simulation study. 

First, we find that the independent policy optimization method often struggles around the stationary point (see (a)-(c)) that is not necessarily globally optimal. In this example, a big stepsize would help alleviate the issue (e.g., in (c), independent PG escapes the stationary point after 3000 iterations). However, the convergence to global optima is still slower than our method. We note that noise might help to escape the stationary points~\citep{jin2017escape}. Our findings align with the theoretical comparisons we made aforementioned. Even if the independent PG method is not trapped by a stationary point, from (d)-(f), our algorithm consistently outperforms in terms of maximizing the value function.

In this toy example, the optimization landscape is quite simple: only two agents participate, each with only two possible actions. No state transition is allowed, which is the main difficulty in performing sequential decision-making. We point out that, globally, there is only one stationary point. In such an effortless case, our algorithm consistently outperforms independent PG in mainly two folds. First, our algorithm does not struggle like independent PG when the current policy is near the stationary point where gradient information is few. Second, our algorithm demonstrates a fast convergence rate to the global maximum value function. Therefore, when the  complexity of the environment increases significantly, for instance: (1) multiple heterogeneous agents interact with each other and the unknown environment, (2) complex function approximators are adopted (e.g., deep neural networks), utilizing independent PG would be more problematic in terms of locating the global optimum because there will be more stationary points in the landscape.
  
Our findings showcase the necessity and usefulness of the conditional dependency structure, which helps us find a policy that enjoys a globally sub-optimal value function.

\begin{figure}[ht]
\begin{center}
\centerline{\includegraphics[width=0.9\columnwidth]{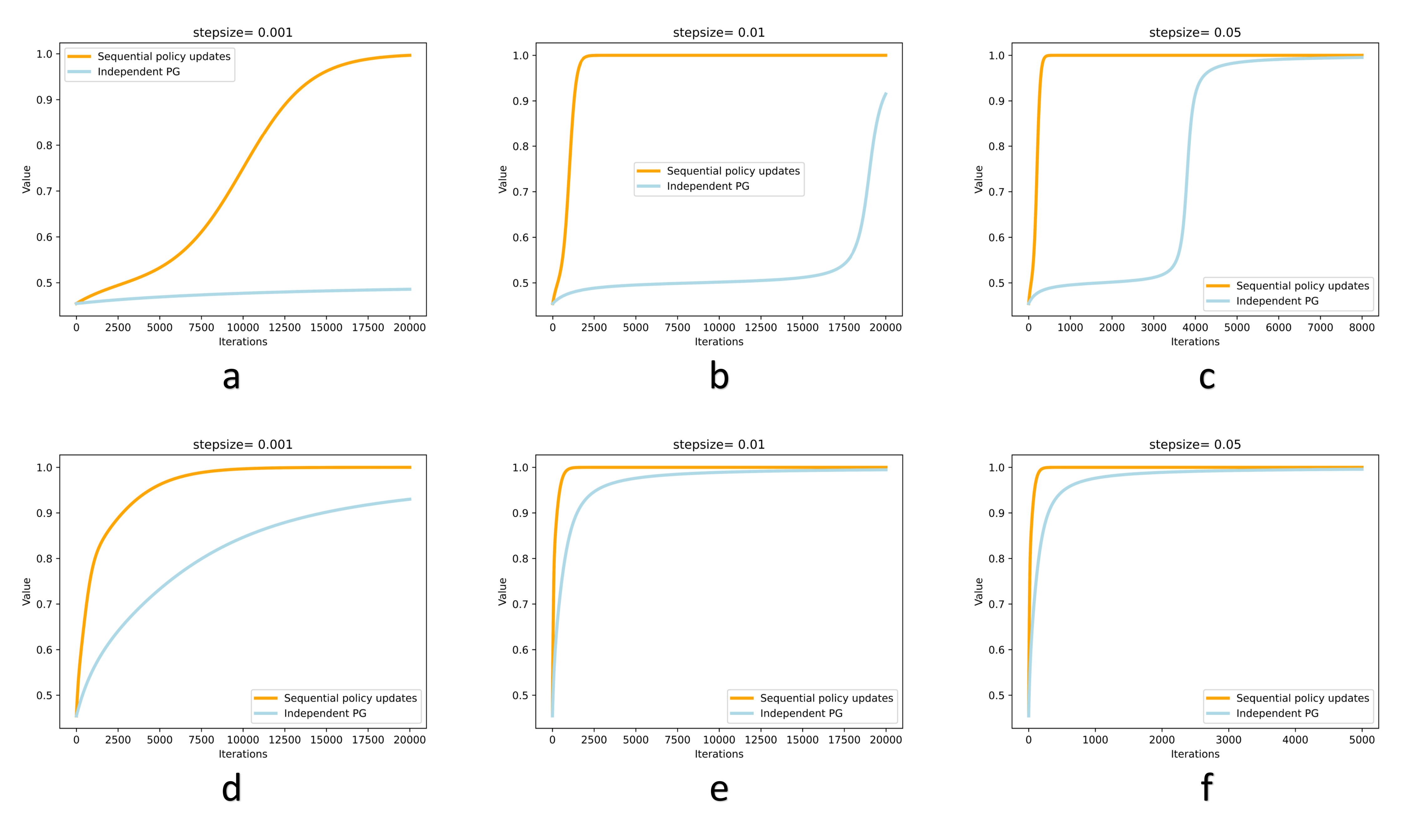}}
\caption{Performances of our algorithm and the independent learning method. In (a)-(c): policies are initialized close to the stationary point. In (d)-(f): both policies are uniformly initialized.
}
\label{fig:1}
\end{center}
\end{figure}

\end{document}